
\documentclass[twoside]{article}

\usepackage[accepted]{aistats2025}

\usepackage{microtype}
\usepackage{graphicx}
\usepackage{subfigure}
\usepackage{booktabs} 
\usepackage{comment}
\usepackage[T1]{fontenc}
\usepackage{lmodern} 

\usepackage{hyperref}




\usepackage{amsmath}
\usepackage{amssymb}
\usepackage{mathtools}
\usepackage{amsthm}

\usepackage{algorithm}
\usepackage{algorithmic}

\usepackage[capitalize,noabbrev]{cleveref}

\theoremstyle{plain}
\newtheorem{theorem}{Theorem}[section]
\newtheorem{proposition}[theorem]{Proposition}
\newtheorem{lemma}[theorem]{Lemma}

\theoremstyle{definition}
\newtheorem{definition}[theorem]{Definition}
\newtheorem{assumption}[theorem]{Assumption}
\theoremstyle{remark}

\usepackage[textsize=tiny]{todonotes}

\DeclareMathOperator*{\argmax}{arg\,max}
\DeclareMathOperator*{\argmin}{arg\,min}

\renewcommand{\eqref}[1]{(\ref{#1})}

\usepackage{enumitem}
\usepackage{bm}
\usepackage{bbm}
\usepackage{multirow}
\usepackage{natbib}
\allowdisplaybreaks

\newcommand{\annot}[2]{\underbrace{#1}_{\text{#2}}}

\usepackage{xcolor}
\newcount\Comments  
\Comments=1 
\newcommand{\kibitz}[2]{\ifnum\Comments=1\textcolor{#1}{#2}\fi}

\newcommand{\colorred}{ \color{black} }
\newcommand{\colorblack}{ \color{black} }

\usepackage{url}




%

\runningtitle{LC-Tsallis-INF: Generalized Best-of-Both-Worlds Linear Contextual Bandits}

%
\runningauthor{Masahiro Kato and Shinji Ito}

\begin{document}

\twocolumn[

\aistatstitle{LC-Tsallis-INF:\\
Generalized Best-of-Both-Worlds Linear Contextual Bandits}

\aistatsauthor{ Masahiro Kato$\,{}^{1, 2}$ \And Shinji Ito$\,{}^{2, 3}$ }

\aistatsaddress{${}^{1}\,$Mizuho-DL Financial Technology \And  ${}^{2}\,$The University of Tokyo \And ${}^{3}\,$RIKEN AIP} \aistatsaddress{masahiro-kato@fintec.co.jp \And shinji@mist.i.u-tokyo.ac.jp}]

\begin{abstract}
We investigate the \emph{linear contextual bandit problem} with independent and identically distributed (i.i.d.) contexts. In this problem, we aim to develop a \emph{Best-of-Both-Worlds} (BoBW) algorithm with regret upper bounds in both stochastic and adversarial regimes. We develop an algorithm based on \emph{Follow-The-Regularized-Leader} (FTRL) with Tsallis entropy, referred to as the $\alpha$-\emph{Linear-Contextual (LC)-Tsallis-INF}. We show that its regret is at most $O(\log(T))$ in the stochastic regime under the assumption that the suboptimality gap is uniformly bounded from below, and at most $O(\sqrt{T})$ in the adversarial regime. Furthermore, our regret analysis is extended to more general regimes characterized by the \emph{margin condition} with a parameter $\beta \in (1, \infty]$, which imposes a milder assumption on the suboptimality gap. We show that the proposed algorithm achieves $O\left(\log(T)^{\frac{1+\beta}{2+\beta}}T^{\frac{1}{2+\beta}}\right)$ regret under the margin condition.
\end{abstract}

\section{Introduction}
This study investigates the linear contextual bandit problem \citep{AbeLong1999}, which has been extensively studied across various domains, including sequential treatment allocation \citep{Tewari2017} and online advertising \citep{Li2010}. Depending on the domain, the behavior of loss and reward varies, and environment models have been developed to represent these differing behaviors. The primary models include the stochastic regime and the adversarial regime.

In the stochastic regime, losses are generated from a fixed distribution \citep{AbbasiYadkori2011}, whereas in the adversarial regime, losses are chosen to maximize the regret. For the stochastic regime, \citet{AbbasiYadkori2011} proposes OFUL. For the adversarial regime, \citet{Neu2020} proposes RealLinExp3. It is crucial to determine beforehand whether the environment is closer to a stochastic or adversarial setting to select an appropriate algorithm. However, it is often challenging to make this determination.

To address this difficulty, algorithms that perform well in both regimes, known as best-of-both-worlds (BoBW) algorithms~\citep{BubeckSlivkins2012}, have garnered attention. BoBW algorithms play a crucial role in real-world applications. For example, in online advertising, an advertisement algorithm is typically designed for a stochastic environment and deployed accordingly. Modeling the environment as stochastic corresponds to assuming that customer responses follow a fixed stationary probability distribution. However, customer responses can be influenced by the advertisements themselves. Some customers may respond less favorably after repeatedly seeing the same advertisements, thereby violating the stochastic assumption. In such cases, a purely stochastic algorithm may no longer perform well. Deploying an algorithm designed for an adversarial environment could mitigate this issue, as it is robust to worst-case scenarios. However, adversarial algorithms tend to be overly conservative and pessimistic, leading to suboptimal performance in environments that are neither fully stochastic nor fully adversarial. In such intermediate cases, a best-of-both-worlds algorithm proves effective, offering robustness while maintaining competitive performance.

For linear contextual bandits, \citet{kuroki2023bestofbothworlds} develops BoBW algorithms based on a black-box approach by \citet{Dann2023}, achieving $O\left(\log(T)\right)$ regret in stochastic regimes. However, these algorithms may not be practical due to implementation difficulties, as mentioned by \citet{kuroki2023bestofbothworlds}. To bypass this issue, \citet{kuroki2023bestofbothworlds} and \citet{kato2023bestofbothworlds} propose algorithms based on follow-the-regularized-leader (FTRL) with Shannon entropy regularization. These FTRL-based approaches are simple and computationally tractable, but their regrets remain suboptimal, at least $\log^2(T)$ for the number of rounds $T$.

Thus, for linear contextual bandits, the only BoBW algorithm with $O(\log(T))$ regret bounds relies on a complex black-box approach, and whether a simpler FTRL-based algorithm can achieve the same goal has remained an open question. This study resolves this question by providing an FTRL-based BoBW algorithm with regret bounds tightly dependent on $T$. To construct such algorithms, we employ FTRL with Tsallis entropy regularization. Furthermore, we consider a generalized setting for linear contextual bandits by introducing the margin condition \citep{Li2021regret}. For our algorithm, we derive regret upper bounds and show that in the stochastic regime, it is of order $O(\log(T))$.

\subsection{Problem setting}
We suppose that there are $T$ rounds and $K$ arms, denoted by $[T] \coloneqq \{1,2,\dots,T\}$ and $[K] \coloneqq \{1, 2, \dots, K\}$, respectively. In each round $t \in [T]$, each arm $a \in [K]$ is associated with a loss $\ell_t(a, X_t)$ given a context $X_t \in \mathcal{X}$, where $\mathcal{X}$ is an arbitrary context space. Here, $X_t$ is generated from a distribution $\mathcal{G}$ and induces a $d$-dimensional arm-dependent \emph{feature} vector $\phi(a, X_t) \in \mathcal{Z} \subset \mathbb{R}^d$ for each arm $a \in [K]$ (see Assumption~\ref{asm:contextual_dist}), where $\phi(a, \cdot)$ is a mapping from an arm-independent context to an arm-dependent feature.
The loss $\ell(a, x)$
follows linear models with $\phi(a, x)$.
\begin{assumption}[Linear models]
\label{asm:linearmodels}
For all $a \in [K]$ and any $x \in \mathcal{X}$, the following holds:
    \begin{align*}
    \ell_t(a, x) = \big\langle \phi(a, x), \theta_t \big\rangle + \varepsilon_{t}(a),
\end{align*}
where $\theta_t \in \Theta$ is a $d$-dimensional parameter with a space $\Theta \subset \mathbb{R}^d$, $\varepsilon_{t}(a)$ satisfies $\mathbb{E}\left[\varepsilon_{t}(a) \mid X_t, \mathcal{F}_{t-1}\right] = 0$, and $\mathcal{F}_{t-1} = \big\{\big(X_s, A_s, \ell_s(A_s, X_s)\big)\big\}^{t-1}_{s=1}$. 
\end{assumption}

The decision-maker follows a \emph{policy} in arm selection. Let $\Pi$ be the set of all possible policies $\pi:\mathcal{X}\to \mathcal{P}_K \coloneqq \Big\{ u = (u_1\ u_2\ \dots\ u_K)^\top \in [0, 1]^K \mid \sum^K_{k=1}u_k = 1 \Big\}$ with its $a$-th element $\pi(a \mid x)$. Then, we consider sequential decision-making with the following steps in each $t \in [T]$:
\begin{enumerate}[topsep=0pt, itemsep=0pt, partopsep=0pt, leftmargin=*]
\item Nature decides $\theta_t$ based on $\mathcal{F}_{t-1}$.
\item The decision-maker observes a context $X_t \in \mathcal{X}$, generated from a fixed distribution $\mathcal{G}$.
\item Based on the observed context $X_t$, the decision-maker selects a policy $\pi_t(X_t) \in \mathcal{P}_K$.
\item The decision-maker chooses an action $A_t \in [K]$ with probability $\pi_t(A_t \mid X_t)$.
\item The decision-maker incurs a loss $\ell_t(A_t, X_t)$.
\end{enumerate}
The goal of the decision-maker is to select actions to minimize the total loss $\sum^T_{t=1}\ell_t(A_t, X_t)$.

The performance is measured via the regret, defined as
\begin{align*}
    R_T &\coloneqq \max_{\rho \colon \mathcal{X} \rightarrow [K]} \mathbb{E}\left[\sum^T_{t=1} \big\{ \ell_t(A_t, X_t) - \ell_t(\rho(X_t), X_t) \big\} \right],
\end{align*}
where the expectation is taken over the randomness of policies and the contexts, $\{X_t\}_{t \in [T]}$, and losses, $\{\ell_t(\cdot, X_t)\}_{t \in [T]}$. The optimal policy $\rho^*_T$ is defined as
\begin{align*}
    \rho^*_T = \argmin_{\rho \colon \mathcal{X} \rightarrow [K]} \mathbb{E}\left[\sum^T_{t=1} \big\langle \phi(\rho(X_t), X_t), \theta_t \big\rangle \right].
\end{align*}
Thus, the regret becomes 
\[R_T = \mathbb{E}\left[\sum^T_{t=1} \big\langle \phi(A_t, X_t) - \phi(\rho^*_T(X_t), X_t), \theta_t \big\rangle \right].\]

\subsection{Contributions}
This study aims to propose a practical BoBW algorithm with $O(\log(T))$ regret in the stochastic regime without using the black-box framework. We focus on the use of Tsallis entropy instead of Shannon entropy. Section~\ref{sec:algorithm} presents our algorithm, the $\alpha$-Linear-Contextual-Tsallis-INF ($\alpha$-LC-Tsallis-INF), which is the FTRL with $\alpha$-Tsallis entropy regularization. Our algorithm's regret satisfies $O(\log(T))$. Additionally, compared to the algorithm using the black-box approach by \citet{kuroki2023bestofbothworlds}, our algorithm is easier to implement and has a tighter dependence regarding $K$.

A key component of our algorithm is the inverse of the covariance matrix of an arm-dependent feature, multiplied by a policy $\pi_t$ in each round $t \in [T]$, denoted as $\Sigma_t$ (Section~\ref{sec:assumption}). Throughout this study, we assume that $\Sigma_t$ can be computed exactly. When it is approximated using finite samples, the Matrix Geometric Resampling (MGR) method proposed by \citet{Neu2020} may be employed. Although we do not derive the regret bound when using the MGR, we provide a brief overview in Section~\ref{sec:algorithm}.

Our study also addresses general stochastic regimes depending on a margin condition with various parameters $\beta$. \citet{kato2023bestofbothworlds} and \citet{kuroki2023bestofbothworlds} only discuss a case where there exists a positive lower bound for the suboptimality gap, which is considered restrictive in linear contextual bandits. In contrast, we consider a milder assumption on the suboptimality gap, called a margin condition, and characterize the problem difficulty using a parameter $\beta \in (0, +\infty]$.

We then derive regret upper bounds of the $1/2$-LC-Tsallis-INF ($\alpha$-LC-Tsallis-INF with $\alpha = 1/2$) for each regime defined in Section~\ref{sec:regretanalysis}. In an adversarial regime, the regret is given as 
$ O\left(\sqrt{d \sqrt{K} T} + \frac{1}{\lambda}\left(\sqrt{K} + \frac{\sqrt{\sqrt{K}T}}{\sqrt{d}}\right) \right),$ where  $\lambda$ is a parameter that characterizes $\Sigma^{-1}$ (Assumption~\ref{asm:contextual_dist2}) and a generalization of $\lambda_{\min}$ in the existing studies. In a stochastic regime with a margin condition, the regret is given as 
$O\left(
    \left(\frac{1 + \beta}{\beta\Delta_*}\right)^{\frac{\beta}{2+\beta}}\left(L d \sqrt{K} \log(T)\right)^{\frac{1+\beta}{2+\beta}}T^{\frac{1}{2+\beta}}
    +
    \kappa
\right)$, where $\kappa =
    O\left(
        \sqrt{\frac{KL}{\lambda}}
        +
        \frac{\sqrt{K}L}{\lambda d} \log (T)
    \right)$, and $L$ is the lowest probability of contexts (see Assumption~\ref{asm:contextual_dist_stochastic}). In an adversarial regime with a self-bounding constraint, the regret is given as 
    $O\left(\frac{L d \sqrt{K}}{\Delta_*}\log(T) + \sqrt{\frac{C  L\sqrt{K}d }{\Delta_*}\log_+\left(\frac{\Delta_*T}{CL} \right)} + \kappa
\right)$, where $\log_+(x) \coloneqq \max\{1, \log(x)\}$ and recall that $\beta$ is a parameter of a margin condition.

The advantage of using Tsallis entropy lies in its improved regret bounds, particularly in the dependency on $T$ in the stochastic regime. When using Shannon entropy, the regret does not achieve the optimal $\log(T)$ dependency. For example, \citet{kuroki2023bestofbothworlds} develops a best-of-both-worlds algorithm with a regret bound of $\log(T)^2$ using Shannon entropy.

We also discuss the difference between settings with arm-dependent and arm-independent features in Section~\ref{sec:regret_transform}. While the former is the setting employed in this study and most existing studies in linear contextual bandits, the latter is a setting used in \citet{Neu2020}, \citet{kuroki2023bestofbothworlds}, and \citet{kato2023bestofbothworlds}. We point out that if we can derive a regret in either of the settings, we can obtain a regret in the other. We highlight that a tighter regret can be achieved if we first derive a regret under a setting with arm-dependent features and then transform it to a setting with arm-independent features. In Table~\ref{table:comparison}, we report the upper bound for RealLinExp3 from \citet{Neu2020} after applying the regret transformation described in Section~\ref{sec:regret_transform}. By Theorem~2 of \citet{Neu2020}, their original regret bound is
$O\left(\sqrt{TK \max\{d,\frac{\log T}{\lambda}\}\log K}\right)$. 
Because their setting is based on arm-independent features (differing from our setup with arm-dependent features), we applied the transformation. By applying Theorem~\ref{thm:reg_trans}, the parameter \(d\) is replaced with \(dK\). Consequently, we obtain the regret upper bound shown in Table~\ref{table:comparison}.

In summary, our contributions lie in the proposition of an FTRL-based algorithm whose upper bound tightly depends on $T$, and the analysis under a margin condition and the arm-dependent feature setting. In Table~\ref{table:comparison}, we compare our algorithm's regrets with existing ones.

\begin{table*}[t]
 \caption{Comparison of regrets. We categorize regrets based on regimes. We place a $\checkmark$ in the ``$\sqrt{C}$'' column if the regret depends on the corruption level $C \geq 0$ in the presence of adversarial corruption. We transformed the regret from the arm-independent setting to the arm-dependent setting using Theorem~\ref{thm:reg_trans}. For the BoBW with Shannon entropy regularization, we also present a regret derived for the arm-dependent setting by ourselves.
}
    \label{table:comparison}
    \centering
    \scalebox{0.85}[0.85]{
    \begin{tabular}{|c|c|c|c|}
    \hline
    & \multicolumn{2}{|c|}{Regret} &   \multirow{2}{*}{$\sqrt{C}$} \\
    \cline{2-3}
        & Stochastic  & Adversarial &  \\
    \hline
        &  \multirow{2}{*}{$O\left(\frac{1}{\Delta_*}L d \sqrt{K}\log(T) + \kappa
    \right)$}& \multirow{5}{*}{$O\left(\sqrt{d \sqrt{K} T} + \kappa \right)$} & \multirow{2}{*}{\checkmark} \\
         $1/2$-LC-Tsallis-INF & & & \\
        Section~\ref{sec:eachregime} & $O\left(
        \left(\frac{1 + \beta}{\beta\Delta_*}\right)^{\frac{\beta}{2+\beta}}\left(L d \sqrt{K} \log(T)\right)^{\frac{1+\beta}{2+\beta}}T^{\frac{1}{2+\beta}}
        +
        \kappa
    \right)$ &  & - \\
        & under a margin condition with  $\beta \in (0, \infty]$. & & \\

        & \multicolumn{2}{|c|}{$\kappa =
        O\left(
            \sqrt{\frac{KL}{\lambda}}
            +
            \frac{\sqrt{K}L}{\lambda d} \log (T)
        \right)$} & \\
        \hline
        FTRL with Shannon entropy &  \multirow{2}{*}{$O\left(\frac{1}{\Delta_{*}}K
        \left(dK + \frac{\log(T)}{\lambda_{\min}}\right)\log\left(KT\right)\log(T)\right)$} & \multirow{2}{*}{$O\left(\sqrt{\log(T)\log(K)TK\left(dK + \frac{\log(T)}{\lambda_{\min}}\right)}\right)$} & \checkmark \\
        \citep{kuroki2023bestofbothworlds} &  & &  \\
        \citep{kato2023bestofbothworlds} & \multicolumn{2}{|c|}{under transformation using our Thm~4.5.} &  \\
        \cline{2-3}
        Section~\ref{sec:regret_transform} & \multirow{2}{*}{$O\left(\frac{1}{\Delta_{*}}\left(d + \frac{1}{ \lambda}\right)\log(KT)\log(T)\right)$} & \multirow{2}{*}{${O}\left(\sqrt{ \log(T)T\left(d + \frac{1}{ \lambda}\right)}\log(KT)\right)$} &  \\
        and Appendix~\ref{sec:bobw_linearftrl} &  & &  \\
         & \multicolumn{2}{|c|}{from Appendix~D.} &  \\
     \hline
     BoBW reduction & \multirow{3}{*}{$O\left(\frac{1}{\Delta_*}K^2\left(dK + \frac{1}   {\lambda_{\min}}\right)^2\log(K)\log(T)\right)$} & \multirow{3}{*}{$O\left(\sqrt{TK^2\left(dK + \frac{1}{\lambda_{\min}}\right)^2\log(K)}\right)$} & \multirow{3}{*}{\checkmark} \\
     of the RealLinExp3 & & & \\
     \citep[Prop.~8,][]{kuroki2023bestofbothworlds} & & & \\
     \hline
      MWU-LC & \multirow{2}{*}{$O\left(\frac{1}{\Delta_*}d^2K^4\log^2(dK^2T)\log^3(T)\right)$} & \multirow{2}{*}{$O\left(dK^2\sqrt{\Lambda^*}\log(T)\right)$} & \multirow{2}{*}{\checkmark} \\
     \citep[Thm.~1,][]{kuroki2023bestofbothworlds} & & & \\
     \hline
     Logdet-FTRL & \multirow{2}{*}{-} & \multirow{2}{*}{$O\left(d^2\sqrt{T}\log(T)\right)$}  & \multirow{2}{*}{-} \\
     ($\mathrm{poly}(K, d, T)$ in computation) & & &  \\
    Linear EXP4 & \multirow{2}{*}{-} & \multirow{2}{*}{$O\left(d\sqrt{T\log(T)}\right)$}  & \multirow{2}{*}{-} \\
    ($T^d$ in computation) & & & \\
    \citep{liu2023bypassing} & & & \\
 \hline
        OFUL  & $O\left(d\log(1/\delta) / \Delta_* \right)$  with probability $1-\delta$ & - & -  \\
     \citep{AbbasiYadkori2011} & $O\left(d\log(T) / \Delta_* \right)$ with probability $1 - 1/T$ & - & -  \\    
    \hline
    RealLinExp3 & \multirow{2}{*}{-} & \multirow{2}{*}{$O\left(\sqrt{TK\max\left\{dK, \frac{\log(T)}{\lambda_{\min}}\right\}\log(K)}\right)$}  & \multirow{2}{*}{-} \\
    \citep{Neu2020} & & & \\
    \hline
    \end{tabular}
    }
    \vspace{-5mm}
\end{table*}

\subsection{Related work}
For linear contextual bandits, \citet{kuroki2023bestofbothworlds} and \citet{kato2023bestofbothworlds} propose FTRL-based BoBW algorithms. \citet{kuroki2023bestofbothworlds} develop several BoBW algorithms using the black-box framework approach by \citet{Dann2023}. Among them, they propose a BoBW reduction of RealEXP3 by \citet{Neu2020}, showing a regret of order $O(\log(T))$ in the stochastic regime. They also present another algorithm by combining the black-box framework with the continuous exponential weights algorithm investigated by \citet{olkhovskaya2023first}, which has a regret of $O(\log^5(T))$ in the stochastic regime and $O(\log(T) d K \sqrt{\Lambda^*})$ in the adversarial regime, where $\Lambda^*$ denotes the cumulative second moment of the losses incurred by the algorithm.

While the black-box framework provides a tight $O(\log(T))$ regret regarding $T$, limitations have been reported. For example, \citet{kuroki2023bestofbothworlds} mention that ``it may not be practical to implement.'' Compared to the black-box framework, ``FTRL with Shannon entropy regularization is a much more practical algorithm'' \citep{kuroki2023bestofbothworlds}. \citet{kuroki2023bestofbothworlds} and \citet{kato2023bestofbothworlds} show that the algorithm has a regret of $O\left(\frac{K}{\Delta_{*}}\left(dK + \frac{\log(T)}{\lambda_{\min}}\right)\log\left(KT\right)\log(T)\right)$ in the linear contextual adversarial regime with a self-bounding constraint, where $\lambda_{\min}$ is the smallest eigenvalue of a feature covariance matrix induced by an exploratory policy (see Assumption~\ref{asm:contextual_dist2}). Note that we transformed their original regrets derived for the arm-independent feature setting to the ones for the arm-dependent feature setting by using our Theorem~\ref{thm:reg_trans} in Section~\ref{sec:regret_transform}. They also show a regret of $O\left(\sqrt{TK \left(dK + \frac{\log(T)}{\lambda_{\min}}\right)} \log(T) \log(K)\right)$ in the adversarial regime. Note that their FTRL with Shannon entropy considers a case where $\Sigma_t^{-1}$, the inverse of the feature covariance matrix, is approximated by finite samples, and they assume that features are arm-independent (see Section~\ref{sec:regret_transform}). Here, the approximation error of $\Sigma_t^{-1}$ affects the regrets. If $\Sigma_t^{-1}$ can be exactly computed, we can remove $\log(T)$ in $\log(T)/\lambda_{\min}$, and the regret in the stochastic regime becomes of order $O(\log^2(T))$ regarding $T$ instead of $\log^3(T)$.

BoBW algorithms based on FTRL with Shannon entropy usually incur regret of at least $O(\log^2(T))$~\citep{ito2022nearly,tsuchiya2023best,kong2023best}. As a potential solution to this issue, using Tsallis entropy instead of Shannon entropy has been shown to be effective, achieving success in multi-armed bandits~\citep{Zimmert2019,Masoudian2021,jin2023improved}, combinatorial semi-bandits~\citep{zimmert2019beating}, dueling bandits~\citep{saha2022versatile}, and graph bandits~\citep{rouyer2022near}. However, for linear contextual bandits, even when restricted to the adversarial setting, algorithms using Tsallis entropy are not known, and the application and analysis of this approach are not straightforward.





\textbf{Notation} Let $\langle \cdot, \cdot \rangle$ denote inner products in Euclidean space and let $\|\cdot\|_2$ denote the $\ell_2$ norm.

\section{Preliminaries}
\label{sec:prob}
This section provides assumptions on our problem. This study considers a setting where the arm-dependent feature is given, and the parameters are arm-independent.

\subsection{Boundedness of variables}
\label{sec:assumption}
We first assume the boundedness of the variables.
\begin{assumption}[Bounded loss]
    \label{asm:bounded_random}
    We assume that $| \left\langle z, \theta \right\rangle| \le 1$ for all $z \in \mathcal{Z}$ and $\theta \in \Theta$.
    In addition,
    we assume that $|\ell_t( a, x )| =
    \left|\left\langle \phi(a, x), \theta_t \right\rangle + \varepsilon_{t}(a)\right| \le 1$
    holds for all $a \in [K]$, $x \in \mathcal{X}$ and $t \in [T]$.
\end{assumption}
The parameter $\theta_t$ is generated in different ways according to the data-generating process (DGP). We define regimes of the DGP in Section~\ref{sec:dgp}. 

\subsection{Assumptions on contexts}

\colorred

\paragraph{Contexts and feature map}
We first assume that an i.i.d.~random variable $X_t$ is generated from a fixed distribution $\mathcal{G}$ over the support $\mathcal{X}$. We refer to $X_t$ as contexts. Features are obtained from $X_t$ through a feature map that transforms a context $x \in \mathcal{X}$ into a feature $\phi(a, x) \in \mathcal{Z} \subseteq \mathbb{R}^d$, where $\mathcal{Z}$ is a $d$-dimensional feature space. We assume that $\phi$ is known.

For any $p = \{ p_x \}_{x \in \mathcal{X}}$, a set of conditional distributions $p_x \in \mathcal{P}_K$ given $x \in \mathcal{X}$, we define a matrix $\Sigma(p) \in \mathbb{R}^{d \times d}$ by $\Sigma(p) \coloneq \mathbb{E}_{X_0 \sim \mathcal{G}, a \sim p_{x}} \Big[\phi(a, X_0)\left(\phi(a, X_0)\right)^\top\Big]$, where $X_0$ is a sample from $\mathcal{G}$. This matrix plays an important role in constructing unbiased estimators of loss and in the analysis of regret. We assume that $\Sigma^{-1}(p)$ can be computed exactly. Below, we summarize the assumptions about the contexts.

\begin{assumption}[Contextual distribution]
\label{asm:contextual_dist}
(i) Context $X_t \in \mathcal{X}$ is an i.i.d. random variable from a contextual distribution $\mathcal{G}$ with support $\mathcal{X}$. (ii) There is a \emph{known} feature map $\phi:[K]\times\mathcal{X}\to \mathcal{Z}$, which maps $x\in\mathcal{X}$ to feature $\phi(a, x) \in \mathcal{Z}\subset \mathbb{R}^d$. (iii) For any $p\in\Pi$, $\Sigma^{-1}(p)$ is exactly computable.
\end{assumption}

Conditions (i) and (ii) are standard assumptions in adversarial linear contextual bandits \citep{Neu2020} and are essential for the algorithm design. 

Under condition~(i), the regret analysis can be reduced to evaluating the pointwise regret for each \(x\). Similar analytical techniques have also been used in previous studies, such as \citet{Neu2020}.

Although condition (iii) may appear restrictive, it is not necessarily stronger than assumptions in existing studies, such as \citet{Neu2020}, \citet{kuroki2023bestofbothworlds}, and \citet{kato2023bestofbothworlds}, which approximate $\Sigma^{-1}(p)$ using finite samples from a known $\mathcal{G}$ and the MGR algorithm. For instance, if the support is finite and its probability mass function ($g(x)$ in Assumption~\ref{asm:contextual_dist_stochastic}) is known, we can compute the exact $\Sigma^{-1}(p)$. If there are infinite samples from $\mathcal{G}$, then we can approximate $g(x)$ using these samples.

Moreover, the MGR method is computationally inefficient in the BoBW algorithms proposed by \citet{kuroki2023bestofbothworlds} and \citet{kato2023bestofbothworlds}. While the MGR is computationally efficient in adversarial linear contextual bandits \citep{Neu2020}, the computational costs are not bounded in those BoBW algorithms. We discuss this point further in Section~\ref{sec:approx_sigma}.

\paragraph{Contexts in a stochastic regime}
In a stochastic regime with adversarial corruption, we additionally make the following assumption on contexts.
\begin{assumption}[Finite support]
\label{asm:contextual_dist_stochastic}
(i) The context space $\mathcal{X}$ is 
a finite set of size $S = |\mathcal{X}| \in \mathbb{N}$.
(ii) There exists a constant $L \geq S$ such that $g(x) := \mathbb{P}_{X_0 \sim \mathcal{G}}\big(X_0 = x\big) \geq 1/L$ holds for all $x \in \mathcal{X}$.
\end{assumption}
This assumption is reasonable in some applications. For instance, in advertising, arm selections are often based on contexts provided by personal information, such as age or gender. When considering regret minimization for a specific group (e.g., people in the US), we can often obtain its distribution from publicly available data, such as demographics. Moreover, when contexts are continuous, practitioners often discretize them and select arms based on groups for ease of implementation. 

\colorblack

\paragraph{Exploration policy}
We use an \textit{exploration policy} that satisfies Assumption~\ref{asm:contextual_dist2} in the algorithm design.
\begin{assumption}[Exploration policy]
\label{asm:contextual_dist2}
    There exists a constant $\lambda > 0$ and a set $e^* = \{ e^*_x \}_{x \in \mathcal{X}}$ of distributions over $[K]$
    such that $z^\top \Sigma(e^*)^{-1} z \le 1 / \lambda$ for all $z \in \mathcal{Z}$.
\end{assumption}
The value of $\lambda$ in this assumption can be interpreted as a generalized or a relaxed version of the smallest eigenvalue $\lambda_{\min}$ of $\Sigma(e^*)$,
which is used in assumptions of existing studies \citep{Neu2020,kuroki2023bestofbothworlds}.
In fact,
if $\| z \|_2 \le 1$ holds for all $z \in \mathcal{Z}$,
$\max_{z \in \mathcal{Z}} z^{\top} \left( \Sigma(e^*) \right)^{-1} z$ is at most $1 / \lambda_{\min}$.
In addition,
if we choose $e^*_x$ to be a G-optimal design (see, e.g., Chapter~21 by \citet{lattimore2020bandit}) for $\mathcal{Z}_x = \{ \phi(a, x) \in \mathcal{Z} \mid a \in [K] \}$,
we may set $\lambda = 1 / (dL)$.
In fact,
it holds that $
    \phi(a,x)^\top \left( \Sigma( e^* ) \right)^{-1} \phi(a,x)
    =
    \phi(a,x)^\top \left( \sum_{x' \in \mathcal{X}} g(x') \Sigma( e^*_{x'} ) \right)^{-1} \phi(a,x)
    \le
    \phi(a,x)^\top$ $\left(  g(x) \Sigma( e^*_{x} ) \right)^{-1}\phi(a,x)
    \le
    d / g(x)
    \le
    dL$ for all $a \in [K]$ and $x \in \mathcal{X}$. As discussed in Note~6 of Section~21.2 in \citet{lattimore2020bandit}, we can define the G-optimal design even when $\mathcal{Z}$ does not span the entire space. For example, we can use Moore-Penrose pseudoinverses in place of inverse matrices.


\subsection{DGP: stochastic and adversarial regimes}
\label{sec:dgp}
We define three regimes for the DGP of $\big\{\theta_t\big\}_{t\in[T]}$: an adversarial regime, a stochastic regime with a margin condition, and an adversarial regime with a self-bounding constraint.

\paragraph{(1) Adversarial regime}
First, we introduce the adversarial regime, where we do not make any assumptions about the behavior of the nature. In this case, it is known that the lower bound is $O(\sqrt{T})$ when there is no context \citep{Auer2002}. 

Note that adversarial linear contextual bandits can be defined in various ways. For example, some studies consider adversarial contexts and fixed losses \citep{Chu2011contextual, AbbasiYadkori2011}. Meanwhile, other studies address adversarial contexts and adversarial losses \citep{KanadeSteinke2014, HazanKoren2016}. This study focuses exclusively on contextual bandits with i.i.d. contexts and adversarial losses, as studied by \citet{Rakhlin2016} and \citet{Syrgkanis2016improved}. This study follows the setting of \citet{Neu2020}, \citet{kato2023bestofbothworlds}, and \citet{kuroki2023bestofbothworlds}.

\paragraph{(2) Stochastic regime with a margin condition}
Next, we define a \emph{margin condition}, which is often assumed in linear contextual bandits to characterize the difficulty of the problem instance \citep{Li2021regret}. Since this section focuses on a stochastic regime, regression coefficients are fixed, and we denote them by $\theta_0$; that is, $\theta_1 =\cdots = \theta_t = \theta_0$. 
Note that under a stochastic regime, it holds that
\begin{align}
    R_T  &= \mathbb{E}\left[\sum^T_{t=1}\sum_{a\in[K]}\Delta(a \mid X_t) \pi_t(a\mid X_t)\right]
    \label{eq:reg_lower}\\
    &\geq \mathbb{E}\left[\sum^T_{t=1}\Delta(X_t) \big(1 - \pi_t(\rho^*_T(X_t)\mid X_t)\big)\right],\nonumber
\end{align}
where 
\begin{align*}
  &\Delta(a\mid x) = \Big\langle \phi(a, x) - \phi(\rho^*_T(x), x), \theta_{0}\Big\rangle\ \ \ \mathrm{and}\\
  &\Delta(x) = \min_{a\neq \rho^*_T(x)}\Bigg\{\Big\langle \phi(a, x), \theta_0\Big\rangle - \Big\langle \phi(\rho^*_T(x), x), \theta_0\Big\rangle\Bigg\}.  
\end{align*}

Based on this suboptimality gap $\Delta(x)$, we define a stochastic regime with a margin condition.
\begin{definition}[Stochastic regime with a margin condition]
\label{def:margin}
Consider the stochastic regime with fixed regression coefficient $\theta_0 \in \Theta$, where for all $t\in[T]$, all $a\in[K]$ and any $x\in\mathcal{X}$, the loss is generated as $\ell_t(a, x) = \big\langle \phi(a, x), \theta_0 \big\rangle + \varepsilon_{t}(a)$. Furthermore, there exists a universal constant $\Delta_* > 0$ and $\beta \in (0, +\infty]$, such that for any $h\in\left[0, \Delta_*\right]$, it holds that
\[\mathbb{P}\left(\Delta(X_t) \leq h\right)\leq \frac{1}{2}\left(\frac{h}{\Delta_*}\right)^\beta.\]
\end{definition}

Note that when $\beta = \infty$, $\Delta(x) \geq \Delta_*$ holds for any $x\in\mathcal{X}$ and $t\in[T]$. A margin condition in \citet{Li2021regret} restricts the range of $h$ as $[D\sqrt{\log(d)/T}, \Delta_*]$ for some universal constant $D > 0$, and they derive matching lower and upper bounds.

\colorred

The margin condition is one of the minimal assumptions for achieving $o(\sqrt{T})$-regret in linear contextual bandits. For instance, the lower bound in Theorem~1 by \citet{Li2021regret} suggests that the achievable regret bounds can be characterized by the margin parameter. If the condition does not hold, the regret becomes $\Omega(\sqrt{T})$. 

\colorblack

\paragraph{(3) Adversarial regime with a self-bounding constraint}
This section defines a regime with an adversarial corruption. Let $\Delta_* > 0$ be a universal constant, as used in Definition~\ref{def:margin}. If $\Delta(x) \geq \Delta_*$ holds for any $x\in\mathcal{X}$ in a stochastic regime, then the regret can be lower bounded as 
$R_T \geq \Delta_*\mathbb{E}\left[\sum^T_{t=1}\left( 1 - \pi_t\left(\rho^*_T(X_t) \right) \right)\right]$.
Based on this intuition,
we define an \emph{adversarial regime with a self-bounding constraint} below, as well as introduced by \citet{kuroki2023bestofbothworlds} and \citet{kato2023bestofbothworlds}. 
\begin{definition}[Adversarial regime with a self-bounding constraint]
\label{def:bounding}
We say that the DGP is in a $(\Delta_*, C, T)$-adversarial regime with a self-bounding constraint for some $\Delta_*, C > 0$ if the regret $R_T$ is lower bounded as
\begin{align*}
R_T \geq \mathbb{E}\left[\sum^T_{t=1}\sum_{a\neq \rho^*_T(X_t)}\Delta_t(a\mid X_t) \pi_t(a\mid X_t)\right] - C,
\end{align*}
where $\Delta_t(a\mid x) = \Big\langle \phi(a, x), \theta_t\Big\rangle - \Big\langle \phi(\rho^*_T(x), x), \theta_t\Big\rangle$, and for 
any $x\in\mathcal{X}$ and $a\in[K] \setminus \{ \rho^*_T(x) \}$,
$\Delta_t(a\mid x) \geq \Delta_*$ holds. 
\end{definition}

An adversarial regime with a self-bounding constraint encompasses several important settings. See examples in \citet{kato2023bestofbothworlds}. Note that in an adversarial regime, there may exist $a \in [K]$ and  $x\in\mathcal{X}$ such that $\Delta_t(x) \coloneqq \Big\langle \phi(a, x), \theta_t\Big\rangle - \Big\langle \phi(\rho^*_T(x), x), \theta_t\Big\rangle < 0$. This is because $\Delta(a\mid x)$ and $\Delta_{s}(a\mid x)$ can take a different value for some $a \in [K]$ and $x\in\mathcal{X}$ if $t \neq s$.

\section{Algorithm: \texorpdfstring{$\alpha$}{TEXT}-LC-Tsallis-INF}
\label{sec:algorithm}
This section provides an algorithm for linear contextual bandits with adversarial corruption. We refer to our algorithm as the $\alpha$-Linear-Contextual (LC)-Tsallis-INF because it modifies the Tsallis-INF \citep{Zimmert2019}, an FTRL-based algorithm with Tsallis entropy regularization without contexts. Here, $\alpha \in (0, 1)$ is a parameter of Tsallis-entropy. While our algorithm is defined for general $\alpha$, we show the regret bounds only for $\alpha = 1/2$. The pseudo-code is shown in Algorithm~\ref{alg}.


When selecting arm $a$ with probability $\pi_t(a\mid x)$ given $x\in\mathcal{X}$, we denote $\Sigma((\pi_t(x))_{x\in\mathcal{X}})$ by $\Sigma_t$, equal to
\[\Sigma_t = \mathbb{E}_{X_t\sim \mathcal{G}}\left[\sum_{a\in[K]}\pi_t(a\mid X_t)\phi(a, X_t)\left(\phi(a, X_t)\right)^\top\right].\] 

\subsection{Regression coefficient estimator}
We define an estimator of the parameter $\theta_t$ as
\begin{align}
\label{eq:defthetahat}
    \widehat{\theta}_{t} \coloneqq \widehat{\theta}_{t}\left( \Sigma^{-1}_t\right) \coloneqq \Sigma^{-1}_t\phi\big(A_t, X_{t}\big)\ell_{t}(A_t, X_t).
\end{align}
This estimator is unbiased for $\theta_t$ since it holds that
\begin{align*}
    \mathbb{E}\left[\widehat{\theta}_{t}(\Sigma^{-1}_t) \mid \mathcal{F}_{t-1}\right] 
    &= \Sigma^{-1}_t\Sigma_t\theta_t = \theta_t. 
\end{align*}
Here, recall that $\Sigma_t$ is defined as an expectation taken w.r.t. $X_0$ and $a$ given $\pi_t$ (as defined in Section 2.2).

In each round $t$, using $\widehat{\theta}_{t}$, we construct an estimator of the loss $\ell_t(x)$ as $\widehat{\ell}_t(x) \coloneqq \left(\langle{\widehat{\theta}_{t}, \phi\big(a, X_{t}\big)}\right)_{a\in [K]}$. This estimator is unbiased for the true loss $\ell_t(x)$. 

\subsection{FTRL with Tsallis entropy regularization}
By using the estimator $\widehat{\theta}_{t}\left(\Sigma^{-1}_t\right)$, we define the $\alpha$-LC-Tsallis-INF. In each round $t \in [T]$, the $\alpha$-LC-Tsallis-INF selects an arm with the following policy:
\begin{align}
    \label{eq:policy}
    \pi_t(X_t) \coloneqq (1-\gamma_t)q_t(X_t) + \gamma_te^*(X_t),
\end{align}
where recall that $e^*$ is an exploration policy defined in Assumption~\ref{asm:contextual_dist2}, and
\begin{align}
    \label{eq:q_opt}
    &q_t(x) \coloneqq \argmin_{q\in \mathcal{P}_K}\left\{ \sum^{t-1}_{s=1}\left\langle \widehat{\ell}_s(x), q\right\rangle + \frac{1}{\eta_{t-1}}\psi\big(q\big)\right\},
    \\
    &\psi(q(x)) \coloneqq \frac{1}{\alpha}\left( 1 - \sum_{a\in[K]}q(a\mid x)^\alpha\right),\nonumber\\
    &
    \widetilde{\eta}_t \coloneqq \frac{K^{1/4}}{\sqrt{ d t} },
    \quad \gamma_t \coloneqq \frac{128 L \eta_t^2}{\lambda} \left( \le \frac{1}{2} \right),\nonumber\\
    &\eta_t \coloneqq \begin{cases}
       \min \left\{ \widetilde{\eta}_t,~\frac{1}{16}\sqrt{\frac{\lambda}{L}} \right\} & \mathrm{if\ \ Assumption~\ref{asm:contextual_dist_stochastic}\ \  holds}\\
       \min \left\{ \widetilde{\eta}_t,~\frac{\lambda}{16} \right\} & \mathrm{otherwise}
    \end{cases}.\nonumber
\end{align}

Here, the regularizer $\psi(q(x))$ is referred to as the $\alpha$-Tsallis entropy \citep{tsallis1988possible}. 

\subsection{Approximation of \texorpdfstring{$\Sigma^{-1}_t$}{TEXT}}
\label{sec:approx_sigma}
\colorred
If $\Sigma^{-1}_t$ is not exactly computable, we can use the MGR to approximate it using finite samples from $\mathcal{G}$ (Section~\ref{sec:assumption}). The MGR is a computationally efficient algorithm for approximating $\Sigma^{-1}(p)$ in the adversarial regime, as demonstrated by \citet{Neu2020}. 

However, in the BoBW setting, the MGR incurs a significantly higher computational cost \citep{kuroki2023bestofbothworlds,kato2023bestofbothworlds}. Approximating $\Sigma^{-1}(p)$ using the MGR from finite samples drawn from $\mathcal{G}$ results in a computational cost of order $O\left(d^2 K M^2_t\right)$, where $M^2_t$ is a term that increases at least linearly with $t$. While the computational burden of the MGR is manageable in the adversarial regime \citep{Neu2020}, it becomes substantial when aiming for a BoBW regret guarantee. For example, if we use the MGR in the FTRL with Shannon entropy regularization proposed by \citet{kuroki2023bestofbothworlds} and \citet{kato2023bestofbothworlds}, $M_t$ can be $\Omega(\sqrt{t})$, which results in a computational cost of at least order $t$ with respect to $t$.

In contrast, if contexts have finite support with bounded probabilities (Assumption~\ref{asm:contextual_dist_stochastic}), it is possible to compute $\Sigma^{-1}(p)$ exactly with an acceptable computational cost, as assumed in this study. Specifically, when the probability is lower bounded by $L$, $\Sigma^{-1}(p)$ can be computed with a cost of order $O(d^2 K |\mathcal{S}|)$. 

Thus, if our primary interest lies in achieving BoBW guarantees, the MGR may not be the most computationally efficient choice. Compared to the MGR, our approach, which relies on Assumption~\ref{asm:contextual_dist_stochastic}, offers a more computationally feasible alternative.

\colorblack

\begin{algorithm}[tb]
    \caption{$\alpha$-LC-Tsallis-INF.}
    \label{alg}
    \begin{algorithmic}
    \STATE {\bfseries Parameter:} Learning rate $\eta_1, \eta_2,\dots, \eta_T > 0$. 
    \FOR{$t=1,\dots, T$}
    \STATE Observe $X_t$. 
    \STATE Draw $A_t \in [K]$ following the policy $\pi_t(X_t) \coloneqq (1-\gamma_t)q_t(X_t) + \gamma_te^*(X_t)$ defined in \eqref{eq:policy}.
    \STATE Observe the loss $\ell_t(A_t, X_t)$.
    \STATE Compute $\widehat{\theta}_t$. 
    \ENDFOR
\end{algorithmic}
\end{algorithm}

\section{Regret analysis for the \texorpdfstring{$1/2$}{TEXT}-LC-Tsallis-INF}
\label{sec:regretanalysis}
This section provides regrets of the $\alpha$-LC-Tsallis-INF with $\alpha = 1/2$ ($1/2$-LC-Tsallis-INF).

First, we show the following upper bound,
from which we derive upper bounds in adversarial and stochastic regimes.
We prove Theorem~\ref{thm:regret_bound} in Appendix~\ref{appdx:proof_main}. 
\begin{theorem}[General regret bounds]
    \label{thm:regret_bound}
    Consider the $1/2$-LC-Tsallis-INF. 
    Assumptions~\ref{asm:contextual_dist} and \ref{asm:contextual_dist2}--\ref{asm:bounded_random} hold. 
    Then, the regret satisfies 
    \begin{align*}
        &R_T = O\left(
            \mathbb{E}\left[\sum^T_{t=1}\frac{\sqrt{ d \sqrt{K} \omega_t}}{\sqrt{t}}\right]
            +
            \widetilde{\kappa}
        \right),\quad \mathrm{where}\ \widetilde{\kappa} \coloneqq \\
        &
        \begin{cases}
       O\left(
            \sqrt{\frac{KL}{\lambda}}
            +
            \frac{\sqrt{K}L\log (T)}{\lambda d} 
        \right) & \mathrm{if\ Assumption~\ref{asm:contextual_dist_stochastic}\ holds}\\
       O\left(\frac{1}{\lambda}\left(\sqrt{K}
        +
        \frac{\sqrt{\sqrt{K}T}}{\sqrt{d}}
    \right) \right) & \mathrm{otherwise}
    \end{cases},
    \end{align*}
and $\omega_t \in [0, 1]$ is given as
    \[
        \omega_t \coloneqq \max_{x\in\mathcal{X}}\min_{a\in[K]}\left\{ 1 - q_t(a\mid x) \right\}.\]
\end{theorem}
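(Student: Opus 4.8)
The plan is to follow the standard FTRL analysis template, specialized to the contextual setting via a pointwise (per-context) decomposition, and then to specialize the regularizer calculations to $\alpha = 1/2$ to extract the $\sqrt{d\sqrt{K}\omega_t/t}$ term. First I would write $R_T = \mathbb{E}\big[\sum_t \langle \phi(A_t,X_t) - \phi(\rho^*_T(X_t),X_t), \theta_t\rangle\big]$ and, using the tower property together with Assumption~\ref{asm:contextual_dist}(i) (i.i.d. contexts) and the unbiasedness of $\widehat{\theta}_t$, reduce to bounding, for each fixed $x$, the quantity $\mathbb{E}\big[\sum_t \langle \widehat{\ell}_t(x), \pi_t(x) - \delta_{\rho^*_T(x)}\rangle\big]$. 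The mixing with the exploration policy $e^*$ costs an extra $\sum_t \gamma_t$ term, which I would handle separately; on the $q_t$ part I apply the usual FTRL regret bound: the regret of $q_t$ against any fixed point is at most the "stability" (penalty) term $\sum_t \big(\text{something involving }\eta_t, \widehat{\ell}_t\big)$ plus the "diameter" term $\frac{1}{\eta_T}\psi(\cdot)$ coming from the change in learning rate.

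Next I would estimate the stability term. For the $1/2$-Tsallis entropy, the local norm induced by $\psi$ at $q$ has the property that the per-round FTRL penalty is of order $\eta_t \sum_a q_t(a\mid x)^{3/2} \widehat{\ell}_t(a,x)^2$ (this is the key Tsallis computation; it is where $\alpha=1/2$ matters, since $\psi'' (q) \propto q^{\alpha-2} = q^{-3/2}$). Taking conditional expectation of $\widehat{\ell}_t(a,x)^2 = (\phi(a,x)^\top \Sigma_t^{-1}\phi(A_t,X_t)\ell_t(A_t,X_t))^2$ given $\mathcal{F}_{t-1}$ and $X_t = x$, and using Assumption~\ref{asm:bounded_random} plus the trace identity $\mathbb{E}[\phi(A_t,X_t)^\top \Sigma_t^{-1}\phi(A_t,X_t)] = d$, one gets a bound of roughly $\eta_t \sum_a q_t(a\mid x)^{3/2}\cdot (\text{something that traces out to }d)$. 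By Hölder / Cauchy–Schwarz, $\sum_a q_t(a\mid x)^{3/2} \le \sqrt{K}$ in general, but more carefully $\sum_a q_t(a\mid x)^{3/2} = O\big(\sqrt{K\,\omega_t}\big)$ when one arm carries mass $\ge 1-\omega_t$ (this is exactly the refinement that produces the $\omega_t$ factor rather than a bare $\sqrt{K}$). Plugging $\eta_t = \widetilde\eta_t = K^{1/4}/\sqrt{dt}$ gives per-round cost $O\big(\sqrt{d}\,K^{1/4}/\sqrt{t}\cdot\sqrt{d}\cdot\ldots\big)$ which, after the dust settles, is $O\big(\sqrt{d\sqrt{K}\,\omega_t/t}\big)$. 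The diameter term $\frac{1}{\eta_T}\psi(q)$ with $\psi \le \frac{1}{\alpha}(1) = 2\sqrt{K}$ (for $\alpha=1/2$, $\psi(q) = 2(1-\sum_a q_a^{1/2}) \le 2\sqrt{K}$) and $1/\eta_T = O(\sqrt{dT}/K^{1/4})$ contributes $O(\sqrt{dT\sqrt{K}})$; this, together with the exploration cost $\sum_t \gamma_t = \sum_t \frac{128 L\eta_t^2}{\lambda}$, is exactly what I would collect into $\widetilde\kappa$ — splitting into the two cases by whether Assumption~\ref{asm:contextual_dist_stochastic} holds, since that controls which branch of $\eta_t$ (and hence which bound on $\sum_t \eta_t^2$) applies. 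The case $\eta_t = \min\{\widetilde\eta_t, \frac{1}{16}\sqrt{\lambda/L}\}$ gives $\sum_t\gamma_t = O(\sqrt{KL/\lambda} + \frac{\sqrt{K}L\log T}{\lambda d})$, and the other branch gives the $\frac{1}{\lambda}(\sqrt{K} + \sqrt{\sqrt{K}T/d})$ form.

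I expect the main obstacle to be two intertwined technical points. First, making the FTRL stability argument rigorous with a time-varying learning rate and a regularizer that is not strongly convex in a fixed norm: one needs the standard "decreasing $\eta_t$" trick (or an explicit bound on $\frac{1}{\eta_t}\psi(q_{t+1}) - \frac{1}{\eta_{t-1}}\psi(q_t)$ telescoping) plus a local-norm Bregman/Taylor estimate for the $1/2$-Tsallis potential — this requires controlling the Hessian $\nabla^2\psi$ along the segment between $q_t$ and $q_{t+1}$ and showing the loss estimates are small enough in the local norm that the second-order term dominates. Second, and more delicate, is honestly extracting the $\omega_t$ factor: one must show that the per-round FTRL cost at context $x$ scales with $\min_a\{1 - q_t(a\mid x)\}$, which amounts to a careful analysis of $\sum_a q_t(a\mid x)^{3/2}$ under the constraint that the leading arm's probability is close to $1$, and then propagating this through the expectation over $x$ via the $\max_{x}$ in the definition of $\omega_t$. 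The exploration/mixing bias (difference between $q_t$ and $\pi_t$) and the handling of $\gamma_t \le 1/2$ to ensure $\pi_t$ stays bounded away from the simplex boundary — so that $\Sigma_t \succeq \gamma_t \Sigma(e^*) \succ 0$ and $\Sigma_t^{-1}$ is well-behaved, giving the $1/\lambda$ factors in $\widetilde\kappa$ — is routine but must be tracked carefully to land the stated constants.
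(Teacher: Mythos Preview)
Your high-level template is correct, but the mechanism by which the $\sqrt{\omega_t}$ factor appears is misdescribed in both the stability and the penalty terms, and as written your argument would not yield the theorem.

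For the stability term, the factorization ``$\eta_t\bigl(\sum_a q_t(a\mid x)^{3/2}\bigr)\times(\text{traces to }d)$'' with $\sum_a q_t(a\mid x)^{3/2}=O(\sqrt{K\omega_t})$ fails: the leading arm contributes $q_t(a^\dagger\mid x)^{3/2}\approx 1$, so $\sum_a q_t(a\mid x)^{3/2}$ is bounded away from zero regardless of $\omega_t$. The paper's route is different. First it refines the local-norm bound so that the leading arm's weight is $\bigl(\min\{q_t(a^\dagger\mid x),\,1-q_t(a^\dagger\mid x)\}\bigr)^{3/2}$ rather than $q_t(a^\dagger\mid x)^{3/2}$ (this is Lemma~\ref{lem:upper_stability_sup2}, proved via a shift-of-loss argument in Lemma~\ref{lem:upper_stability_sup3}). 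With that refinement in hand, every weight satisfies $(\text{weight})\le \sqrt{\omega_t}\cdot q_t(a\mid x)$, so the stability term is bounded by $\eta_t\sqrt{\omega_t}\sum_a q_t(a\mid x)\widehat{\ell}_t(a,x)^2\le 2\eta_t\sqrt{\omega_t}\sum_a \pi_t(a\mid x)\widehat{\ell}_t(a,x)^2$; only now does the trace identity apply (it needs the $\pi_t$-weights to reproduce $\Sigma_t$), giving expected stability $O(d\,\eta_t\sqrt{\omega_t})$.

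For the penalty term, your ``diameter'' bound $\frac{1}{\eta_T}|\psi|\le O(\sqrt{d\sqrt{K}T})$ cannot go into $\widetilde\kappa$: in both cases $\widetilde\kappa$ is strictly smaller, so the theorem would become $R_T=O(\sqrt{d\sqrt{K}T})$ with no $\omega_t$ dependence, which is exactly what the statement avoids. The paper instead keeps the penalty in the per-round form $\bigl(\tfrac{1}{\eta_t}-\tfrac{1}{\eta_{t-1}}\bigr)(-H(q_t(x)))$ and uses $-H(q_t(x))=2\bigl(\sum_a\sqrt{q_t(a\mid x)}-1\bigr)\le 2\sqrt{(K-1)(1-q_t(a^\dagger\mid x))}\le 2\sqrt{K\omega_t}$, so that the penalty too is $O\bigl((\tfrac{1}{\eta_t}-\tfrac{1}{\eta_{t-1}})\sqrt{K\omega_t}\bigr)$ per round. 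After plugging in $\eta_t\approx K^{1/4}/\sqrt{dt}$, both stability and penalty contribute $O(\sqrt{d\sqrt{K}\omega_t/t})$; only $\sum_t\gamma_t$ and the constant from capping $\eta_t$ at round~1 go into $\widetilde\kappa$.
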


Using this general regret bound, we derive a corresponding bound under an adversarial regime in Theorem~\ref{thm:regret_bound2}. In Theorem~\ref{thm:regret_upper_margin}, we present a regret bound under a stochastic regime with a margin condition. Lastly, in Theorem~\ref{thm:regret_bound3}, we develop a regret bound under a stochastic regime that achieves an order of \(\log(T)\).

In these derivations, the term \(\omega_t\) plays a pivotal role, corresponding to an upper bound of the probabilities of choosing suboptimal arms. For the adversarial regime, the regret is derived without placing any restrictions on \(\omega_t\), whereas for the stochastic regime, we leverage specific properties of \(\omega_t\). In particular, we employ a self-bounding technique, which tightens the regret bound by exploiting the coherence between the regret and \(\omega_t\) in the stochastic regime.

\subsection{Regret upper bound in each regime}
\label{sec:eachregime}
Because $\omega_t \leq 1$, by replacing $\omega_t$ with $1$ in Theorem~\ref{thm:regret_bound}, we can directly obtain a regret upper bound in the adversarial regime in the following theorem. 

We show the proof in Appendix~\ref{sec:proof1} and \ref{sec:proof2}.
As well as \citet{kato2023bestofbothworlds} and \citet{kuroki2023bestofbothworlds}, we employ the self-bounding technique~\citep{Zimmert2019,wei2018more,Masoudian2021} and an entropy-adaptive update rule for learning rates, which have been proven effective in providing BoBW guarantees for online learning in feedback graph contexts~\citep{ito2022nearly}, multi-armed bandits \citep{jin2023improved}, partial monitoring~\citep{tsuchiya2023best}, linear bandits~\citep{kong2023best}, episodic Markov Decision Processes (MDPs)~\citep{dann2023best},
and sparse bandits~\citep{tsuchiya2023stability}.

\begin{theorem}[Regret upper bound in an adversarial regime]
    \label{thm:regret_bound2}
    Consider the $1/2$-LC-Tsallis-INF. Assume that the loss is generated under an adversarial regime. Suppose that Assumptions~\ref{asm:contextual_dist} and \ref{asm:contextual_dist2}--\ref{asm:bounded_random} hold. Then, the regret satisfies 
    \[R_T = O\left(\sqrt{d \sqrt{K} T} + \frac{1}{\lambda}\left(\sqrt{K}
        +
        \frac{\sqrt{\sqrt{K}T}}{\sqrt{d}}
    \right) \right).\]
\end{theorem}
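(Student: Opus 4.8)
The plan is to derive Theorem~\ref{thm:regret_bound2} as an immediate corollary of the general bound in Theorem~\ref{thm:regret_bound}. Since the adversarial regime makes no assumption on the behavior of $\{\theta_t\}_{t\in[T]}$, and in particular does not let us control the per-round suboptimality probability, the only usable upper bound on $\omega_t$ is the trivial one: $\omega_t = \max_{x\in\mathcal{X}}\min_{a\in[K]}\{1-q_t(a\mid x)\} \le 1$, which holds because $q_t(x)$ is a probability vector in $\mathcal{P}_K$, so its minimal coordinate is at most $1$ and hence $1-q_t(a\mid x)\le 1$ for the minimizing $a$. Substituting $\omega_t \le 1$ into the first term of the bound in Theorem~\ref{thm:regret_bound} gives
\begin{align*}
\mathbb{E}\!\left[\sum_{t=1}^T \frac{\sqrt{d\sqrt{K}\,\omega_t}}{\sqrt{t}}\right] \le \sqrt{d\sqrt{K}}\sum_{t=1}^T \frac{1}{\sqrt{t}} = O\!\left(\sqrt{d\sqrt{K}\,T}\right),
\end{align*}
using the standard estimate $\sum_{t=1}^T t^{-1/2} \le 2\sqrt{T}$.

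Next I would observe that, because the adversarial regime does not invoke Assumption~\ref{asm:contextual_dist_stochastic} (finite support with bounded probabilities), the relevant branch of $\widetilde\kappa$ is the ``otherwise'' case, namely $\widetilde\kappa = O\big(\tfrac{1}{\lambda}(\sqrt{K} + \sqrt{\sqrt{K}T}/\sqrt{d})\big)$. Adding the two contributions yields exactly
\[
R_T = O\!\left(\sqrt{d\sqrt{K}\,T} + \frac{1}{\lambda}\left(\sqrt{K} + \frac{\sqrt{\sqrt{K}T}}{\sqrt{d}}\right)\right),
\]
which is the claimed bound. One should double-check that the hypotheses line up: Theorem~\ref{thm:regret_bound} requires Assumptions~\ref{asm:contextual_dist} and \ref{asm:contextual_dist2}--\ref{asm:bounded_random}, which are precisely the assumptions postulated here, and it is stated for the $1/2$-LC-Tsallis-INF with the learning-rate schedule $\eta_t$ whose ``otherwise'' branch is the one active in the adversarial regime — consistent with the $\widetilde\kappa$ branch we selected.

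There is essentially no obstacle in this derivation; all the work is hidden in Theorem~\ref{thm:regret_bound}, whose proof (deferred to Appendix~\ref{appdx:proof_main}) is where the genuine difficulty lies — controlling the FTRL stability and penalty terms for Tsallis entropy in the contextual setting, bounding the variance of the estimator $\widehat\theta_t$ via $\Sigma_t^{-1}$ and the exploration mixing $\gamma_t$, and handling the context-dependence by a pointwise-in-$x$ analysis. The only points deserving a sentence of care in the present proof are: (i) justifying $\omega_t\le1$ from $q_t(x)\in\mathcal{P}_K$; (ii) correctly selecting the ``otherwise'' branch of $\widetilde\kappa$ because Assumption~\ref{asm:contextual_dist_stochastic} is not assumed in the adversarial regime; and (iii) the elementary summation $\sum_{t=1}^T t^{-1/2}=O(\sqrt T)$. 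I would present these three observations and then state that the bound follows by plugging into Theorem~\ref{thm:regret_bound}.
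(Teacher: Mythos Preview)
Your proposal is correct and matches the paper's own proof essentially verbatim: the paper simply notes $\omega_t \le 1$ and substitutes into Theorem~\ref{thm:regret_bound}, which is exactly what you do (your additional remarks about selecting the ``otherwise'' branch of $\widetilde\kappa$ and the summation $\sum_{t=1}^T t^{-1/2}=O(\sqrt T)$ are helpful elaborations the paper leaves implicit).
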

Note that this result does not require Assumptions~\ref{asm:contextual_dist_stochastic}, which restricts the context support to be finite.

Next, we show a regret upper bound in a stochastic regime with a margin condition, which depends on the parameter $\beta \in (0, \infty]$. Recall that $\beta$ decides the behavior of $\Delta(x)$. The proof is in Appendix~\ref{appdx:regret_upper_margin}.
\begin{theorem}[Regret upper bound in a stochastic regime with a margin condition]
\label{thm:regret_upper_margin}
Consider the $1/2$-LC-Tsallis-INF.
Assume that the loss is generated under a stochastic regime with a margin condition (Definition~\ref{def:margin}). Suppose that Assumption~\ref{asm:contextual_dist}--\ref{asm:bounded_random} hold. Then, the regret satisfies 
\[
    R_T = O\left(
        \left(\frac{1 + \beta}{\beta\Delta_*}\right)^{\frac{\beta}{2+\beta}}\left(L d \sqrt{K} \log(T)\right)^{\frac{1+\beta}{2+\beta}}T^{\frac{1}{2+\beta}}
        +
        \kappa
    \right),
\] 
where $\kappa =
        O\left(
            \sqrt{\frac{KL}{\lambda}}
            +
            \frac{\sqrt{K}L}{\lambda d} \log (T)
        \right)$.
\end{theorem}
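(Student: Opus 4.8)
The plan is to combine the general regret bound of Theorem~\ref{thm:regret_bound} with the margin condition of Definition~\ref{def:margin} via a standard self-bounding argument. Starting from Theorem~\ref{thm:regret_bound}, the leading term is $\mathbb{E}\big[\sum_{t=1}^T \sqrt{d\sqrt{K}\,\omega_t}/\sqrt{t}\big]$, where $\omega_t = \max_{x}\min_{a}\{1-q_t(a\mid x)\}$. The first step is to relate $\omega_t$ to the instantaneous regret contribution. In the stochastic regime, \eqref{eq:reg_lower} gives $R_T \geq \mathbb{E}\big[\sum_{t=1}^T \Delta(X_t)(1-\pi_t(\rho^*_T(X_t)\mid X_t))\big]$; I would show that $\omega_t$ is controlled (up to the $\gamma_t$ mixing correction, which is absorbed into $\kappa$) by $1-q_t(\rho^*_T(x)\mid x)$ for the relevant $x$, and hence that $\sum_{t=1}^T$ of the leading term can be bounded in terms of $\mathbb{E}\big[\sum_t \Delta(X_t)\omega_t\big]$ plus lower-order terms.

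The second step is the margin-condition integration. Given a threshold $h \in (0,\Delta_*]$, split each round into the event $\{\Delta(X_t)\leq h\}$ and its complement. On $\{\Delta(X_t)> h\}$ we have $\omega_t \leq (1/h)\,\Delta(X_t)\,\omega_t \leq \ldots$; more precisely, I would use that $\sqrt{\omega_t} \le \sqrt{\omega_t}$ trivially and bound $\sqrt{d\sqrt K\,\omega_t}/\sqrt t \le \tfrac12\big(\lambda_0 \tfrac{d\sqrt K}{t}\cdot\tfrac{1}{h} + \tfrac{1}{\lambda_0} h\,\omega_t\big)$ by AM–GM with a free parameter $\lambda_0$, then use $h\,\omega_t \lesssim \Delta(X_t)\omega_t$ on the good event, contributing $\tfrac{1}{\lambda_0}R_T$ back into the self-bound. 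On $\{\Delta(X_t)\leq h\}$ we bound $\omega_t \leq 1$ and use $\mathbb{P}(\Delta(X_t)\le h)\le \tfrac12(h/\Delta_*)^\beta$ (here $\beta=\infty$ just forces $h=\Delta_*$ and the event is empty), so this contributes $O\big(\sqrt{d\sqrt K}(h/\Delta_*)^{\beta/2}\sqrt T\big)$, using $\sum_t 1/\sqrt t = O(\sqrt T)$. Summing the first-event term over $t$ gives $O(d\sqrt K\,\log(T)/h)$ from $\sum_t 1/t$. Collecting, $R_T \le \tfrac{1}{\lambda_0}R_T + O\big(\lambda_0 d\sqrt K\log(T)/h + \sqrt{d\sqrt K}(h/\Delta_*)^{\beta/2}\sqrt T + \kappa\big)$; choosing $\lambda_0 = 2$ (so the self-bounding term absorbs) yields $R_T = O\big(d\sqrt K\log(T)/h + \sqrt{d\sqrt K}(h/\Delta_*)^{\beta/2}\sqrt T + \kappa\big)$. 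I would then be careful about where the factor $L$ enters — it comes in because the pointwise analysis weights each context $x$ by its probability lower bound $1/L$, so the effective leading constant is $Ld\sqrt K$ rather than $d\sqrt K$; this needs to be tracked through the reduction from $\omega_t$ to the pointwise regret.

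The third step is optimizing over $h \in (0,\Delta_*]$. Balancing $Ld\sqrt K\log(T)/h$ against $\sqrt{d\sqrt K}\,(h/\Delta_*)^{\beta/2}\sqrt T$ gives $h^{1+\beta/2} \asymp \Delta_*^{\beta/2} L d\sqrt K\log(T)/\sqrt{d\sqrt K\,T}$, i.e. $h \asymp \big(\Delta_*^{\beta/2} \sqrt{L d \sqrt K}\,\log(T)/\sqrt T\big)^{2/(2+\beta)}$ up to constants depending on $L,d,K$; substituting back and simplifying the exponents yields the claimed $O\big((\tfrac{1+\beta}{\beta\Delta_*})^{\beta/(2+\beta)}(Ld\sqrt K\log(T))^{(1+\beta)/(2+\beta)}T^{1/(2+\beta)}\big)$, where the $(1+\beta)/\beta$ factor arises from carrying the exact constants in the AM–GM split (the $1/\beta$ comes from the integral/tail bound of the margin condition). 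One should check the chosen $h$ lies in $(0,\Delta_*]$, which holds for $T$ large enough; for small $T$ the bound is dominated by $\kappa$ or is trivial.

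I expect the main obstacle to be the first step: cleanly passing from the quantity $\omega_t$ (a max over contexts of a min over arms of $1-q_t(a\mid x)$) to something that self-bounds against the regret $R_T = \mathbb{E}[\sum_t\sum_a \Delta(a\mid X_t)\pi_t(a\mid X_t)]$, while correctly accounting for (i) the $(1-\gamma_t)$ vs.\ $\gamma_t$ decomposition of $\pi_t$ into $q_t$ and the exploration policy $e^*$ — the exploration contributes a deterministic $O(\sum_t \gamma_t)$ term that must land in $\kappa$ — and (ii) the fact that $\omega_t$ is defined via the worst-case context whereas the regret averages over $\mathcal{G}$, which is exactly where Assumption~\ref{asm:contextual_dist_stochastic} (finite support, probabilities $\ge 1/L$) is used to convert a worst-case-over-$x$ bound into an expectation with loss of only a factor $L$. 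Getting these bookkeeping constants right, so that the self-bounding term is genuinely absorbed and the surviving constants match the stated bound, is the delicate part; the rest is the routine AM–GM-plus-optimize-$h$ computation sketched above.
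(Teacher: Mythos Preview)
Your high-level self-bounding plan matches the paper's, and your AM--GM step is equivalent to the Cauchy--Schwarz reduction the paper uses to pass from $\mathbb{E}\big[\sum_t\sqrt{d\sqrt{K}\omega_t/t}\big]$ to $\sqrt{d\sqrt{K}\log(T)\sum_t\mathbb{E}[\omega_t]}$. The genuine gap is in your threshold-split step. You want, on $\{\Delta(X_t)>h\}$, to bound $h\,\omega_t\le\Delta(X_t)\,\omega_t$ and then absorb $\sum_t\mathbb{E}[\Delta(X_t)\omega_t]$ into $R_T$. But $\omega_t=\max_{x}\{1-q_t(a^\dagger_t(x)\mid x)\}$ is $\mathcal{F}_{t-1}$-measurable and independent of $X_t$, so $\mathbb{E}[\Delta(X_t)\omega_t\mid\mathcal{F}_{t-1}]=\omega_t\,\mathbb{E}[\Delta(X_t)]$, whereas the regret lower bound \eqref{eq:reg_lower} involves the \emph{correlated} product $\Delta(X_t)\big(1-q_t(a^\dagger_t(X_t)\mid X_t)\big)$. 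These are not comparable up to constants (or even up to an $L$ factor): take two equally likely contexts with $\Delta(x_1)=1$, $\Delta(x_2)=\Delta_*$, and $q_t$ perfect at $x_1$ but $1-q_t(a^\dagger_t(x_2)\mid x_2)=1/2$; then $\omega_t\,\mathbb{E}[\Delta]\approx 1/4$ while the instantaneous regret contribution is $\Delta_*/4$, a ratio of $\Theta(1/\Delta_*)$. Thus the ``absorbed'' term carries an uncontrolled $1/\Delta_*$ factor. This is not a bookkeeping issue; the mechanism does not close.

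The paper closes the loop differently. After Cauchy--Schwarz it proves a margin-inversion lemma (Lemma~\ref{lem:margin}): if $V=\Delta(X_t)$ satisfies the margin tail bound and $U=1-q_t(a^\dagger_t(X_t)\mid X_t)$ has conditional mean $u_t$, then $\mathbb{E}[UV]\ge\frac{\beta\Delta_*}{2(1+\beta)}(2u_t)^{(1+\beta)/\beta}$, via a rearrangement argument in the quantile of $V$. Summing gives $R_T\gtrsim\Delta_*\sum_t u_t^{(1+\beta)/\beta}$; then Jensen over $t$ and the finite-support bound $u_t\ge\mathbb{E}[\omega_t]/L$ (your intuition about where $L$ enters is correct) give $\sum_t\mathbb{E}[\omega_t]\le L\,T^{1/(1+\beta)}\big(\tfrac{2(1+\beta)R_T}{\beta\Delta_*}\big)^{\beta/(1+\beta)}$. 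Substituting back yields an inequality $R_T\le A\cdot R_T^{\beta/(2(1+\beta))}+\kappa$, which one solves for $R_T$. The key missing idea in your plan is precisely this inversion lemma: the effective threshold on $\Delta$ must adapt to $u_t$ per round (the quantile argument does this implicitly), and a fixed $h$ combined with the context-independent quantity $\omega_t$ cannot substitute for it.
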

For example, when $\beta = \infty$, then the regret is 
$R_T = O\left(\frac{1}{\Delta_*}L d \sqrt{K} \log(T)\right)$.
When $\beta = 1$, then the regret is $R_T = 
        O\left(\left(\frac{1}{\Delta_*}\right)^{\frac{1}{3}}\left(Ld \sqrt{K} \log(T)\right)^{\frac{2}{3}}T^{\frac{1}{3}}\right)$. 

Lastly, we derive a regret upper bound in a linear contextual adversarial regime with a self-bounding constraint, which is a generalization of the stochastic and adversarial regimes under $\beta = \infty$ in a margin condition. We provide the proof in Appendix~\ref{appdx:regret_bound3}.
\begin{theorem}[Regret upper bound in an adversarial regime with a self-bounding constraint]
    \label{thm:regret_bound3}
    Consider the $1/2$-LC-Tsallis-INF.
    Assume that the loss is generated under a linear contextual adversarial regime with a self-bounding constraint (Definition~\ref{def:bounding}). Suppose that Assumption~\ref{asm:contextual_dist}--\ref{asm:bounded_random} hold. Then,
        \[
        R_T = O\left(\frac{L d \sqrt{K}}{\Delta_*}\log(T) + \kappa + C  \right)
    \]
    holds. 
    Moreover,
    for $\frac{dL\sqrt{K}}{\Delta_*}\left(\log\left(\frac{T \Delta^2_*}{L^2 d \sqrt{K} }\right) + 1\right) \leq C \leq \frac{\Delta_* T}{L} $,
    the regret satisfies
        \[
        R_T = O\left(\sqrt{\frac{C  L\sqrt{K}d }{\Delta_*}\log_+\left(\frac{\Delta_*T}{CL}\right)}  + \kappa \right).
    \]
\end{theorem}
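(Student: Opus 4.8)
\textbf{Proof plan for Theorem~\ref{thm:regret_bound3}.}

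The plan is to derive both bounds from the general regret bound in Theorem~\ref{thm:regret_bound} by combining it with the self-bounding constraint of Definition~\ref{def:bounding}. The starting point is the observation that, under the self-bounding constraint, the regret lower bound $R_T \geq \Delta_* \mathbb{E}\big[\sum_{t=1}^T (1 - \pi_t(\rho^*_T(X_t)\mid X_t))\big] - C$ lets us control the quantity $\mathbb{E}[\sum_t \omega_t]$ (where $\omega_t = \max_x \min_a \{1 - q_t(a\mid x)\}$). Indeed, since $\pi_t = (1-\gamma_t)q_t + \gamma_t e^*$ and $\gamma_t$ is small, $1 - \pi_t(\rho^*_T(X_t)\mid X_t)$ is up to constants lower bounded in terms of $1 - q_t(\rho^*_T(X_t)\mid X_t)$, and when $q_t$ puts most mass on the correct arm this in turn controls $\min_a\{1 - q_t(a\mid x)\} = \omega_t$ pointwise. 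So the self-bounding constraint gives an inequality of the roughly the form $\Delta_* \mathbb{E}[\sum_t \omega_t] \leq R_T + C + (\text{lower-order terms from }\gamma_t)$.

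Next I would combine this with the upper bound $R_T = O\big(\mathbb{E}[\sum_t \sqrt{d\sqrt{K}\omega_t}/\sqrt{t}] + \kappa\big)$ from Theorem~\ref{thm:regret_bound}. Using Cauchy--Schwarz (or $\sqrt{ab} \le \frac{1}{2}(a/c + cb)$ with an optimized $c$), the sum $\mathbb{E}[\sum_t \sqrt{d\sqrt{K}\omega_t}/\sqrt{t}]$ can be bounded by $z\,\mathbb{E}[\sum_t \omega_t] + \frac{1}{z}\, d\sqrt{K} \sum_t \frac{1}{t}$ for any $z > 0$, and $\sum_{t=1}^T 1/t = O(\log T)$. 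This yields a self-bounding inequality $R_T \leq O\big(z^{-1} d\sqrt{K}\log T\big) + z\,\mathbb{E}[\sum_t \omega_t] + \kappa$ for all $z>0$. Feeding in the lower bound $\mathbb{E}[\sum_t\omega_t] \le (R_T + C)/\Delta_* + (\text{l.o.t.})$ and choosing $z$ so that the coefficient of $R_T$ on the right is, say, $1/2$ (i.e.\ $z \asymp \Delta_*$), gives $R_T = O\big(\frac{d\sqrt{K}}{\Delta_*}\log T + \frac{C\,z}{\Delta_*} + \kappa\big)$; being careful about the $L$-factors from the context distribution (which enter via $\gamma_t$ and the relation between $\omega_t$ and $1-\pi_t(\rho^*_T(X_t))$) produces the claimed $\frac{Ld\sqrt{K}}{\Delta_*}\log T + \kappa + C$. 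For the second, sharper bound, rather than driving the $R_T$-coefficient to a constant, I keep $z$ as a free parameter: the self-bounding inequality becomes $R_T \le \frac{a}{z} + z\big(\frac{R_T+C}{\Delta_*}\big) + \kappa + \text{l.o.t.}$ with $a \asymp Ld\sqrt{K}\log T$; solving this for $R_T$ and then optimizing over $z \in (0,\Delta_*)$ gives the trade-off $R_T = O\big(\sqrt{\frac{aC}{\Delta_*}}\big)$ up to the logarithmic refinement $\log_+(\Delta_* T/(CL))$, which arises from more carefully tracking that only $t$ up to roughly $\Delta_* T/(CL)$ contribute before the bound saturates. The range restriction $\frac{dL\sqrt{K}}{\Delta_*}(\log(\cdots)+1) \le C \le \frac{\Delta_* T}{L}$ is exactly the regime in which the optimal $z$ lands in $(0,\Delta_*)$ and the $\sqrt{\cdot}$ term dominates the first (non-corrupted) bound.

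The main obstacle I anticipate is the careful bookkeeping connecting $\omega_t$ (a pointwise worst-case quantity over $x$ involving $q_t$) to the quantity $\mathbb{E}[\sum_t (1 - \pi_t(\rho^*_T(X_t)\mid X_t))]$ that the self-bounding constraint actually controls (an average over $x \sim \mathcal{G}$ involving $\pi_t$). Passing from $\pi_t$ back to $q_t$ costs a factor depending on $\gamma_t$, and passing from a worst-case-over-$x$ to an average-over-$x$ requires Assumption~\ref{asm:contextual_dist_stochastic}: the lower bound $g(x)\ge 1/L$ on context probabilities is what lets one say that the worst context still carries at least $1/L$ of the mass, so $\mathbb{E}[\cdot] \gtrsim \frac{1}{L}\,\omega_t$ in the relevant sense. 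Tracking the $L$-dependence through both this step and through $\gamma_t = 128 L\eta_t^2/\lambda$ (which feeds into $\kappa$) is where most of the delicate constant-chasing lives; the self-bounding / learning-rate machinery itself is by now standard (cf.\ \citet{Zimmert2019,Masoudian2021,kato2023bestofbothworlds}) and should go through once the $\omega_t$-to-regret link is pinned down. The logarithmic refinement $\log_+$ in the second bound also requires a slightly more careful split of the time horizon rather than a one-shot Cauchy--Schwarz, but this is a routine (if fiddly) modification.
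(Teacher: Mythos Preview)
Your proposal is correct and follows essentially the same approach as the paper: derive the lower bound $R_T \geq \frac{\Delta_*}{2L}\mathbb{E}\big[\sum_t \omega_t\big] - C$ by chaining $\pi_t \ge \tfrac{1}{2}q_t$, $1-q_t(\rho^*_T(x)\mid x)\ge 1-q_t(a_t^\dagger(x)\mid x)$, and the finite-support bound $g(x)\ge 1/L$ (exactly the obstacle you flagged), then combine with the upper bound of Theorem~\ref{thm:regret_bound}. The only cosmetic difference is that the paper, rather than writing out the AM--GM / free-parameter optimization you sketch, invokes Theorem~4 of \citet{Masoudian2021} as a black box (with effective gap $\Delta_*/L$ and $B=O(\sqrt{d\sqrt{K}})$), which packages both the $O(\log T)$ bound and the $\sqrt{C\cdot}$ refinement with the $\log_+$ factor in one step.
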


Theorem~\ref{thm:regret_bound3} implies that our algorithm achieves a tight regret bound of order $\log_+(T)$ with respect to $T$. Recall that $\log_+(x) = \max\{1, \log(x)\}$.

\subsection{Regret Transformation}
\label{sec:regret_transform}
This study considers a setting where arm-dependent features are observable, while there are also studies investigating arm-independent features \citep{Neu2020,kuroki2023bestofbothworlds,kato2023bestofbothworlds}. We show that (i) a regret derived under either of the problem settings can be transformed into a regret under the other setting, and (ii) transforming a regret under arm-dependent features to one under arm-independent features results in tighter bounds.

Here, we illustrate how to transform the setting with arm-independent features into one with arm-dependent features. We consider the former problems with $\widetilde{d}$-dimensional arm-independent features, $\widetilde{K}$ arms, and $\widetilde{T}$ rounds, denoted by P.Indep $\left(\widetilde{d}, \widetilde{K}, \widetilde{T}\right)$. We also consider the latter problem with $d$-dimensional features, $K$ arms, and $T$ rounds, denoted by P.Dep $(d, K, T)$.

In the P.Indep $\left(\widetilde{d}, \widetilde{K}, \widetilde{T}\right)$, we observe $\widetilde{d}$-dimensional features $\widetilde{\phi}(X_t)$ at each $t \in [T]$, and the loss follows a linear model $\widetilde{\ell}_t(a, X_t) = \left\langle \widetilde{\theta}_{a, t}, \widetilde{\phi}(X_t) \right\rangle + \widetilde{\varepsilon}_t$, where $\widetilde{\varepsilon}_t$ is an error term, $\widetilde{\phi}: \mathcal{X} \to \mathbb{R}^d$, and $\widetilde{\theta}_{a, t} \in \mathbb{R}^{\widetilde{d}}$. This problem can be transformed into P.Dep $\left(\widetilde{d}\widetilde{K}, \widetilde{K}, \widetilde{T}\right)$ by considering arm-dependent features \[\phi(a, X_t) = {\left(\bm{0}^\top_d \cdots \widetilde{\phi}^\top(X_t) \cdots \bm{0}^\top_d\right)^\top}\] and the parameters $\theta_t = {\left( \widetilde{\theta}^\top_{1,t} \cdots \widetilde{\theta}^\top_{a,t} \cdots \widetilde{\theta}^\top_{K,t}\right)^\top}$, where $\bm{0}_d$ is the $d$-dimensional zero vector. Here, the loss follows as $\ell_t(a, X_t) = \left\langle \theta_t, \phi(a, X_t)\right\rangle + \varepsilon_t$. Similarly, we can transform the P.Dep $(d, K, T)$ into the P.Indep $(dK, K, T)$.

Based on this fact, we have the following theorem. 
    \begin{theorem}
    \label{thm:reg_trans}
         If there exists an algorithm whose regret is $R_T = f(d, K, T)$ in the P.Dep $(d, K, T)$, where $f:\mathbb{N}\times \mathbb{N}\times \mathbb{N}\to \mathbb{R}$, then there exists an algorithm whose regret is given as $\widetilde{R}_T = f(\widetilde{d}\widetilde{K}, \widetilde{K}, \widetilde{T})$ in the P.Indep $(\widetilde{d}, \widetilde{K}, \widetilde{T})$. Conversely, if there exists an algorithm whose regret is given as $\widetilde{R}_T = \widetilde{f}\left(\widetilde{d}, \widetilde{K}, \widetilde{T}\right)$ in the P.Indep $\left(\widetilde{d}, \widetilde{K}, \widetilde{T}\right)$, where $\widetilde{f}:\mathbb{N}\times \mathbb{N}\times \mathbb{N}\to \mathbb{R}$ is some function of $\widetilde{d}$, $\widetilde{K}$, and $\widetilde{T}$, then there exists an algorithm whose regret is given as $R_T = \widetilde{f}\left(dK, K, T\right)$ in the P.Dep $(d, K, T)$.
    \end{theorem}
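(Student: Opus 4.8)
The plan is to prove both directions of Theorem~\ref{thm:reg_trans} by an explicit embedding of one problem into the other, checking that the assumptions (linear model, boundedness) are preserved and that the regret of a run of the given algorithm on the transformed instance is exactly the regret in the original instance. Concretely, for the first direction, suppose $\mathcal{A}$ is an algorithm for P.Dep$(d,K,T)$ with regret $f(d,K,T)$. Given an instance of P.Indep$(\widetilde d,\widetilde K,\widetilde T)$ with features $\widetilde\phi(X_t)\in\mathbb{R}^{\widetilde d}$ and per-arm parameters $\widetilde\theta_{a,t}\in\mathbb{R}^{\widetilde d}$, I would define the block-structured arm-dependent features $\phi(a,X_t)\in\mathbb{R}^{\widetilde d\widetilde K}$ that place $\widetilde\phi(X_t)$ in the $a$-th block of coordinates and zeros elsewhere, and the stacked parameter $\theta_t=(\widetilde\theta_{1,t}^\top\cdots\widetilde\theta_{\widetilde K,t}^\top)^\top\in\mathbb{R}^{\widetilde d\widetilde K}$, exactly as in the display preceding the theorem. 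Then $\langle\phi(a,X_t),\theta_t\rangle=\langle\widetilde\phi(X_t),\widetilde\theta_{a,t}\rangle$, so the loss model of P.Dep$(\widetilde d\widetilde K,\widetilde K,\widetilde T)$ reproduces that of P.Indep, and the noise term is unchanged. The constructed algorithm for P.Indep simply simulates $\mathcal{A}$ on this derived instance, passing contexts through and relaying $\mathcal{A}$'s arm choices and the observed losses back and forth.

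The key remaining points in this direction are: (i) the feature space $\mathcal{Z}$ for the derived instance, $\{\phi(a,x): a\in[\widetilde K],x\in\mathcal X\}$, still satisfies the boundedness assumption, since $\|\phi(a,x)\|$ and $|\langle\phi(a,x),\theta\rangle|$ equal the corresponding quantities in the original instance (the block embedding is an isometry on each block and the inner product only sees the active block); (ii) the regret is preserved exactly, because for any mapping $\rho:\mathcal X\to[\widetilde K]$ the per-round loss gap $\ell_t(A_t,X_t)-\ell_t(\rho(X_t),X_t)$ is identical in the two formulations, hence the optimal comparator and the expected regret coincide; and (iii) the contextual distribution $\mathcal G$ is untouched, so Assumption~\ref{asm:contextual_dist}(i) carries over. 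Therefore the simulated algorithm incurs regret $f(\widetilde d\widetilde K,\widetilde K,\widetilde T)$ on P.Indep$(\widetilde d,\widetilde K,\widetilde T)$, which is the claim.

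For the converse, suppose $\widetilde{\mathcal A}$ solves P.Indep$(\widetilde d,\widetilde K,\widetilde T)$ with regret $\widetilde f(\widetilde d,\widetilde K,\widetilde T)$. Given an instance of P.Dep$(d,K,T)$ with arm-dependent features $\phi(a,X_t)\in\mathbb{R}^d$ and shared parameter $\theta_t\in\mathbb{R}^d$, I would view it as an instance of P.Indep$(dK,K,T)$ by defining the arm-independent feature $\widetilde\phi(X_t)\in\mathbb{R}^{dK}$ as the concatenation of the $K$ blocks $\phi(1,X_t),\dots,\phi(K,X_t)$, and the per-arm parameter $\widetilde\theta_{a,t}\in\mathbb{R}^{dK}$ as the vector that equals $\theta_t$ on the $a$-th block and zero elsewhere; then $\langle\widetilde\theta_{a,t},\widetilde\phi(X_t)\rangle=\langle\theta_t,\phi(a,X_t)\rangle$, so the loss, the boundedness, the contextual distribution, and the regret all transfer as before, and simulating $\widetilde{\mathcal A}$ on this derived instance gives an algorithm for P.Dep$(d,K,T)$ with regret $\widetilde f(dK,K,T)$.

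The proof is essentially a bookkeeping argument, so I do not expect a genuine mathematical obstacle; the part requiring the most care is verifying that \emph{all} the structural assumptions invoked by the source regret bound survive the embedding — in particular Assumption~\ref{asm:bounded_random} (which concerns the worst case over the feature set $\mathcal Z$ and parameter set $\Theta$, not just the realized features) and, if the source bound for P.Indep presupposes an exploration-policy-type condition analogous to Assumption~\ref{asm:contextual_dist2}, checking that the induced covariance matrix $\Sigma(p)$ in the embedded problem inherits the corresponding $\lambda$ (it does, since the block structure makes $\Sigma(p)$ block-diagonal with the original blocks, so quadratic forms $z^\top\Sigma(p)^{-1}z$ are unchanged for $z$ supported on a single block). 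I would state the theorem and proof at the level of the loss identity plus the remark that every assumption is preserved coordinatewise, rather than re-deriving each bound.
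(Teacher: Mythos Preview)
Your proposal is correct and follows essentially the same approach as the paper's proof: both directions are handled by the explicit block embedding described in the text preceding the theorem, using the identity $\langle\phi(a,X_t),\theta_t\rangle=\langle\widetilde\phi(X_t),\widetilde\theta_{a,t}\rangle$ so that losses and regret transfer verbatim. Your write-up is in fact more careful than the paper's (which dispatches the converse with ``Similarly, we can prove the reverse case'') in explicitly verifying that the boundedness assumption and the exploration-policy condition survive the embedding.
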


By using Theorem~\ref{thm:reg_trans}, we can transform regrets derived under each problem setting. This implies that if we derive a regret under one setting, we can obtain a regret under the other setting.

However, we note that we can derive tighter regrets by transforming a regret under the arm-dependent feature setting to a regret under the arm-independent feature setting. We confirm this by examining the regrets of the FTRL with Shannon entropy in the stochastic regime.

In the P.Indep $\left(\widetilde{d}, \widetilde{K}, \widetilde{T}\right)$, \citet{kuroki2023bestofbothworlds} and \citet{kato2023bestofbothworlds} show that the FTRL with Shannon entropy incurs $O\left(\frac{\widetilde{K}}{\Delta_{*}} \left(\frac{1}{ \lambda} + \widetilde{d}\right)\log(\widetilde{T})\log\left(\widetilde{K}\widetilde{T}\right)\right)$ regret, where we changed $\frac{\log(\widetilde{T})}{\lambda}$ to $\frac{1}{\lambda}$ since $\Sigma^{-1}_t$ is assumed to be exactly computable. As discussed above, this algorithm incurs $O\left(\frac{K}{\Delta_{*}} \left(\frac{1}{ \lambda} + dK\right)\log(T)\log\left(KT\right)\right)$ regret in the P.Dep $(d, K, T)$. 

In contrast, in the P.Dep $\left(d, K, T\right)$, our Theorem~\ref{cor:adversarial_regime} shows that the FTRL with Shannon entropy incurs $O\left(\frac{1}{\Delta_{*}} \left(\frac{1}{ \lambda} + d\right)\log(T)\log(KT)\right)$ regret, which implies that it incurs $O\left(\frac{1}{\Delta_{*}} \left(\frac{1}{ \lambda} + \widetilde{d}\widetilde{K}\right)\log(\widetilde{T})\log(\widetilde{K}\widetilde{T})\right)$ regret in the P.Indep $\left(\widetilde{d}, \widetilde{K}, \widetilde{T}\right)$.

By comparing the results, we find that we can obtain a tighter upper bound when transforming a regret under the arm-dependent setting to a regret under the arm-independent setting. In fact, if we transform a regret under the arm-independent setting to a regret under the arm-dependent setting, the order of the regret becomes $O(K^2)$ regarding $K$, which is $O(1)$ in our Theorem~\ref{cor:adversarial_regime}.

We present the proof of Theorem~\ref{thm:reg_trans} and more detailed arguments in Appendix~\ref{appdx:transform}.

\colorred

\subsection{Discussion}
Here, we discuss related topics.
\paragraph{Time complexity of solving the optimization problem $\eqref{eq:q_opt}$} The solution of \eqref{eq:q_opt} can be efficiently computed using the method described in Section 3.3 of \citet{Zimmert2019}. As mentioned there, it reduces the problem of solving a univariate equation, to which Newton’s method can be applied. This method has quadratic convergence, typically requiring fewer than ten iterations to converge with double precision.

\paragraph{Comparison with a simple baseline method} For comparison, we consider a baseline method where we run $|\mathcal{X}|$ independent $1/2$-Tsallis-INF algorithms, originally proposed for multi-armed bandits without contexts by \citet{Zimmert2019}, in each context. A similar analysis can be found in Section~18.1 of \citet{lattimore2020bandit}. In round $T$, for each context $x \in \mathcal{X}$, we define $T_x$ as the number of times that context $x$ is observed. From the results in \citet{Zimmert2019}, the regret of the $1/2$-Tsallis-INF for each context is given as $O\left(\frac{1}{\Delta_*}K\log(T_x)\right)$ in the stochastic regime and $O\left(\sqrt{KT_x}\right)$ in the adversarial regime. Therefore, if we marginalize the context-wise regret, the total regret is $\sum_{x \in \mathcal{X}} O\left(\frac{1}{\Delta_*}K\log(T_x)\right) = O\left(\frac{1}{\Delta_*}SK\log(T/S)\right)$ in the stochastic regime and $\sum_{x \in \mathcal{X}} O\left(\sqrt{KT_x}\right) = O\left(\sqrt{SKT}\right)$ in the adversarial regime, where $S = |\mathcal{X}|$ and $L \geq S$. In contrast, our $1/2$-LC-Tsallis-INF incurs $O\left(\frac{1}{\Delta_*}L \sqrt{K}\max\left\{d, \frac{1}{\lambda d}\right\} \log(T)\right)$ regret in the stochastic regime and $O\left(\sqrt{d \sqrt{K} T}\right)$ regret in the adversarial regime. Comparing these results, we find that the dependency on $K$ is improved in our $1/2$-LC-Tsallis-INF in both regimes, while the dependency on $L$ and $S$ ($L \geq S$) is worse than that of the baseline method. Additionally, our $1/2$-LC-Tsallis-INF depends on the feature dimension $d$, which does not appear in the baseline method.

Our setting can be reduced to a tabular Markov decision process (MDP). However, employing this reduction overlooks the linear structure in the loss function, ultimately compromising efficiency, particularly when the number of actions \(K\) is large. More specifically, in the stochastic setting, the upper bound derived under the tabular MDP interpretation is roughly \(O\left(\frac{KI \log T}{\Delta_{\min}}\right)\), corresponding to replacing \(d\sqrt{K}\) in our upper bound with \(K\), where $I$  is the number of interactions within each episode \citep{Jin2020simultaneouslylearning,dann2023best}. In the adversarial setting, the upper bound becomes \(O(\sqrt{KIT})\), corresponding to replacing \(d\sqrt{K}\) in our bound with \(KI\).

\section{Conclusion}
\label{sec:conclusion}
We presented a BoBW algorithm for linear contextual bandits with regret upper bounds that tightly depend on $T$. Our proposed algorithm, the $\alpha$-LC-Tsallis-INF, employs FTRL with Tsallis entropy regularization and achieves $O(\log(T))$ regret in the stochastic regime. Additionally, we derived regret upper bounds under the margin condition and the arm-dependent setting. An important remaining challenge is to improve the dependence on $T$ when $\beta \in (0, \infty)$ in the stochastic regime.

\bibliography{LCTsallisINF.bbl} 
\bibliographystyle{icml2024} 

\onecolumn 

\appendix

\tableofcontents

\section{Preliminaries for the proof of Theorems~~\ref{thm:regret_bound}--\ref{thm:regret_bound3}}
\label{sec:proof1}
This section provides preliminary results for the proof of Theorems~~\ref{thm:regret_bound}--\ref{thm:regret_bound3}. We prove Theorems~~\ref{thm:regret_bound}--\ref{thm:regret_bound3} in Appendix~\ref{appdx:proof_main}.

\subsection{Upper bound by Bregman divergence}
Let $X_0$ be a sample from the context distribution $\mathcal{G}$ independent of $\mathcal{F}_T$. Let $D(p, q)$ denote a Bregman divergence between $p. q\in\Pi$ with respect to $\psi_t$, defined as
\begin{align*}
    D(p, q) \coloneqq \psi(p) - \psi(q) - \Big\langle \nabla \psi(q), p - q \Big\rangle.
\end{align*}

In our proof, the following proposition plays an essential role, which is inspired by Lemma~4.4 in \citet{kato2023bestofbothworlds}.
\begin{proposition}
\label{lem:basic}
    Consider the $1/2$-LC-Tsallis-INF with our defined parameters. Then, the regret satisfies
    \begin{align*}
        R_T &\leq \mathbb{E}\Bigg[\sum^T_{t=1}\left( \gamma_t + \left\langle \widehat{\ell}_t(X_0), q_t(X_0) - q_{t+1}(X_0) \right\rangle - \frac{1}{\eta_t}D\big(q_{t+1}(X_0), q_t(X_0)\big) - \left(\frac{1}{\eta_t} - \frac{1}{\eta_{t-1}}\right)H\big(q_t(X_0)\big) \right) \Bigg],
    \end{align*}
    where we define
    \[H\big(q(x)\big) \coloneqq 2\left(-\sum_{a\in[K]}\sqrt{q(a\mid x)} + 1\right)\]
    and define $1/\eta_{0} = 0$ as an exception.
\end{proposition}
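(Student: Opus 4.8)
The plan is to follow the standard FTRL analysis template, specialized to the contextual setting where the optimization in \eqref{eq:q_opt} decouples across contexts $x \in \mathcal{X}$. First I would decompose the regret using the policy structure \eqref{eq:policy}: since $\pi_t(X_t) = (1-\gamma_t) q_t(X_t) + \gamma_t e^*(X_t)$, the loss incurred by $\pi_t$ differs from that of $q_t$ by at most $\gamma_t$ per round in expectation (because losses are bounded by Assumption~\ref{asm:bounded_random} and $e^*$ contributes an extra term controlled by $\gamma_t$). This isolates the $\sum_t \gamma_t$ term and reduces the problem to bounding the regret of the FTRL iterates $q_t$ against the optimal fixed action, conditionally on each context value.

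Next I would condition on $X_0 = x$ and invoke the key property that $\widehat{\theta}_t$ (and hence $\widehat{\ell}_t$) is an unbiased estimator of $\theta_t$ (resp.\ $\ell_t$), so that the expected regret equals $\mathbb{E}[\sum_t \langle \widehat{\ell}_t(X_0), q_t(X_0) - e_{\rho^*_T(X_0)} \rangle]$ up to the $\gamma_t$ correction; here I use Assumption~\ref{asm:contextual_dist}(i) that contexts are i.i.d.\ so that $X_0$ is an independent copy and the pointwise analysis is valid, as the paper notes. Then I would apply the standard FTRL-with-changing-regularizer inequality (the ``be-the-leader'' / stability argument): for the sequence of iterates $q_t(x) = \argmin_{q} \{ \sum_{s<t} \langle \widehat{\ell}_s(x), q\rangle + \frac{1}{\eta_{t-1}} \psi(q)\}$, one has for any comparator $q^*$,
\begin{align*}
\sum_{t=1}^T \langle \widehat{\ell}_t(x), q_t(x) - q^* \rangle &\le \sum_{t=1}^T \Big( \langle \widehat{\ell}_t(x), q_t(x) - q_{t+1}(x)\rangle - \frac{1}{\eta_t} D(q_{t+1}(x), q_t(x)) \Big) \\
&\quad + \sum_{t=1}^T \Big(\frac{1}{\eta_{t-1}} - \frac{1}{\eta_t}\Big)\big(\psi(q_t(x)) - \psi(q^*)\big).
\end{align*}
This is the classical decomposition into a ``stability'' term and a ``penalty'' term; it follows from telescoping the one-step optimality conditions of the FTRL updates. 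The only slightly nonstandard piece is matching the last sum to the stated form involving $H(q_t(x))$: since $\psi(q) = \frac{1}{\alpha}(1 - \sum_a q_a^\alpha)$ with $\alpha = 1/2$, we have $\psi(q) = 2(1 - \sum_a \sqrt{q_a}) = H(q)$, and dropping the $-\psi(q^*)$ contribution (which is $\le 0$ since $\frac{1}{\eta_{t-1}} - \frac{1}{\eta_t} \le 0$ as $\eta_t$ is nonincreasing, or is absorbed) yields the claimed $-(\frac{1}{\eta_t} - \frac{1}{\eta_{t-1}}) H(q_t(x))$ term. Finally I would take expectation over $X_0 \sim \mathcal{G}$ (and over all randomness), combine with the $\gamma_t$ term from the first step, and use the convention $1/\eta_0 = 0$ to handle the first round.

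The main obstacle I anticipate is the very first step: carefully justifying that replacing $\pi_t$ by $q_t$ in the regret costs only $\gamma_t$ per round, \emph{and} that after this replacement the pointwise (per-context) FTRL analysis legitimately applies — in particular that the comparator $e_{\rho^*_T(x)}$ can be pulled inside the conditional expectation and that the unbiasedness of $\widehat{\ell}_t$ survives conditioning on $\mathcal{F}_{t-1}$ and on $X_0 = x$. One has to be careful that $\rho^*_T$ is a fixed (if unknown) mapping, so $e_{\rho^*_T(x)}$ is a valid fixed comparator for each $x$, and that the expectation of $\langle \widehat{\ell}_t(X_0), q_t(X_0)\rangle$ is handled correctly given $q_t$ is $\mathcal{F}_{t-1}$-measurable while $X_0$ is independent. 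The rest — the telescoping FTRL inequality and the identification $\psi = H$ — is routine once the setup is pinned down; this mirrors Lemma~4.4 of \citet{kato2023bestofbothworlds}, which the proposition explicitly adapts.
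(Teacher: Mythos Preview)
Your proposal is correct and follows essentially the same approach as the paper: decompose $\pi_t$ into $q_t$ plus the $\gamma_t$-weighted exploration (paying $\gamma_t$ per round via Assumption~\ref{asm:bounded_random}), swap $X_t$ for an independent copy $X_0$ using the i.i.d.\ assumption, replace $\ell_t$ by the unbiased $\widehat{\ell}_t$, and then apply the standard FTRL telescoping inequality pointwise in $x$, identifying $\psi = H$ for $\alpha = 1/2$. Your handling of the $-\psi(q^*)$ term is also fine (and in fact $\psi(q^*) = 0$ exactly for the point-mass comparator $e_{\rho^*_T(x)}$, so nothing is lost there).
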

We show the proof below.

\begin{proof}[Proof of Proposition~\ref{lem:basic}]
Let us define an optimal policy $\pi^*\in \Pi$ as $\pi^*(\rho^*_T(x)\mid x) = 1$ and $\pi^*(a\mid x) = 0$ for all $a\in[K]\backslash \{\rho^*_T(x)\}$ and for all $x\in\mathcal{X}$.

Recall that in \eqref{eq:q_opt}, we defined $q_t$ as 
\[q_t(x) \coloneqq \argmin_{q\in \mathcal{P}_K}\left\{ \sum^{t-1}_{s=1}\left\langle \widehat{\ell}_s(x), q\right\rangle + \frac{1}{\eta_t}\psi\big(q\big)\right\}\]
for all $t \geq 2$.

From the definition of our algorithm, we have
\begin{align}
    R_T &= \mathbb{E}\left[\sum^T_{t=1}\left( \ell_t(A_t, X_t) - \ell_t\left(\rho^*_T(X_t), X_t\right)\right)\right]\nonumber\\
    &= \mathbb{E}\left[\sum^T_{t=1}\left\langle \ell_t(X_t), \pi_t(X_t) - \pi^*(X_t)\right\rangle \right]\nonumber\\
    &= \mathbb{E}\left[\sum^T_{t=1}\left\langle \ell_t(X_t), q_t(X_t) - \pi^*(X_t)\right\rangle + \sum^T_{t=1}\gamma_t \left\langle \ell_t(X_t), e^*(X_t) - q_t(X_t)\right\rangle \right]\nonumber\\
    &\leq \mathbb{E}\left[\sum^T_{t=1}\Big\langle \ell_t(X_t), q_t(X_t) - \pi^*(X_t)\Big\rangle + \sum^T_{t=1}\gamma_t \right]\nonumber\\
    &= \mathbb{E}\left[\sum^T_{t=1}\Big\langle \ell_t(X_0), q_t(X_0) - \pi^*(X_0)\Big\rangle + \sum^T_{t=1}\gamma_t \right]\nonumber\\
    &= \mathbb{E}\left[\sum^T_{t=1}\left\langle \widehat{\ell}_t(X_0), q_t (X_0)- \pi^*(X_0)\right\rangle + \sum^T_{t=1}\gamma_t \right] .
    \label{eq:target3}
\end{align}

Next, we show that for all $x\in\mathcal{X}$ and any $p^* \in \mathcal{P}_K$, it holds that
\begin{align}
\label{eq:target5}
    &\sum^T_{t=1}\left\langle \widehat{\ell}_t(x), q_t(x) - p^*(x) \right\rangle\nonumber\\
    &\leq \sum^T_{t=1}\left(\left\langle \widehat{\ell}_t(x), q_t(x) - q_{t+1}(x) \right\rangle - \frac{1}{\eta_t}D\big(q_{t+1}(x), q_t(x)\big) - \left(\frac{1}{\eta_t} - \frac{1}{\eta_{t-1}}\right)H\big(q_t(x)\big) \right).
\end{align}
This result follows from the definition of $q_t$; that is,
\begin{align*}
     &\left\langle \sum^T_{t=1}\widehat{\ell}_t(x), p^*(x) \right\rangle + \frac{1}{\eta_T}\psi\left(p^*(x)\right)\\
     &\geq \left\langle \sum^T_{t=1}\widehat{\ell}_t(x), q_{T+1}(x) \right\rangle + \frac{1}{\eta_T}\psi\left(q_{T+1}(x)\right)\\
     &\geq \left\langle \sum^{T-1}_{t=1}\widehat{\ell}_t(x), q_{T+1}(x) \right\rangle + \left\langle \widehat{\ell}_T(x), q_{T+1}(x) \right\rangle + \frac{1}{\eta_T}\psi\left(q_{T+1}(x)\right)\\
     &\geq \left\langle \sum^{T-1}_{t=1}\widehat{\ell}_t(x), q_{T}(x) \right\rangle + \left\langle \widehat{\ell}_T(x), q_{T+1}(x) \right\rangle + \frac{1}{\eta_T}\psi\left(q_{T}(x)\right) + \frac{1}{\eta_T}D\left(q_{T+1}, q_T\right)\\
     &\geq \sum^{T}_{t=1}\left( \left\langle \widehat{\ell}_t(x), q_{t+1}(x) \right\rangle + \left(\frac{1}{\eta_{t-1}} - \frac{1}{\eta_t}\right)\psi\left(q_{t}(x)\right) + \frac{1}{\eta_t}D\left(q_{t+1}, q_t\right)\right).
\end{align*}
Here, we defined $H\left(q_{t}(x)\right)$ as $\psi\left(q_{t}(x)\right) = H\left(q_{t}(x)\right)$ under the $1/2$-LC-Tsallis-INF.
Combining \eqref{eq:target5} with \eqref{eq:target3} yields the statement.
\end{proof}

\subsection{Bregman divergence associated with the Tsallis entropy} 
The Bregman divergence associated with the $\alpha$-Tsallis entropy 
\[\psi(q(x)) = - \frac{1}{\alpha}\left(\sum_{a\in[K]}q_t(a\mid x)^\alpha + 1\right)\] is given as
\begin{align*}
    D\big(p(x), q(x)\big) &= \frac{1}{\alpha}\sum_{a\in[K]}\left(q(a\mid x)^\alpha + \alpha\big(p(a\mid x) - q(a\mid x)\big)q(a\mid x)^{\alpha-1} - p(a\mid x)^\alpha\right)\\
    &= \sum_{a\in[K]}d\big(p(a\mid x), q(a\mid x)\big),
\end{align*}
where $d(p, q)$ is defined as
\begin{align*}
    d\left(p, q\right) \coloneqq \frac{1}{\alpha}q^\alpha + (p - q)q^{\alpha - 1} - \frac{1}{\alpha}p^\alpha \leq \frac{1-\alpha}{2}\big(\min\{p, q\}\big)^{\alpha - 2}\big(p - q\big)^2
\end{align*}
for all $p, q \in (0, 1)$.

\subsection{Upper bound of \texorpdfstring{$\left|\widehat{\ell}_t(a, x)\right|$}{f}}
\label{appdx:bound1}
For any $d \times d$ real symmetric matrices $A$ and $B$,
denote $A \succeq B$ if and only if $A - B$ is positive-semidefinite
and 
$A \succ B$ if and only if $A - B$ is positive-definite.

\begin{lemma}
    \label{lem:xAx}
    For any positive-definite matrix $A \in \mathbb{R}^{d\times d}$
    and a real vector $x \in \mathbb{R}^d$,
    if $A \succeq x x^{\top}$,
    it holds that $x A^{-1} x \le 1$.
\end{lemma}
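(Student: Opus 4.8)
The plan is to diagonalize and reduce to a one-dimensional inequality. Since $A$ is positive-definite, it has a symmetric positive-definite square root $A^{1/2}$ with inverse $A^{-1/2}$. Writing $y \coloneqq A^{-1/2} x$, I would first observe that $x^\top A^{-1} x = y^\top y = \|y\|_2^2$, so the claim is equivalent to $\|y\|_2 \le 1$.

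Next I would translate the hypothesis $A \succeq x x^\top$ through conjugation by $A^{-1/2}$: for any vector $v$, $v^\top (A - x x^\top) v \ge 0$, and substituting $v = A^{-1/2} w$ gives $w^\top (I - y y^\top) w \ge 0$ for all $w$. Hence $I \succeq y y^\top$. Taking $w = y$ yields $\|y\|_2^2 \ge \|y\|_2^4$, i.e. $\|y\|_2^2 (1 - \|y\|_2^2) \ge 0$, which forces $\|y\|_2^2 \le 1$. Combining with the first step gives $x^\top A^{-1} x = \|y\|_2^2 \le 1$, as desired.

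I do not anticipate a serious obstacle here; the only mild care needed is in justifying that conjugating a positive-semidefinite matrix by an invertible symmetric matrix preserves positive-semidefiniteness (immediate from the quadratic-form characterization), and in noting that if $y = 0$ the bound is trivial so the division/comparison step is never problematic. An alternative, equally short route avoids square roots: from $A \succeq x x^\top$ and $A \succ 0$ one gets $I \succeq A^{-1/2} x x^\top A^{-1/2}$ directly, or one can use the fact that $A \succeq x x^\top$ implies $x^\top A^{-1} x \le 1$ via the Schur complement of the block matrix $\begin{pmatrix} A & x \\ x^\top & 1 \end{pmatrix}$, which is positive-semidefinite iff $A \succeq 0$ and $1 - x^\top A^{-1} x \ge 0$; I would mention this as a remark but carry out the square-root argument as the main proof since it is the most self-contained.
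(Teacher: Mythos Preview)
Your proof is correct and follows essentially the same idea as the paper's: both arrive at the inequality $t \ge t^2$ with $t = x^\top A^{-1} x$ by evaluating the hypothesis $A \succeq x x^\top$ at a well-chosen test vector. The paper's version is slightly more direct, skipping the square root entirely: it simply plugs $v = A^{-1} x$ into $v^\top A v \ge v^\top x x^\top v$ to obtain $x^\top A^{-1} x \ge (x^\top A^{-1} x)^2$ immediately.
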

\begin{proof}
    From the assumption of $A \succeq x x^{\top}$,
    we have
    \begin{align*}
        x^\top A^{-1} x
        =
        (A^{-1} x)^\top A (A^{-1} x)
        \ge
        (A^{-1} x)^\top x x^\top (A^{-1} x)
        =
        (x^\top A^{-1} x)^2,
    \end{align*}
    which implies 
    $x^\top A^{-1} x \le 1$.
\end{proof}

\begin{lemma}
    \label{lem:upper_loss}
    Suppose that Assumptions~\ref{asm:contextual_dist}--\ref{asm:bounded_random} hold.
    Then, for all $x \in\mathcal{X}$ and $a \in [K]$,
    it holds that
    \begin{align}
        \left|\widehat{\ell}_t(a, x)\right| 
        \leq 
        \min \left\{
            \frac{1}{\lambda \gamma_t},
            \sqrt{
                \frac{1}{\lambda \gamma_t (1-\gamma_t) q_t(a \mid x)g(x)}
            }
        \right\} .
        \label{eq:upper_loss}
    \end{align}
\end{lemma}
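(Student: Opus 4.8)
The plan is to bound $\bigl|\widehat{\ell}_t(a,x)\bigr| = \bigl|\langle \widehat{\theta}_t, \phi(a,x)\rangle\bigr|$ by unfolding the definition $\widehat{\theta}_t = \Sigma_t^{-1}\phi(A_t,X_t)\ell_t(A_t,X_t)$ and using Cauchy--Schwarz in the $\Sigma_t^{-1}$-norm together with the exploration floor built into $\pi_t$. Concretely, for fixed $x$ and $a$,
\[
\bigl|\widehat{\ell}_t(a,x)\bigr|
= \bigl| \ell_t(A_t,X_t)\,\phi(a,x)^\top \Sigma_t^{-1} \phi(A_t,X_t)\bigr|
\le \bigl|\phi(a,x)^\top \Sigma_t^{-1}\phi(A_t,X_t)\bigr|,
\]
since $|\ell_t(A_t,X_t)|\le 1$ by Assumption~\ref{asm:bounded_random}. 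So it suffices to bound the bilinear form $\phi(a,x)^\top \Sigma_t^{-1}\phi(A_t,X_t)$.

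For the first term in the $\min$, I would note that $\Sigma_t = \mathbb{E}[\sum_{b}\pi_t(b\mid X_0)\phi(b,X_0)\phi(b,X_0)^\top] \succeq \gamma_t\, \mathbb{E}_{X_0\sim\mathcal G}\bigl[\sum_b e^*(b\mid X_0)\phi(b,X_0)\phi(b,X_0)^\top\bigr] = \gamma_t \Sigma(e^*)$, because $\pi_t = (1-\gamma_t)q_t + \gamma_t e^*$ and the $q_t$ contribution is PSD. Hence $\Sigma_t^{-1}\preceq \frac{1}{\gamma_t}\Sigma(e^*)^{-1}$, and by Cauchy--Schwarz in the $\Sigma_t^{-1}$ inner product followed by Assumption~\ref{asm:contextual_dist2},
\[
\bigl|\phi(a,x)^\top\Sigma_t^{-1}\phi(A_t,X_t)\bigr|
\le \sqrt{\phi(a,x)^\top\Sigma_t^{-1}\phi(a,x)}\,\sqrt{\phi(A_t,X_t)^\top\Sigma_t^{-1}\phi(A_t,X_t)}
\le \frac{1}{\gamma_t}\cdot\frac{1}{\lambda},
\]
giving the bound $1/(\lambda\gamma_t)$.

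For the second, sharper term I would instead keep one factor of $\phi(a,x)$ controlled by $\gamma_t\Sigma(e^*)$ as above, but control the $\phi(A_t,X_t)$ factor using the \emph{full} mass $\pi_t$ puts on arm $A_t$ at context $X_t$, which includes the $q_t$ part. Since $\Sigma_t \succeq (1-\gamma_t) q_t(A_t\mid x)\, g(x)\, \phi(A_t,x)\phi(A_t,x)^\top$ (the single term of the expectation corresponding to $X_0=x$ and arm $A_t$, using $g(x)=\mathbb{P}(X_0=x)$ — here $X_t=x$ is the realized context), Lemma~\ref{lem:xAx} applied with $A=\Sigma_t/\bigl((1-\gamma_t)q_t(A_t\mid x)g(x)\bigr)$ gives $\phi(A_t,x)^\top\Sigma_t^{-1}\phi(A_t,x)\le \frac{1}{(1-\gamma_t)q_t(A_t\mid x)g(x)}$. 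Combining with $\phi(a,x)^\top\Sigma_t^{-1}\phi(a,x)\le \frac{1}{\lambda\gamma_t}$ via Cauchy--Schwarz yields the claimed $\sqrt{\tfrac{1}{\lambda\gamma_t(1-\gamma_t)q_t(a\mid x)g(x)}}$ — but note the bound should feature $q_t(A_t\mid x)$; one then observes the inequality must hold for the realized $A_t$, and since the lemma is stated for all $a$, the quantity $\widehat\ell_t(a,x)$ with the $q_t(a\mid x)$ in the denominator is what appears when we later take expectations or apply it at $a=A_t$. The main obstacle is getting this bookkeeping exactly right: being careful that $X_t=x$ is the realized (conditioned) context so that $g(x)$ legitimately factors out of $\Sigma_t$, and that the arm appearing in the second bound is handled consistently — I would double-check that in the intended application the relevant arm is $A_t$ and the stated $q_t(a\mid x)$ in the denominator is exactly the per-arm quantity obtained by instantiating the generic index. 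Finally, taking the minimum of the two bounds gives \eqref{eq:upper_loss}.
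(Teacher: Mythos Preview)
Your overall approach---bounding $|\ell_t(A_t,X_t)|\le 1$, applying Cauchy--Schwarz in the $\Sigma_t^{-1}$-norm, and using the two lower bounds $\Sigma_t\succeq \gamma_t\Sigma(e^*)$ and $\Sigma_t\succeq (1-\gamma_t)g(\cdot)q_t(\cdot\mid\cdot)\phi\phi^\top$---is exactly the paper's. The first bound in the $\min$ is correct.

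For the second bound, however, you have the two factors swapped. The lemma is stated for \emph{all} $x\in\mathcal{X}$ and $a\in[K]$; the point $x$ is not assumed to be the realized context $X_t$, and $a$ is not $A_t$. After Cauchy--Schwarz you have the product of
\[
\phi(a,x)^\top\Sigma_t^{-1}\phi(a,x)
\quad\text{and}\quad
\phi(A_t,X_t)^\top\Sigma_t^{-1}\phi(A_t,X_t).
\]
The rank-one lower bound $\Sigma_t\succeq (1-\gamma_t)g(x)q_t(a\mid x)\,\phi(a,x)\phi(a,x)^\top$ holds for \emph{any} $(a,x)$, since that term is one summand in $(1-\gamma_t)\Sigma(q_t)=\sum_{x'}g(x')\sum_{a'}(1-\gamma_t)q_t(a'\mid x')\phi(a',x')\phi(a',x')^\top$. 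Applying Lemma~\ref{lem:xAx} to the \emph{first} factor therefore gives
\[
\phi(a,x)^\top\Sigma_t^{-1}\phi(a,x)\le \frac{1}{(1-\gamma_t)q_t(a\mid x)g(x)},
\]
with the correct $q_t(a\mid x)$ and $g(x)$. The \emph{second} factor, which involves the realized $(A_t,X_t)$, is then bounded by the universal estimate $z^\top\Sigma_t^{-1}z\le 1/(\gamma_t\lambda)$ for $z\in\mathcal{Z}$. Multiplying and taking the square root yields exactly the stated bound. Your version---applying the rank-one bound to the $(A_t,X_t)$ factor---produces $q_t(A_t\mid X_t)g(X_t)$ in the denominator, which is not what the lemma claims and cannot be repaired by ``instantiating the generic index'' or assuming $X_t=x$; no such assumption is available. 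Simply swap the roles of the two factors and the bookkeeping issue you flag disappears.
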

\begin{proof}
    From Assumption~\ref{asm:bounded_random} and the definition of $\widehat{\ell}_t$ in \eqref{eq:defthetahat},
    we have
    \begin{align}
        \nonumber
        \left|\widehat{\ell}_t(a, x)\right| 
        &= \left| \left\langle \widehat{\theta}_t, \phi(a, x)\right\rangle  \right|
        \leq \left| \left(\phi(a, x)\right)^\top \Sigma^{-1}_t \phi(A_t, X_t) \right|
        \\
        &\leq \sqrt{ \left(\phi(a, x)\right)^\top \Sigma^{-1}_t \phi(a, x) \cdot  \left(\phi(A_t, X_t)\right)^\top \Sigma^{-1}_t \phi(A_t, X_t)}.
        \label{eq:upper_loss01}
    \end{align}
    From Assumption~\ref{asm:contextual_dist2},
    for any $z \in \mathcal{Z}$,
    we have
    \begin{align}
        z^\top \Sigma_{t}^{-1} z
        \le
        z^\top \left( \gamma_t{\Sigma(e^*)} \right)^{-1} z
        \le
        \frac{1}{\gamma_t \lambda}.
        \label{eq:upper_loss02}
    \end{align}
    In addition,
    as we have
    \begin{align*}
        \Sigma_t
        &
        \succeq
        (1 - \gamma_t) \Sigma(q_{t})
        =
        (1 - \gamma_t) \sum_{x' \in \mathcal{X}} g(x') \sum_{ a' \in [K] } q_t(a' \mid x') \phi(a', x') (\phi(a', x'))^{\top}
        \\
        &
        \succeq
        (1 - \gamma_t) g(x) q_t(a \mid x) \phi(a, x) (\phi(a, x))^{\top},
    \end{align*}
    Lemma~\ref{lem:xAx} yields
    \begin{align}
        (\phi(a,x))^{\top} \Sigma_t^{-1} \phi(a, x) \le \frac{1}{(1-\gamma_t) q_t(a \mid x)g(x)}.
        \label{eq:upper_loss03}
    \end{align}
    Combining \eqref{eq:upper_loss01}, \eqref{eq:upper_loss02}, and \eqref{eq:upper_loss03},
    we obtain \eqref{eq:upper_loss}.
\end{proof}


\subsection{Upper bound of \texorpdfstring{$\mathbb{E}\left[\pi_t(a\mid X_0)\widehat{\ell}_t(a\mid X_0)^2\mid \mathcal{F}_{t-1}\right]$}{f}}
\label{appdx:bound2}
We upper bound $\mathbb{E}\left[\pi_t(a\mid X_0)\widehat{\ell}_t(a\mid X_0)^2\mid \mathcal{F}_{t-1}\right]$. This result is inspired by Lemma~6 in \citet{Neu2020}. 
\begin{lemma}
\label{prp:lemma6_neu}
   For all $t\in[T]$ and all $a\in[K]$, our policy satisfies
    \begin{align*}
        \mathbb{E}\left[\sum_{a\in[K]}\pi_t(a\mid X_0)\widehat{\ell}_t(a, X_0)^2\mid \mathcal{F}_{t-1}\right] \leq d. 
    \end{align*}
\end{lemma}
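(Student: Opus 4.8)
The plan is to compute the conditional expectation directly by expanding the definition of $\widehat{\ell}_t$ and exploiting the fact that $\Sigma_t$ is exactly the covariance matrix of the feature $\phi(A_t, X_t)$ under $\pi_t$ and $\mathcal{G}$. First I would write, for a fixed context realization $x_0$ for $X_0$,
\begin{align*}
\sum_{a\in[K]}\pi_t(a\mid x_0)\widehat{\ell}_t(a, x_0)^2
= \sum_{a\in[K]}\pi_t(a\mid x_0)\left(\phi(a,x_0)^\top \widehat{\theta}_t\right)^2
= \widehat{\theta}_t^\top \left(\sum_{a\in[K]}\pi_t(a\mid x_0)\phi(a,x_0)\phi(a,x_0)^\top\right)\widehat{\theta}_t.
\end{align*}
Then, taking the expectation over $X_0\sim\mathcal{G}$ (which is independent of $\mathcal{F}_t$ and of $\widehat{\theta}_t$), the inner matrix averages to exactly $\Sigma_t$, so that the left-hand side of the claim equals $\mathbb{E}\!\left[\widehat{\theta}_t^\top \Sigma_t \widehat{\theta}_t \mid \mathcal{F}_{t-1}\right]$.

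Next I would substitute the definition $\widehat{\theta}_t = \Sigma_t^{-1}\phi(A_t,X_t)\ell_t(A_t,X_t)$, which gives
\begin{align*}
\widehat{\theta}_t^\top \Sigma_t \widehat{\theta}_t
= \ell_t(A_t,X_t)^2\, \phi(A_t,X_t)^\top \Sigma_t^{-1}\Sigma_t\Sigma_t^{-1}\phi(A_t,X_t)
= \ell_t(A_t,X_t)^2\, \phi(A_t,X_t)^\top \Sigma_t^{-1}\phi(A_t,X_t).
\end{align*}
By Assumption~\ref{asm:bounded_random} we have $\ell_t(A_t,X_t)^2 \le 1$, so this is bounded by $\phi(A_t,X_t)^\top \Sigma_t^{-1}\phi(A_t,X_t)$. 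Taking the conditional expectation of this trace-like quantity over the randomness of $X_t\sim\mathcal{G}$ and $A_t\sim\pi_t(X_t)$, and using the cyclic property of the trace,
\begin{align*}
\mathbb{E}\!\left[\phi(A_t,X_t)^\top \Sigma_t^{-1}\phi(A_t,X_t)\mid \mathcal{F}_{t-1}\right]
= \operatorname{tr}\!\left(\Sigma_t^{-1}\,\mathbb{E}\!\left[\phi(A_t,X_t)\phi(A_t,X_t)^\top\mid \mathcal{F}_{t-1}\right]\right)
= \operatorname{tr}\!\left(\Sigma_t^{-1}\Sigma_t\right) = \operatorname{tr}(I_d) = d,
\end{align*}
which is the desired bound.

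A small subtlety to handle carefully is the independent copy $X_0$ versus $X_t$: since $X_0$ has the same law as $X_t$ and both are independent of the policy's internal randomness given $\mathcal{F}_{t-1}$, one can either average over $X_0$ first (collapsing the weighted sum of rank-one matrices to $\Sigma_t$) or, equivalently, recognize $\sum_{a}\pi_t(a\mid x)\widehat{\ell}_t(a,x)^2 = \widehat{\theta}_t^\top \Sigma(\pi_t(x))\widehat{\theta}_t$ pointwise and then take $\mathbb{E}_{X_0}$. I expect the main (though modest) obstacle to be bookkeeping around which variables are fixed versus integrated and ensuring the use of $\ell_t^2\le 1$ rather than $\mathbb{E}[\varepsilon_t^2]$-type control — here we do not need any variance assumption on the noise because the loss itself is bounded by Assumption~\ref{asm:bounded_random}, which keeps the argument clean and distribution-free. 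No positivity or invertibility issue arises since $\Sigma_t$ is assumed exactly computable and is positive definite on the relevant subspace (using pseudoinverses if $\mathcal{Z}$ does not span $\mathbb{R}^d$, exactly as discussed after Assumption~\ref{asm:contextual_dist2}).
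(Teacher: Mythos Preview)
Your proposal is correct and follows essentially the same approach as the paper's proof: expand $\widehat{\ell}_t(a,X_0)^2$ as a quadratic form in $\widehat{\theta}_t$, average over $X_0$ to produce $\widehat{\theta}_t^\top \Sigma_t \widehat{\theta}_t$, substitute the definition of $\widehat{\theta}_t$, bound $\ell_t(A_t,X_t)^2\le 1$ via Assumption~\ref{asm:bounded_random}, and finish with the trace identity $\operatorname{tr}(\Sigma_t^{-1}\Sigma_t)=d$. Your handling of the $X_0$ versus $X_t$ bookkeeping is, if anything, slightly more explicit than the paper's.
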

\begin{proof}
    We can show the statement as follows:
    \begin{align*}
        &\mathbb{E}\left[\sum_{a\in[K]}\pi_t(a\mid X_0) \hat{\ell}_t(a, X_0)^2\mid \mathcal{F}_{t-1}\right]\\
        &= \mathbb{E}\left[ \sum_{a\in[K]} \pi_t(a\mid X_0) \hat{\theta}^\top_t \phi(a, X_0)\phi^\top(a, X_0)\hat{\theta}_t \mid \mathcal{F}_{t-1}\right]\\
        &= \mathbb{E}\left[\hat{\theta}^\top_t \Sigma_t \hat{\theta}_t\mid \mathcal{F}_{t-1}\right]\\
        &\le \mathbb{E}\left[\ell_t(A_t, X_t)\phi^\top(A_t, X_t) \Sigma^{-1}_t \Sigma_t \Sigma^{-1}_t \phi(A_t, X_t)\ell_t(A_t, X_t)\mid \mathcal{F}_{t-1}\right]\\
        &= \mathbb{E}\left[\phi^\top(A_t, X_t) \Sigma^{-1}_t \Sigma_t \Sigma^{-1}_t \phi(A_t, X_t)\mid \mathcal{F}_{t-1}\right]\\
        &= \mathbb{E}\left[\mathrm{tr}(\phi^\top(A_t, X_t) \Sigma^{-1}_t \Sigma_t \Sigma^{-1}_t \phi(A_t, X_t))\mid \mathcal{F}_{t-1}\right]\\
        &= \mathbb{E}\left[\mathrm{tr}(\Sigma^{-1}_t \phi(A_t, X_t) \phi^\top(A_t, X_t))\mid \mathcal{F}_{t-1}\right]\\
        &= \mathbb{E}\left[\mathrm{tr}(\Sigma^{-1}_t \Sigma_t)\mid \mathcal{F}_{t-1}\right]\\
        &= d.
    \end{align*}
\end{proof}

\subsection{Margin condition}
To utilize a margin condition for deriving a regret upper bound, we show the following lemma. 
\begin{lemma}
\label{lem:margin}
Let $U \in [0, 1]$ and $V \in \mathbb{R}$ be some random variables. Let $\Delta_* > 0$ be some universal constant.
For $\beta \in (0, \infty]$, if the random variable $U$ has a mean $\mathbb{E}[U] =\mu$ and the random variable $V$ satisfies
\begin{align*}
    F(h)\coloneqq \mathbb{P}\big(V \leq h\big) \leq \frac{1}{2}\left(\frac{h}{\Delta_*}\right)^\beta
\end{align*}
for all $h \in\left[0, \Delta_*\right]$, and $\mu \in \left[0, \frac{1}{2}\right]$ holds, then
    \begin{align*}
        \mathbb{E}\big[UV\big] \geq \frac{\Delta_* \beta }{ 2(1+\beta)}(2\mu)^{\frac{1+\beta}{\beta}}
    \end{align*}
    holds. 
\end{lemma}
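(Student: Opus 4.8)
\textbf{Proof proposal for Lemma~\ref{lem:margin}.}

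The plan is to bound $\mathbb{E}[UV]$ from below by isolating the contribution of the event $\{V \text{ large}\}$ and using the constraint on $U$'s mean to force $U$ to place enough mass there. First I would fix a threshold $h^* \in [0, \Delta_*]$ to be optimized later, and write
\begin{align*}
\mathbb{E}[UV] \;=\; \mathbb{E}[UV \,\mathbbm{1}\{V \le h^*\}] + \mathbb{E}[UV \,\mathbbm{1}\{V > h^*\}].
\end{align*}
The second term is at least $h^* \cdot \mathbb{E}[U\,\mathbbm{1}\{V>h^*\}] = h^*\big(\mu - \mathbb{E}[U\,\mathbbm{1}\{V\le h^*\}]\big)$, and since $U \in [0,1]$ we have $\mathbb{E}[U\,\mathbbm{1}\{V\le h^*\}] \le \mathbb{P}(V\le h^*) = F(h^*) \le \tfrac12 (h^*/\Delta_*)^\beta$. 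The first term is the delicate one because $V$ could be negative there; but $V \in \mathbb{R}$ is only assumed bounded below implicitly through... actually, re-reading, $V$ is a general real random variable, so I need to be careful. The cleanest route is to instead restrict attention to $V \ge 0$: split $\mathbb{E}[UV] \ge \mathbb{E}[UV\,\mathbbm{1}\{V \ge 0\}] - \mathbb{E}[U|V|\,\mathbbm{1}\{V<0\}]$. Hmm — but the lemma as used (via \eqref{eq:reg_lower} and Definition~\ref{def:margin}) applies with $V = \Delta(X_t) \ge 0$ always, so I expect the intended reading is that the relevant instantiation has $V$ nonnegative, or that one simply drops the negative part since $UV \ge U \cdot 0$ is false in general. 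I would therefore prove it under the reading that matters: treat $\mathbb{E}[UV\,\mathbbm{1}\{V\le h^*\}] \ge 0$ when $V \ge 0$ (which holds in all applications), so that
\begin{align*}
\mathbb{E}[UV] \;\ge\; h^*\left(\mu - \tfrac12 (h^*/\Delta_*)^\beta\right).
\end{align*}

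Next I would optimize over $h^* \in [0,\Delta_*]$. Setting $g(h) = h\mu - \tfrac{h^{1+\beta}}{2\Delta_*^\beta}$, differentiating gives $g'(h) = \mu - \tfrac{(1+\beta)h^\beta}{2\Delta_*^\beta} = 0$, i.e. $h^* = \Delta_*\left(\tfrac{2\mu}{1+\beta}\right)^{1/\beta}$. I need to check this lies in $[0,\Delta_*]$: since $\mu \le 1/2$ we have $\tfrac{2\mu}{1+\beta} \le \tfrac{1}{1+\beta} \le 1$, so $h^* \le \Delta_*$, good. Substituting back,
\begin{align*}
g(h^*) \;=\; \Delta_*\mu\left(\tfrac{2\mu}{1+\beta}\right)^{1/\beta} - \tfrac{\Delta_*}{2}\left(\tfrac{2\mu}{1+\beta}\right)^{(1+\beta)/\beta}
\;=\; \Delta_*\mu\left(\tfrac{2\mu}{1+\beta}\right)^{1/\beta}\left(1 - \tfrac{\mu}{1+\beta}\cdot\tfrac{1}{\mu}\right),
\end{align*}
wait, let me recompute cleanly: factor out $\left(\tfrac{2\mu}{1+\beta}\right)^{1/\beta}$, the bracket becomes $\Delta_*\mu - \tfrac{\Delta_*}{2}\cdot\tfrac{2\mu}{1+\beta} = \Delta_*\mu\left(1 - \tfrac{1}{1+\beta}\right) = \Delta_*\mu\cdot\tfrac{\beta}{1+\beta}$. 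Hence
\begin{align*}
\mathbb{E}[UV] \;\ge\; \frac{\Delta_*\beta\mu}{1+\beta}\left(\frac{2\mu}{1+\beta}\right)^{1/\beta}.
\end{align*}
Then I would massage this into the stated form $\tfrac{\Delta_*\beta}{2(1+\beta)}(2\mu)^{(1+\beta)/\beta}$: we have $\mu\left(\tfrac{2\mu}{1+\beta}\right)^{1/\beta} = \tfrac12(2\mu)\left(\tfrac{2\mu}{1+\beta}\right)^{1/\beta} = \tfrac12 (2\mu)^{(1+\beta)/\beta}(1+\beta)^{-1/\beta}$, so the bound equals $\tfrac{\Delta_*\beta}{2(1+\beta)}(2\mu)^{(1+\beta)/\beta}(1+\beta)^{-1/\beta}$, which is slightly smaller than the claimed expression by a factor $(1+\beta)^{-1/\beta} \le 1$ — so actually the stated inequality follows a fortiori once I note this extra factor only helps, i.e. drop it. The $\beta = \infty$ case I would handle separately: then $F(h) = 0$ for $h < \Delta_*$ forces $V \ge \Delta_*$ a.s., so $\mathbb{E}[UV] \ge \Delta_*\mu$, and the claimed RHS tends to $\tfrac{\Delta_*}{2}\lim_{\beta\to\infty}\tfrac{\beta}{1+\beta}(2\mu)^{(1+\beta)/\beta} = \tfrac{\Delta_*}{2}\cdot 2\mu = \Delta_*\mu$ (using $2\mu \le 1$ so the exponent limit is fine), matching.

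The main obstacle is the handling of the region $\{V \le h^*\}$ where $UV$ can a priori be negative: the lemma statement as written ($V \in \mathbb{R}$) is only literally true if one either assumes $V \ge 0$ or the negative contribution is controlled. I expect the proof to quietly use that in every application $V = \Delta(X_t) \ge 0$, so the honest move is to observe $UV\,\mathbbm{1}\{V \le h^*\} \ge 0$ under nonnegativity of $V$ (equivalently, absorb it: when $V<0$ one has $V \in [-1, 0]$ by Assumption~\ref{asm:bounded_random}-type bounds and $\mathbb{P}(V<0) = 0$ anyway since $F(0) = 0$). Indeed $F(0) \le \tfrac12 \cdot 0 = 0$ already forces $\mathbb{P}(V < 0) \le \mathbb{P}(V \le 0) = 0$ — so $V \ge 0$ almost surely follows from the hypothesis itself, and no separate assumption is needed. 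That is the one subtlety worth stating explicitly; the rest is the elementary one-variable optimization above.
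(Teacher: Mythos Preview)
Your proposal contains a genuine logical error at the final step. You correctly derive
\[
\mathbb{E}[UV] \;\ge\; \frac{\Delta_*\beta}{2(1+\beta)}(2\mu)^{(1+\beta)/\beta}\,(1+\beta)^{-1/\beta},
\]
which is \emph{smaller} than the claimed right-hand side by the factor $(1+\beta)^{-1/\beta} \in [e^{-1}, 1)$. You then claim the stated inequality ``follows a fortiori'' by dropping this factor. That is backwards: a lower bound that is smaller than the target is \emph{weaker}, not stronger; from $\mathbb{E}[UV] \ge A\cdot c$ with $c<1$ you cannot conclude $\mathbb{E}[UV] \ge A$. Your single-threshold argument is sound up to that point, but a single cut at $h^*$ simply cannot reach the stated constant.

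The paper recovers the exact constant by a rearrangement argument rather than one threshold. Setting $Z = F(V) \sim \mathrm{Unif}[0,1]$ and $G(z) = \mathbb{E}[U \mid Z = z]$, one minimizes $\int_0^1 G(w) F^{-1}(w)\,dw$ over $G \in [0,1]$ with $\int_0^1 G = \mu$; since $F^{-1}$ is monotone, the minimizer is $G^*(w) = \mathbbm{1}[w \le \mu]$, giving $\mathbb{E}[UV] \ge \int_0^\mu F^{-1}(w)\,dw$. Then $F^{-1}(w) \ge \Delta_*(2w)^{1/\beta}$ and direct integration yield the claimed bound without any loss. An equivalent fix within your own framework is to integrate over thresholds (layer cake): from $\mathbb{E}[U\,\mathbbm{1}\{V>h\}] \ge (\mu - F(h))_+$ and $V \ge 0$ a.s.\ one gets
\[
\mathbb{E}[UV] \;\ge\; \int_0^{\Delta_*(2\mu)^{1/\beta}} \Bigl(\mu - \tfrac{1}{2}(h/\Delta_*)^\beta\Bigr)\,dh \;=\; \frac{\Delta_*\beta}{2(1+\beta)}(2\mu)^{(1+\beta)/\beta},
\]
matching the lemma exactly.

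Two side remarks. Your observation that $F(0) \le 0$ forces $V \ge 0$ almost surely is correct and neatly resolves the sign concern you raised. And your weaker bound would in fact suffice for every downstream use in the paper, since all applications are inside $O(\cdot)$; it just does not prove Lemma~\ref{lem:margin} as stated.
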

\begin{proof}
    For simplicity, let us assume that $V$ has a density function. This assumption implies that the cumulative density function of $V$ is a continuous and monotonically increasing function. Then, $Z \coloneqq F(V)$ follows the uniform distribution $\mathrm{Unif}[0, 1]$ over the support $[0, 1]$. 
    
    Let us define
    \[
        G(z) \coloneqq \mathbb{E}[U\mid Z = z].
    \]
    Then, we have $G(z) \in [0, 1]$, and the expected values of $U$ and $UV$ are given as follows:
    \begin{align}
    \label{eq:lem_eq1}
        &\mathbb{E}\big[U\big] = \mathop{\mathbb{E}}_{W \sim  \mathrm{Unif}[0, 1]}[G(W)] = \int^1_0g(W)\mathrm{d}w = \mu,\\
       \label{eq:lem_eq2} 
        &\mathbb{E}\big[UV\big] = \mathop{\mathbb{E}}_{W \sim  \mathrm{Unif}[0, 1]}[G(W)F^{-1}(W)] = \int^1_0g(w)F^{-1}(w)\mathrm{d}w,
    \end{align}
    where $\mathop{\mathbb{E}}_{W \sim  \mathcal{J}}$ denotes an expectation operator for a random variable $W$ under a probability distribution $\mathcal{J}$. 

    Here, note that $F^{-1}$ is a monotonically decreasing function.
    Therefore, for $G(w)\in [0, 1]$,
    under \eqref{eq:lem_eq1},
    a function $G^*$ minimizes \eqref{eq:lem_eq2} if $G^*(w) \coloneqq \mathbbm{1}[w\leq \mu]$. 

    In conclusion, we obtain 
    \begin{align}
    \label{eq:conclu1}
        \mathbb{E}[UV]\geq \int^1_0 G^*(w)F^{-1}(w)\mathrm{d}z = \int^{\mu}_0F^{-1}(w)\mathrm{d}w.
    \end{align}
    Furthermore, because we assumed $F(h)\leq \frac{1}{2}\left(\frac{h}{\Delta_*}\right)^\beta$ for all $h \in\left[D, \Delta_*\right]$, we have $F^{-1}(w) \geq \Delta_*(2w)^{\frac{1}{\beta}}$ for $w \in [F(D), F(\Delta_*)] = \left[\frac{1}{2}\frac{D}{\Delta_*}, \frac{1}{2}\right]$, which implies 
    \begin{align}
    \label{eq:conclu2}
        \int^{\mu}_0F^{-1}(w)\mathrm{d}w \geq \int^{\mu}_0 \Delta_*(2w)^{\frac{1}{\beta}}\mathrm{d}w \geq \int^{\mu}_{\frac{1}{2}\frac{D}{\Delta_*}} \Delta_*(2w)^{\frac{1}{\beta}}\mathrm{d}w = \frac{\Delta_*}{2(1+\beta)}(2\mu)^{\frac{1+\beta}{\beta}},
    \end{align}
    where we used $0 \leq \mu \leq \frac{1}{2}$. 
    From \eqref{eq:conclu1} and \eqref{eq:conclu2}, we obtain the statement.
\end{proof}

\section{Proof of Theorems~~\ref{thm:regret_bound}--\ref{thm:regret_bound3}}
\label{appdx:proof_main}
This section provides the proofs for Theorem~\ref{thm:regret_bound}. In Appendix~\ref{sec:proof1}, we provide preliminary results for the proof. Then, in Appendix~\ref{appdx:stab_penal}, we decompose the regret into the stability and penalty terms. Lastly, in Appendix~\ref{sec:proof2}, we prove Theorem~\ref{thm:regret_bound}.

\subsection{Stability and penalty decomposition}
\label{appdx:stab_penal}
Following the standard analysis of FTRL methods, we decompose the regret into stability and penalty terms. 

Based on the result in Proposition~\ref{lem:basic}, let us define $r_T(x) $ as
\begin{align*}
    r_T(x) 
    & \coloneqq
        \sum^T_{t=1}\left( \gamma_t + \left\langle \widehat{\ell}_t(x), q_t(x) - q_{t+1}(x) \right\rangle- \frac{1}{\eta_t}D(q_{t+1}(x), q_t(x)) \right. 
    \left. - \left(\frac{1}{\eta_t} - \frac{1}{\eta_{t-1}}\right)H\big(q_t(x)\big) \right) .
\end{align*}
Using this function $r_T(x) $, we can bound $R_T$ as 
$
    R_T \leq \mathbb{E}\big[r_T(X_0)\big] .
$

The standard FTRL analysis breaks the pseudo-regret into penalty and stability terms. Following this approach, we decompose the pointwise regret upper bound $r_T(x)$ as follows: 
\begin{align}
\label{eq:stab_penal_decom}
    r_T(x) 
    &= \annot{
        \sum^T_{t=1} \left( \gamma_t + \left\langle \widehat{\ell}_t(x), q_t(x) - q_{t+1}(x) \right\rangle - \frac{1}{\eta_t}D(q_{t+1}(x), q_t(x)) \right) 
        }{=\ stability term}
    \nonumber\\
    &\ \ \ \ \ \ \ \ \ \ \ \ \ \ \ \ \ \ \ \ \ 
    +
    \annot{
            \left(
            - 
            \sum^T_{t=1}\left(\frac{1}{\eta_t} - \frac{1}{\eta_{t-1}}\right)H\big(q_t(x)\big)
            \right)
    }{=\ penalty term}.
\end{align}

Our remaining task is to derive upper bounds for the following terms and for all $x\in\mathcal{X}$:
\begin{align}
\label{eq:stability_term}
    \mathrm{stability}_t(x) &\coloneqq \gamma_t + \left\langle \widehat{\ell}_t(x), q_t(x) - q_{t+1}(x) \right\rangle - \frac{1}{\eta_t}D(q_{t+1}(x), q_t(x)),\\
    \label{eq:penalty_term}
    \mathrm{penalty}_t(x) & \coloneqq - \left(\frac{1}{\eta_t} - \frac{1}{\eta_{t-1}}\right)H\big(q_t(x)\big),
\end{align}

In Appendices~\ref{appdx:bound_stab}--~\ref{appdx:bound_penal}, we bound the stability and penalty terms, respectively.

\subsection{Bounding the stability term}
\label{appdx:bound_stab}
To bound the stability term, we obtain the following lemma. Recall that for all $x\in\mathcal{X}$ and $q_t(x)\in\mathcal{P}_K$, we defined $a_t^\dagger(x) \in\argmax_{a\in[K]}q_t(a\mid x)$.
\begin{lemma}
    \label{lem:upper_stability_sup2}
    Consider the $\alpha$-LC-Tsallis-INF.
    Suppose that Assumptions~\ref{asm:bounded_random}--\ref{asm:contextual_dist}, and \ref{asm:contextual_dist2} hold.
    If 
    \[
        \eta_t |\widehat{\ell}_t(a, x)|
        \leq
        \frac{1-\alpha}{4}\left(q_t(a \mid x)\right)^{\alpha - 1}
    \]
    holds for all 
    $a \in [K]
    $ 
    and for all $x \in \mathcal{X}$,
    we then have
    \begin{align*}
        &\left\langle \widehat{\ell}_t(x), q_t(x) - q_{t+1}(x) \right\rangle - \frac{1}{\eta_t}D(q_{t+1}(x), q_t(x))\\
        &\leq \frac{4 \eta_t }{1 - \alpha}\left(\sum_{a \neq a_t^\dagger(x)}\left(q_t(a\mid x)\right)^{2-\alpha}\ell^2_t(a\mid x) + \left(\min\big\{q_t(a_t^\dagger(x)\mid x), 1 - q_t(a_t^\dagger(x)\mid x)\big\}\right)^{2-\alpha}\ell^2_t(a_t^\dagger(x), x)\right).
    \end{align*}
\end{lemma}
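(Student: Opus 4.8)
The goal is a pointwise bound on the stability term $\langle \widehat{\ell}_t(x), q_t(x) - q_{t+1}(x)\rangle - \frac{1}{\eta_t}D(q_{t+1}(x),q_t(x))$ under the assumption that $\eta_t|\widehat{\ell}_t(a,x)| \le \frac{1-\alpha}{4}(q_t(a\mid x))^{\alpha-1}$ for all $a$ and $x$. I would fix $x$ throughout, write $p \coloneqq q_{t+1}(x)$, $q \coloneqq q_t(x)$, and $L \coloneqq \widehat{\ell}_t(x)$, so that the target is $\langle L, q - p\rangle - \frac{1}{\eta_t}D(p,q)$. The first step is the standard FTRL one-step inequality: since $q$ minimizes $\langle \sum_{s<t}\widehat{\ell}_s(x), \cdot\rangle + \frac{1}{\eta_t}\psi(\cdot)$ (ignoring for the moment the slight change in learning rate, which is absorbed into the penalty term in the decomposition in \eqref{eq:stab_penal_decom}) and $p$ minimizes the same plus $\langle L, \cdot\rangle$, a convex-duality / first-order-optimality argument gives $\langle L, q - p\rangle - \frac{1}{\eta_t}D(p,q) \le \frac{\eta_t}{?}\,\langle L, \cdot\rangle$-type quadratic, or more precisely one reduces to bounding $\sup_{p'}\{\langle L, q - p'\rangle - \frac{1}{\eta_t}D(p',q)\}$. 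The cleanest route is to use the separability of the Bregman divergence, $D(p,q) = \sum_a d(p_a,q_a)$ with $d(p,q) \le \frac{1-\alpha}{2}(\min\{p,q\})^{\alpha-2}(p-q)^2$ as recorded just before this lemma, and bound the stability term coordinatewise.

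The heart of the argument is therefore a scalar estimate: for each coordinate $a$, bound the contribution $L_a(q_a - p_a) - \frac{1}{\eta_t}d(p_a,q_a)$. Using $d(p_a,q_a)\ge \frac{1-\alpha}{2}(\max\{p_a,q_a\})^{\alpha-2}(p_a-q_a)^2$ is not what we want (wrong direction); rather, I would invoke a local-norm / Taylor argument showing that, under the step-size condition $\eta_t|L_a|\le\frac{1-\alpha}{4}q_a^{\alpha-1}$, the minimizer $p_a$ stays in a region where $\min\{p_a,q_a\}$ is comparable to $q_a$ — specifically $p_a \ge q_a/2$ or so — which lets us replace $(\min\{p_a,q_a\})^{\alpha-2}$ by a constant multiple of $q_a^{\alpha-2}$. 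Then $L_a(q_a-p_a) - \frac{1}{\eta_t}d(p_a,q_a) \le \sup_{u}\{L_a u - \frac{1-\alpha}{2\eta_t}q_a^{\alpha-2}u^2\} = \frac{\eta_t}{2(1-\alpha)}q_a^{2-\alpha}L_a^2$, and summing over $a$ gives $\frac{\eta_t}{2(1-\alpha)}\sum_a q_a^{2-\alpha}L_a^2$. The factor $4/(1-\alpha)$ in the statement (rather than $1/(2(1-\alpha))$) suggests a slightly lossier but more robust version of this scalar bound is used, which is fine.

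The remaining subtlety — and what I expect to be the main obstacle — is the special treatment of the largest coordinate $a_t^\dagger(x)$. The naive coordinatewise bound gives $\sum_a q_a^{2-\alpha}L_a^2$, but for the dominant arm $q_{a_t^\dagger}$ can be close to $1$, so $q_{a_t^\dagger}^{2-\alpha}$ need not be small; yet the change $q_{a_t^\dagger} - p_{a_t^\dagger}$ is tied by the simplex constraint to the sum of changes in the other coordinates, which are all small. The trick (as in Zimmert–Seldin and subsequent Tsallis-INF analyses) is: since $\sum_a p_a = \sum_a q_a = 1$, the deviation in the top coordinate satisfies $|p_{a_t^\dagger} - q_{a_t^\dagger}| = |\sum_{a\ne a_t^\dagger}(p_a - q_a)| \le \sum_{a\ne a_t^\dagger}|p_a - q_a|$, so its contribution is controlled both by $q_{a_t^\dagger}^{2-\alpha}$ (from the direct bound) and by $(1-q_{a_t^\dagger})^{2-\alpha}$ (from the constraint coupling) — taking the minimum of the two gives the $(\min\{q_{a_t^\dagger}, 1-q_{a_t^\dagger}\})^{2-\alpha}$ factor appearing in the lemma. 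I would carry this out by splitting into the two cases $q_{a_t^\dagger}\le 1/2$ (direct bound suffices) and $q_{a_t^\dagger}>1/2$ (use the simplex coupling, bounding $|p_{a_t^\dagger}-q_{a_t^\dagger}|^2 \lesssim \sum_{a\ne a_t^\dagger}(p_a-q_a)^2$ after a Cauchy–Schwarz, then feeding each $(p_a-q_a)^2$ back through its own $\frac{1}{\eta_t}d(p_a,q_a)$ term, which has room to spare). Assembling the two cases and the off-diagonal coordinates, and noting that $\widehat\ell_t$ here is written $\ell_t$ in the statement (a typo in the lemma — it should be $\widehat\ell_t$), yields exactly the claimed inequality.
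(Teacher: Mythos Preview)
Your high-level plan matches the paper's: reduce to bounding $\sup_{p\in\mathcal{P}_K}\{\langle \eta_t\widehat{\ell}_t(x),q-p\rangle - D(p,q)\}$, prove a scalar inequality of the form $\ell(q-p)-d(p,q)\le \frac{c}{1-\alpha}q^{2-\alpha}\ell^2$ under a one-sided bound on $\ell$ (this is Lemma~\ref{lem:upper_stability_sup1}, proved exactly by your ``maximizer stays near $q$'' argument), apply it coordinatewise, and handle the dominant coordinate via the simplex identity $q_{a^\dagger}-p_{a^\dagger}=\sum_{a\neq a^\dagger}(p_a-q_a)$ together with a case split on $q_{a^\dagger}\lessgtr 1/2$. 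You also correctly spot that the $\ell_t$ in the statement should be $\widehat{\ell}_t$.

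The one place your plan diverges from the paper, and where it would go wrong as written, is the execution for the dominant arm. You propose Cauchy--Schwarz to get $|p_{a^\dagger}-q_{a^\dagger}|^2\lesssim\sum_{a\neq a^\dagger}(p_a-q_a)^2$ and then absorb each $(p_a-q_a)^2$ into its $d(p_a,q_a)$. The unweighted Cauchy--Schwarz introduces a factor $K-1$ that is \emph{not} in the target bound, and it is not clear how you recover the $(1-q_{a^\dagger})^{2-\alpha}$ weight from there. The paper avoids this entirely: it splits $D(p,q)$ in half, keeps one half for the off-diagonal scalar bounds, and for the dominant arm writes
\[
\ell_{a^\dagger}(q_{a^\dagger}-p_{a^\dagger})-\tfrac12\sum_{a\neq a^\dagger}d(p_a,q_a)=\sum_{a\neq a^\dagger}\bigl[(-\ell_{a^\dagger})(q_a-p_a)-\tfrac12 d(p_a,q_a)\bigr],
\]
then re-applies the \emph{same} scalar lemma to each summand with loss $-\ell_{a^\dagger}$ (the needed one-sided condition $-\ell_{a^\dagger}\ge -\frac{1-\alpha}{4}q_a^{\alpha-1}$ follows from $|\ell_{a^\dagger}|\le\frac{1-\alpha}{4}q_{a^\dagger}^{\alpha-1}\le\frac{1-\alpha}{4}(1-q_{a^\dagger})^{\alpha-1}\le\frac{1-\alpha}{4}q_a^{\alpha-1}$ when $q_{a^\dagger}>1/2$). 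Summing gives $\frac{c}{1-\alpha}\ell_{a^\dagger}^2\sum_{a\neq a^\dagger}q_a^{2-\alpha}\le\frac{c}{1-\alpha}(1-q_{a^\dagger})^{2-\alpha}\ell_{a^\dagger}^2$ by superadditivity of $x\mapsto x^{2-\alpha}$. So the key move is to redistribute the \emph{linear} term, not the quadratic, and reapply the scalar lemma---no Cauchy--Schwarz, no $K$ factor.
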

The proof is shown in Appendix~\ref{appdx:lem:upper_stability_sup1}.
By using this lemma,
we show an upper bound for the stability term.
To provide it, we introduce the following notations: for each $t\in[T]$,
\begin{align*}
    &a^\dagger_t(x) \coloneqq \argmax_{a\in[K]} q_t(a\mid x),\\
    &\omega_t \coloneqq \sup_{x\in\mathcal{X}}\min_{a\in[K]}\left\{ 1 - q_t(a\mid x) \right\} = \sup_{x\in\mathcal{X}}\left\{1 - q_t\left(a^\dagger_t(x)\mid x\right)\right\}.
\end{align*}

\begin{lemma}[Upper bound for the stability term]
\label{lem:upper_stability} 
    Consider the $1/2$-LC-Tsallis-INF. Assumptions~\ref{asm:bounded_random}--\ref{asm:contextual_dist}, and \ref{asm:contextual_dist2} hold.
    It holds for all $x \in \mathcal{X}$ and $t \in [T]$ that
    \begin{align*}
        &\mathrm{stability}_t(x) \leq 2\eta_t\sqrt{\omega_t}\sum_{a\in[K]}\widehat{\ell}_t(a\mid x)^2\pi_t(a\mid x).
    \end{align*}
\end{lemma}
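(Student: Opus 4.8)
The plan is to derive Lemma~\ref{lem:upper_stability} from Lemma~\ref{lem:upper_stability_sup2} by specializing to $\alpha = 1/2$, verifying the hypothesis of that lemma, absorbing the exploration term $\gamma_t$ into the stability bound, and then upper bounding the two groups of terms by a common expression involving $\omega_t$.

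First I would check that the precondition of Lemma~\ref{lem:upper_stability_sup2} holds, i.e. $\eta_t|\widehat\ell_t(a,x)| \le \frac{1-\alpha}{4}(q_t(a\mid x))^{\alpha-1} = \frac{1}{8}(q_t(a\mid x))^{-1/2}$ for all $a$ and $x$. Using the second bound in Lemma~\ref{lem:upper_loss}, $|\widehat\ell_t(a,x)| \le \sqrt{1/(\lambda\gamma_t(1-\gamma_t)q_t(a\mid x)g(x))}$, so it suffices to have $\eta_t \sqrt{1/(\lambda\gamma_t(1-\gamma_t)g(x))} \le 1/8$, i.e. $\eta_t^2 \le \frac{\lambda\gamma_t(1-\gamma_t)g(x)}{64}$. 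Since $\gamma_t = 128L\eta_t^2/\lambda$, $1-\gamma_t \ge 1/2$, and $g(x) \ge 1/L$ under Assumption~\ref{asm:contextual_dist_stochastic} (and the analogous ``otherwise'' bounds on $\eta_t, \gamma_t$ otherwise), this reduces to a numerical inequality that the chosen $\eta_t, \gamma_t$ satisfy; I would spell out this substitution carefully since it is where the specific constant $128$ is used. With the precondition verified, Lemma~\ref{lem:upper_stability_sup2} gives
\begin{align*}
    &\left\langle \widehat\ell_t(x), q_t(x) - q_{t+1}(x)\right\rangle - \frac{1}{\eta_t}D(q_{t+1}(x), q_t(x))\\
    &\le 8\eta_t\left(\sum_{a\neq a_t^\dagger(x)}(q_t(a\mid x))^{3/2}\widehat\ell_t(a,x)^2 + \big(\min\{q_t(a_t^\dagger(x)\mid x), 1-q_t(a_t^\dagger(x)\mid x)\}\big)^{3/2}\widehat\ell_t(a_t^\dagger(x),x)^2\right).
\end{align*}

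Next I would bound each $(\cdot)^{3/2}$ coefficient by $\sqrt{\omega_t}$ times the corresponding $\pi_t$-weight. For $a \neq a_t^\dagger(x)$ we have $q_t(a\mid x) \le 1 - q_t(a_t^\dagger(x)\mid x) \le \omega_t$ (the second arm is not the argmax, so the complement of the max dominates it), hence $(q_t(a\mid x))^{3/2} = \sqrt{q_t(a\mid x)}\,q_t(a\mid x) \le \sqrt{\omega_t}\,q_t(a\mid x)$. For the $a_t^\dagger(x)$ term, $\min\{q_t(a_t^\dagger(x)\mid x), 1-q_t(a_t^\dagger(x)\mid x)\} \le 1 - q_t(a_t^\dagger(x)\mid x) \le \omega_t$ and also $\le q_t(a_t^\dagger(x)\mid x)$, so $(\min\{\cdots\})^{3/2} \le \sqrt{\omega_t}\,q_t(a_t^\dagger(x)\mid x)$. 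Then $q_t(a\mid x) \le \pi_t(a\mid x)/(1-\gamma_t) \le 2\pi_t(a\mid x)$, which combined with the factor $8$ gives $16\eta_t\sqrt{\omega_t}\sum_a \widehat\ell_t(a,x)^2\pi_t(a\mid x)$; I would either track constants to land exactly on the claimed $2\eta_t\sqrt{\omega_t}$ form (perhaps the lemma statement absorbs constants or uses a tighter step) or note that the displayed constant is schematic. Finally, the $\gamma_t$ term in $\mathrm{stability}_t(x)$: since $\gamma_t = 128L\eta_t^2/\lambda$ and $\eta_t \le \widetilde\eta_t$, one shows $\gamma_t$ is dominated by the main term using $\omega_t \ge $ the minimal exploration mass or by the fact that $\gamma_t$ is lower order; I would fold it in, yielding $\mathrm{stability}_t(x) \le 2\eta_t\sqrt{\omega_t}\sum_{a\in[K]}\widehat\ell_t(a\mid x)^2\pi_t(a\mid x)$.

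The main obstacle I expect is twofold: (i) rigorously confirming that the chosen learning rate and exploration rate satisfy the precondition of Lemma~\ref{lem:upper_stability_sup2} uniformly in both the ``Assumption~\ref{asm:contextual_dist_stochastic} holds'' and ``otherwise'' cases — this requires carefully using the two branch definitions of $\eta_t$ and the lower bound on $g(x)$ (which is only available under Assumption~\ref{asm:contextual_dist_stochastic}), so in the ``otherwise'' case one must instead use the first bound $|\widehat\ell_t(a,x)| \le 1/(\lambda\gamma_t)$ from Lemma~\ref{lem:upper_loss} together with $\eta_t \le \lambda/16$; and (ii) handling the $\gamma_t$ term and the precise constants so the final bound matches the stated form. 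The arm-index bookkeeping ($q_t(a\mid x) \le \omega_t$ for non-argmax arms, and the min-bound for the argmax arm) is the conceptually important but routine part.
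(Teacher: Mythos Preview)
Your approach is essentially identical to the paper's: verify the hypothesis of Lemma~\ref{lem:upper_stability_sup2} via Lemma~\ref{lem:upper_loss} and the choice $\gamma_t = 128L\eta_t^2/\lambda$, apply it with $\alpha=1/2$, bound $(q_t(a\mid x))^{3/2} \le \sqrt{\omega_t}\,q_t(a\mid x)$ for $a \neq a_t^\dagger(x)$ (and the analogous $\min\{\cdot\}^{3/2}\le\sqrt{\omega_t}\,q_t(a_t^\dagger(x)\mid x)$ for the argmax arm), and finish with $q_t \le 2\pi_t$.

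Regarding your two stated obstacles, neither is a real obstruction. For (ii), the paper does \emph{not} fold $\gamma_t$ into the right-hand side: despite $\mathrm{stability}_t(x)$ being defined to include $\gamma_t$, the proof only bounds $\langle\widehat\ell_t(x),q_t(x)-q_{t+1}(x)\rangle - \eta_t^{-1}D(q_{t+1}(x),q_t(x))$, and $\gamma_t$ is carried separately into the proof of Theorem~\ref{thm:regret_bound} (where it contributes the $\kappa$ term). So you should not try to absorb it. Your constant $16$ versus the stated $2$ reflects a dropped factor $4/(1-\alpha)=8$ in the paper's displayed chain; it is immaterial for the $O(\cdot)$ results downstream. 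For (i), the paper's verification of the precondition uses only the second bound in Lemma~\ref{lem:upper_loss} together with $g(x)\ge 1/L$ and $1-\gamma_t\ge 1/2$, giving $\eta_t\sqrt{2L/(\lambda\gamma_t q_t(a\mid x))}\le \tfrac{1}{8}(q_t(a\mid x))^{-1/2}$ exactly from $\gamma_t=128L\eta_t^2/\lambda$; it does not separately treat the ``otherwise'' branch, so you need not either for this lemma.
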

\begin{proof} 
We first check the condition of Lemma~\ref{lem:upper_stability_sup2}.
From Lemma~\ref{lem:upper_loss}
and the definition of $\gamma_t$ in \eqref{eq:q_opt},
we have
\begin{align*}
    \eta_t \left|\widehat{\ell}_t(a, x)\right| \le 
    \eta_t
    \sqrt{ \frac{1}{  \lambda \gamma_t (1 - \gamma_t) q_t(a \mid x) g(x)  }}
    \le
    \eta_t
    \sqrt{ \frac{2L}{ \lambda \gamma_t  q_t(a \mid x) }}
    \le
    \frac{1}{8}
    \left(q_t(a \mid x)\right)^{-1/2}.
\end{align*}
Hence,
for any $x \in \mathcal{X}$
we can apply Lemma~\ref{lem:upper_stability_sup2} with $\alpha = 1/2$ to obtain
the following:
\begin{align*}
    &\left\langle \widehat{\ell}_t(x), q_t(x) - q_{t+1}(x) \right\rangle - \frac{1}{\eta_t}D(q_{t+1}(x), q_t(x))\\
    &= \frac{1}{\eta_t}\left( \left\langle \eta_t\widehat{\ell}_t(x), q_t(x) - q_{t+1}(x) \right\rangle - D(q_{t+1}(x), q_t(x))\right)\\
    &\leq \eta_t\left( \sum_{a\neq a^\dagger_t(x)}\left(\widehat{\ell}_t\big(a\mid x\big)\right)^2\left(q_t\big(a\mid x\big)\right)^{3/2} + \left(\widehat{\ell}_t\big(a^\dagger_t(x) \mid x\big)\right)^2\min\Big\{q_t\big(a^\dagger_t(x) \mid x\big), 1 - q_t\big(a^\dagger_t(x) \mid x\big)\Big\}^{3/2}\right)\\
    &\leq \eta_t\left(\sum_{a\neq a^\dagger_t(x)}\left(\widehat{\ell}_t\big(a\mid x\big)\right)^2q_t\big(a\mid x\big)\sqrt{\omega_t} + \left(\widehat{\ell}_t\big(a^\dagger_t(x) \mid x\big)\right)^2q_t\left( a^\dagger_t(x) \right) \Big(1 - q_t\big(a^\dagger_t(x) \mid x\big)\Big)^{1/2}\right)\\
    &\leq \sqrt{\omega_t}\eta_t \sum_{a\in[K]}\widehat{\ell}^2_t(a\mid x)q_t(a\mid x) \leq 2\sqrt{\omega_t}\eta_t\sum_{a\in[K]}\widehat{\ell}^2_t(a\mid x) \pi_t(a\mid x),
\end{align*}
where the second inequality follows from
    $
    q_t(a \mid x) \le 1 - q_t(a^{\dagger}_t(x) \mid x) \le \omega_t
    $
that holds for all $x \in \mathcal{X}$ and $a \in [K] \setminus \{ a^{\dagger}_t(x) \}$,
and the last inequality follows from $\pi_t \geq (1 - \gamma_t) q_t \geq q_t/2$.

This completes the proof.
\end{proof}

\subsection{Bounding the penalty term}
\label{appdx:bound_penal}
Next, we bound $\mathrm{penalty}_t(x)$. 
\begin{lemma}[Upper bound of the penalty term]
\label{lem:upper_penalty}
For all $\alpha \in [0, 1]$, the penalty term of $\alpha$-LC-Tsallis-INF satisfies
    \begin{align*}
        \mathrm{penalty}_t(x) &= - \left(\frac{1}{\eta_t} - \frac{1}{\eta_{t-1}}\right)H\big(q_t(x)\big)\leq 2\left(\frac{1}{\eta_t} - \frac{1}{\eta_{t-1}}\right)\sqrt{K\omega_t},
    \end{align*}
    where recall that the penalty term $\mathrm{penalty}_t(x)$ is defined in \eqref{eq:penalty_term}. 
\end{lemma}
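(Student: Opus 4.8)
The plan is to bound $\mathrm{penalty}_t(x) = -\left(\frac{1}{\eta_t} - \frac{1}{\eta_{t-1}}\right)H(q_t(x))$ by controlling $-H(q_t(x))$. Recall $H(q(x)) = 2\left(1 - \sum_{a\in[K]}\sqrt{q(a\mid x)}\right)$, so $-H(q(x)) = 2\left(\sum_{a\in[K]}\sqrt{q(a\mid x)} - 1\right)$. Since $\eta_t$ is non-increasing in $t$ (because $\widetilde{\eta}_t = K^{1/4}/\sqrt{dt}$ decreases and the caps in the definition of $\eta_t$ are constants), the coefficient $\frac{1}{\eta_t} - \frac{1}{\eta_{t-1}} \geq 0$, so it suffices to show $-H(q_t(x)) \leq 2\sqrt{K\omega_t}$ for every $x$, i.e. $\sum_{a\in[K]}\sqrt{q_t(a\mid x)} - 1 \leq \sqrt{K\omega_t}$.

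First I would fix $x$ and write $a^\dagger = a^\dagger_t(x) \in \argmax_a q_t(a\mid x)$, and set $p_a = q_t(a\mid x)$. The quantity to bound is $\sum_{a\neq a^\dagger}\sqrt{p_a} + \sqrt{p_{a^\dagger}} - 1$. Using $\sqrt{p_{a^\dagger}} \leq 1$ (since $p_{a^\dagger}\le 1$ implies $\sqrt{p_{a^\dagger}} \le 1$, in fact more simply $p_{a^\dagger} \le 1$ so we can also just use $\sqrt{p_{a^\dagger}} - p_{a^\dagger}$... ) — the cleanest route is: $\sqrt{p_{a^\dagger}} - 1 \le 0$, hence it suffices to bound $\sum_{a\neq a^\dagger}\sqrt{p_a}$. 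For the off-optimal arms, $\sum_{a\neq a^\dagger} p_a = 1 - p_{a^\dagger} \leq \omega_t$ by the definition of $\omega_t$, and each term $p_a \le \omega_t$ as well (since $p_a \le 1 - p_{a^\dagger} \le \omega_t$). By Cauchy--Schwarz over the at most $K-1$ off-optimal arms, $\sum_{a\neq a^\dagger}\sqrt{p_a} \le \sqrt{(K-1)\sum_{a\neq a^\dagger} p_a} \le \sqrt{(K-1)\omega_t} \le \sqrt{K\omega_t}$. Combining, $\sum_{a}\sqrt{p_a} - 1 = (\sqrt{p_{a^\dagger}} - 1) + \sum_{a\neq a^\dagger}\sqrt{p_a} \le \sqrt{K\omega_t}$, which gives $-H(q_t(x)) \le 2\sqrt{K\omega_t}$.

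Then I would multiply by the non-negative factor $\frac{1}{\eta_t} - \frac{1}{\eta_{t-1}}$ to conclude $\mathrm{penalty}_t(x) = -\left(\frac{1}{\eta_t} - \frac{1}{\eta_{t-1}}\right)H(q_t(x)) \le 2\left(\frac{1}{\eta_t} - \frac{1}{\eta_{t-1}}\right)\sqrt{K\omega_t}$, noting $\omega_t = \sup_{x'\in\mathcal{X}}\min_{a}\{1 - q_t(a\mid x')\} \ge 1 - q_t(a^\dagger_t(x)\mid x)$ so the per-$x$ bound with $1 - p_{a^\dagger}$ is indeed dominated by $\omega_t$. For the convention $1/\eta_0 = 0$, the $t=1$ term is $-\frac{1}{\eta_1}H(q_1(x))$, and $H(q_1(x)) \ge 0$ (since by concavity of $\sqrt{\cdot}$ and Jensen, $\sum_a\sqrt{p_a} \le \sqrt{K}\cdot\sqrt{\sum_a p_a/K}\cdot\sqrt{K}$... more directly $\sum_a \sqrt{p_a} \le \sqrt{K}$ is not enough; instead note $q_1$ is the uniform-type minimizer of $\psi$ so $H \ge 0$ trivially because $\sqrt{p_a} \ge p_a$ gives $\sum_a \sqrt{p_a} \ge \sum_a p_a = 1$), so that term is $\le 0 \le 2\frac{1}{\eta_1}\sqrt{K\omega_1}$ and the bound still holds.

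The main obstacle is genuinely minor here: it is just making sure the monotonicity $\eta_t \le \eta_{t-1}$ is justified from the explicit formula for $\eta_t$ (so the coefficient is non-negative and the inequality does not flip), and handling the $t=1$ edge case with $1/\eta_0 = 0$ cleanly. The core inequality $\sum_a \sqrt{q_t(a\mid x)} - 1 \le \sqrt{K\omega_t}$ is a one-line Cauchy--Schwarz argument once one isolates the largest coordinate. I do not anticipate any real difficulty beyond bookkeeping.
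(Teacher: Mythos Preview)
Your proof is correct and essentially identical to the paper's: isolate the largest coordinate $a^\dagger$, drop $\sqrt{q_t(a^\dagger\mid x)} - 1 \le 0$, and apply Cauchy--Schwarz to the remaining coordinates to obtain $-H(q_t(x)) \le 2\sqrt{(K-1)(1-q_t(a^\dagger_t(x)\mid x))} \le 2\sqrt{K\omega_t}$. Your $t=1$ edge-case paragraph has a sign slip (the inequality $\sum_a \sqrt{p_a} \ge 1$ gives $H(q)\le 0$, not $\ge 0$), but that discussion is unnecessary anyway since the main argument already covers $t=1$ via $1/\eta_1 - 1/\eta_0 = 1/\eta_1 \ge 0$.
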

\begin{proof}
    This statement directly follows from the following inequality: for all $x\in\mathcal{X}$,
    \begin{align*}
        - H(q_t(x)) &= 2\left(\sum_{a\in[K]}\sqrt{q_t(a\mid x)} - 1\right)\\
        &\leq 2\sum_{a\neq a^\dagger(x)}\sqrt{q_t(a\mid x)}\\
        &\leq 2\sqrt{(K-1)\sum_{a\neq a^\dagger(x)}q_t(a\mid x)}\\
        &= 2\sqrt{(K-1)(1-q_t(a^\dagger_t(x)\mid x))} \leq 2\sqrt{K\omega_t}.
    \end{align*}
\end{proof}

\subsection{Proof of Theorem~\ref{thm:regret_bound}}
\label{sec:proof2}

Then, we prove Theorem~\ref{thm:regret_bound} as follows.  
\begin{proof}
From Lemma~\ref{lem:upper_stability}, 
we have
\begin{align*}
    \mathrm{stability}_t(x) \leq 2\eta_t\sqrt{\omega_t}\sum_{a\in[K]}\widehat{\ell}^2_t(a\mid x)\pi_t(a\mid x).
\end{align*}
    
From Lemma~\ref{lem:upper_penalty}, we have
\begin{align*}
    \mathrm{penalty}_t(x) \leq 2\left(\frac{1}{\eta_t} - \frac{1}{\eta_{t-1}}\right)\sqrt{K \omega_t}.
\end{align*}
Therefore,
we can bound the pointwise regret as
\begin{align*}
    r_T(x) &\leq
    \sum_{t=1}^T
    \left(
    2\sqrt{\omega_t}\eta_t\sum_{a\in[K]}\widehat{\ell}^2_t(a\mid x) \pi_t(a\mid x) + 2\left(\frac{1}{\eta_t} - \frac{1}{\eta_{t-1}}\right)\sqrt{K\omega_t}  + \gamma_t
    \right).
\end{align*}

We hence have
\begin{align}
    \nonumber
    R_T 
    &\leq \sum^T_{t=1}\mathbb{E}\left[2 \sqrt{\omega_t}\eta_t\sum_{a\in[K]}\widehat{\ell}_t(a\mid X_0)^2 \pi_t(a\mid X_0) + 2\left(\frac{1}{\eta_t} - \frac{1}{\eta_{t-1}}\right)\sqrt{K\omega_t}  + \gamma_t\right]
    \\
    &
    \le \sum^T_{t=1}\mathbb{E}\left[2d\sqrt{\omega_t}\eta_t
    + 2\left(\frac{1}{\eta_t} - \frac{1}{\eta_{t-1}}\right)\sqrt{K\omega_t}  + \gamma_t\right] ,
    \nonumber
    \\
    &
    \le 
    O \left(
        \mathbb{E}\left[
            \sum^T_{t=1} d \eta_t \sqrt{\omega_t}
            +
            \sum^T_{t=1} \left(\frac{1}{\eta_t} - \frac{1}{\eta_{t-1}}\right)\sqrt{K} \sqrt{\omega_t} 
        \right]
        +
        \sum_{t=1}^T \gamma_t
    \right),
    \label{eq:regomega0}
\end{align}
where the second inequality follows from Lemma~\ref{prp:lemma6_neu}.

Recall here that when Assumption~\ref{asm:contextual_dist_stochastic} holds, parameters $\eta_t$ and $\gamma_t$ are defined as
\begin{align*}
    \widetilde{\eta}_t \coloneqq \frac{K^{1/4}}{\sqrt{ d t} },
    \quad
    \eta_t \coloneqq \min \left\{ \widetilde{\eta}_t,~\frac{1}{16}\sqrt{\frac{\lambda}{L}} \right\}, 
    \quad
    \gamma_t \coloneqq \frac{128 L \eta_t^2}{\lambda} 
    \le
    \frac{128 \sqrt{K} L}{ \lambda d t}
\end{align*}
for $t=1,2, \ldots$ and $1/\eta_0 = 0$.
In this case, we have
\begin{align}
    \label{eq:regomega01}
    \sum^T_{t=1} d \eta_t \sqrt{\omega_t}
    \le
    \sum^T_{t=1} d \widetilde{\eta}_t \sqrt{\omega_t}
    =
    \sqrt{d}K^{1/4}
    \sum^T_{t=1}  \frac{\sqrt{\omega_t}}{\sqrt{t}},
\end{align}
and
\begin{align}
    \nonumber
    \sum^T_{t=1} \left(\frac{1}{\eta_t} - \frac{1}{\eta_{t-1}}\right)\sqrt{K} \sqrt{\omega_t} 
    &
    \le
    \sum^T_{t=2} \left(\frac{1}{\eta_t} - \frac{1}{\eta_{t-1}}\right)\sqrt{K} \sqrt{\omega_t} 
    +
    \frac{1}{\eta_1} \sqrt{K} \omega_1
    \\
    \nonumber
    &
    \le
    \sum^T_{t=2} \left(\frac{1}{\widetilde{\eta}_t} - \frac{1}{\widetilde{\eta}_{t-1}}\right)\sqrt{K} \sqrt{\omega_t} 
    +
    \max
    \left\{
        \frac{1}{\widetilde{\eta}_1} ,
        16 \sqrt{\frac{L}{\lambda}}
    \right\}
    \sqrt{K} \omega_1
    \\
    &
    =
    O\left(
        \sqrt{d} K^{1/4}
        \sum^T_{t=1}  \frac{\sqrt{\omega_t}}{\sqrt{t}}
        +
        \sqrt{\frac{KL}{\lambda}}
    \right).
    \label{eq:regomega02}
\end{align}
It also holds that
\begin{align}
    \sum_{t=1}^T \gamma_t
    \le
    \frac{128\sqrt{K}L}{ \lambda d}
    \sum_{t=1}^T
    \frac{1}{t}
    =
    O \left(
    \frac{\sqrt{K}L}{\lambda d}
    \log T
    \right).
    \label{eq:regomega03}
\end{align}
When Assumption~\ref{asm:contextual_dist_stochastic} holds, by combining \eqref{eq:regomega0}, \eqref{eq:regomega01}, \eqref{eq:regomega02}, and \eqref{eq:regomega03},
we obtain
\begin{align*}
    R_T= O\left(\mathbb{E}\left[\sum^T_{t=1}\frac{\sqrt{\sqrt{K} d \omega_t}}{\sqrt{t}}\right]
        +
        \sqrt{\frac{KL}{\lambda}}
        +
        \frac{\sqrt{K}L}{\lambda d} \log T
    \right).
\end{align*}

When Assumption~\ref{asm:contextual_dist_stochastic} does not hold, recall that $\eta_t$ is defined as
\[\eta_t \coloneqq \min \left\{ \widetilde{\eta}_t,~\frac{\lambda}{16} \right\}.\] 
Using an argument similar to the case when Assumption~\ref{asm:contextual_dist_stochastic} holds, we obtain 
\begin{align*}
    R_T= O\left(\mathbb{E}\left[\sum^T_{t=1}\frac{\sqrt{\sqrt{K} d \omega_t}}{\sqrt{t}}\right]
        +
        \frac{1}{\lambda}\left(\sqrt{K}
        +
        \frac{\sqrt{\sqrt{K}T}}{\sqrt{d}}
    \right)\right).
\end{align*}

\end{proof}

\subsection{Proof of Theorem~\ref{thm:regret_bound2}}

\begin{proof}
    Recall that $\omega_t \leq 1$. By replacing $\omega_t$ with $1$ in Theorem~\ref{thm:regret_bound}, we can directly obtain a regret upper bound in Theorem~\ref{thm:regret_bound2}. 
\end{proof}

\subsection{Proof of Theorem~\ref{thm:regret_upper_margin}}
\label{appdx:regret_upper_margin}
\begin{lemma}
    \label{lem:omegaRmargin}
    Under the margin condition given in Definition~\ref{def:margin},
    we have
    \begin{align}
        \mathbb{E} \left[
            \sum_{t=1}^T \omega_t
        \right]
        \le
        L
        T^{\frac{1}{1+\beta}}
        \left(
            \frac{2 (1+\beta) R_T }{\beta \Delta^* }
        \right)^{\frac{\beta}{1+\beta}}.
    \end{align}
\end{lemma}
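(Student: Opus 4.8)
The plan is to establish a per-round inequality that lower-bounds the round-$t$ regret contribution by a power of $\omega_t$, and then aggregate over $t$ with Hölder's and Jensen's inequalities. First I would start from the stochastic-regime regret decomposition \eqref{eq:reg_lower}: since $\pi_t = (1-\gamma_t)q_t + \gamma_t e^*$ with $\gamma_t \le 1/2$, for every $x\in\mathcal{X}$ we have $1 - \pi_t(\rho^*_T(x)\mid x) \ge (1-\gamma_t)\bigl(1 - q_t(\rho^*_T(x)\mid x)\bigr) \ge \tfrac12\bigl(1 - q_t(\rho^*_T(x)\mid x)\bigr)$, hence
\[
R_T \;\ge\; \frac12\,\mathbb{E}\Bigl[\sum_{t=1}^T \Delta(X_t)\bigl(1 - q_t(\rho^*_T(X_t)\mid X_t)\bigr)\Bigr].
\]

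Next, fix $t$ and condition on $\mathcal{F}_{t-1}$; in the stochastic regime $\theta_t\equiv\theta_0$, so $q_t(\cdot\mid\cdot)$ and $\rho^*_T$ are deterministic given $\mathcal{F}_{t-1}$, while $X_t\sim\mathcal{G}$ is independent of $\mathcal{F}_{t-1}$. I would apply Lemma~\ref{lem:margin} conditionally with $U := 1 - q_t(\rho^*_T(X_t)\mid X_t)\in[0,1]$, $V := \Delta(X_t)$ (which obeys the margin condition of Definition~\ref{def:margin}), and $\mu_t := \mathbb{E}[U\mid\mathcal{F}_{t-1}] = \sum_{x}g(x)\bigl(1 - q_t(\rho^*_T(x)\mid x)\bigr)$, obtaining
\[
\mathbb{E}\bigl[\Delta(X_t)\bigl(1 - q_t(\rho^*_T(X_t)\mid X_t)\bigr)\,\big|\,\mathcal{F}_{t-1}\bigr] \;\ge\; \frac{\Delta_*\beta}{2(1+\beta)}\,(2\mu_t)^{\frac{1+\beta}{\beta}}
\]
when $\mu_t\le 1/2$ (the complementary case $\mu_t>1/2$ being covered by the same estimate with the integral in the proof of Lemma~\ref{lem:margin} truncated at $1/2$, together with $\omega_t\le1$). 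Then, working under the finite-support assumption (Assumption~\ref{asm:contextual_dist_stochastic}, which is in force here since $L$ appears in the statement), I pick $\hat x_t\in\mathcal{X}$ attaining $\omega_t = \min_{a}\bigl(1 - q_t(a\mid \hat x_t)\bigr)$; since $q_t(\rho^*_T(\hat x_t)\mid \hat x_t)\le\max_a q_t(a\mid \hat x_t)$ we get $1 - q_t(\rho^*_T(\hat x_t)\mid \hat x_t)\ge\omega_t$, and with $g(\hat x_t)\ge 1/L$ this gives $\mu_t \ge g(\hat x_t)\bigl(1 - q_t(\rho^*_T(\hat x_t)\mid \hat x_t)\bigr) \ge \omega_t/L$.

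Combining the last two displays with the first, taking full expectations and summing over $t\in[T]$ yields
\[
R_T \;\ge\; \frac{\Delta_*\beta}{4(1+\beta)}\Bigl(\frac{2}{L}\Bigr)^{\frac{1+\beta}{\beta}}\,\mathbb{E}\Bigl[\sum_{t=1}^T \omega_t^{\frac{1+\beta}{\beta}}\Bigr].
\]
Finally I would apply Hölder's inequality with conjugate exponents $\tfrac{1+\beta}{\beta}$ and $1+\beta$, which gives pathwise $\sum_{t=1}^T \omega_t \le T^{\frac{1}{1+\beta}}\bigl(\sum_{t=1}^T \omega_t^{\frac{1+\beta}{\beta}}\bigr)^{\frac{\beta}{1+\beta}}$, and then take expectations using Jensen's inequality for the concave map $y\mapsto y^{\frac{\beta}{1+\beta}}$ to get $\mathbb{E}[\sum_t\omega_t] \le T^{\frac{1}{1+\beta}}\bigl(\mathbb{E}[\sum_t\omega_t^{\frac{1+\beta}{\beta}}]\bigr)^{\frac{\beta}{1+\beta}}$. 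Substituting the displayed lower bound for $\mathbb{E}[\sum_t\omega_t^{\frac{1+\beta}{\beta}}]$ and simplifying the constant (noting $2^{\frac{\beta-1}{1+\beta}}\le 2^{\frac{\beta}{1+\beta}}$) produces the claimed inequality.

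The main obstacle I anticipate is the middle step: converting the $\mathcal{G}$-averaged quantity $\mu_t$ into a lower bound proportional to the worst-context quantity $\omega_t$. This is exactly where the finite-support assumption with $g(x)\ge 1/L$ is indispensable and is the source of the factor $L$; without it one only controls an average of $1 - q_t(\rho^*_T(x)\mid x)$ over $\mathcal{G}$, which need not dominate $\omega_t$. Secondary technical points are justifying the conditional use of Lemma~\ref{lem:margin} (measurability of $q_t$ and $\rho^*_T$ w.r.t.\ $\mathcal{F}_{t-1}$, independence of $X_t$), treating the case $\mu_t > 1/2$, and checking the boundary case $\beta=\infty$ (where $(2\mu_t)^{\frac{1+\beta}{\beta}}$ degenerates to $2\mu_t$ and $T^{\frac{1}{1+\beta}}$ to $1$).
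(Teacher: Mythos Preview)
Your proposal is correct and follows essentially the same route as the paper: start from the regret lower bound, pass from $\pi_t$ to $q_t$ via $\gamma_t\le 1/2$, apply Lemma~\ref{lem:margin} to get a per-round bound of the form $(\text{const})\cdot(\text{mean})^{(1+\beta)/\beta}$, use $g(x)\ge 1/L$ to relate the $\mathcal{G}$-averaged quantity to $\omega_t$, and finish with a power-mean/Jensen step. The only organizational differences are that the paper first replaces $\rho^*_T$ by $a^\dagger_t$ and works with the deterministic sequence $u_t=\mathbb{E}[1-q_t(a^\dagger_t(X_t)\mid X_t)]$ before applying Jensen to $\sum_t u_t^{(1+\beta)/\beta}$, whereas you apply Lemma~\ref{lem:margin} conditionally, substitute $\mu_t\ge\omega_t/L$ pathwise, and use H\"older $+$ Jensen at the end---both orderings are equivalent.
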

\begin{proof}
    Under the margin condition,
    we have
    \[
        R_T \geq \mathbb{E}\left[\sum^T_{t=1}\Delta(X_t) \sum_{a\neq \rho^*_T(X_t)}\pi_t(a\mid X_t)\right].
    \]
    From this,
    by using $a^\dagger_t$,
    the regret is lower bounded as 
    \begin{align*}
        R_T
        & \geq \mathbb{E}\left[\sum^T_{t=1}\Delta(X_t) \sum_{a\neq \rho^*_T(X_t)}\pi_t(a\mid X_t)\right] \geq \frac{1}{2} \mathbb{E}\left[\sum^T_{t=1}\Delta(X_t) \sum_{a\neq \rho^*_T(X_t)}q_t(a\mid X_t)\right]\\
        & \geq \frac{1}{2}\mathbb{E}\left[\sum^T_{t=1}\Delta(X_t) \big(1 - q_t(\rho^*_T(X_t)\mid X_t)\big)\right] \geq \frac{1}{2}
        \mathbb{E}\left[\sum^T_{t=1}\Delta(X_t) \big(1 - q_t(a^\dagger_t(X_t)\mid X_t)\big)\right],
    \end{align*}
    where we used $\pi_t(a\mid x) \geq (1-\gamma_t)q_t(a\mid x) \geq q_{t}(a\mid x)/2$ for all $a\in[K]$ and $x\in\mathcal{X}$.
    Define $u_t$ by
    \begin{align}
        \label{eq:defut}
        u_t
        =
        \mathbb{E} \left[ 1 - q_t \left( a_t^{\dagger}(X_t) \mid X_t \right) \right].
    \end{align}
    Then,
    from Lemma~\ref{lem:margin},
    under the margin condition given in Definition~\ref{def:margin},
    we have
    \begin{align*}
        \mathbb{E}\left[\Delta(X_t) \big(1 - q_t(a^\dagger_T(X_t)\mid X_t)\big)\right]
        \ge
        \Delta_* \frac{\beta}{1+\beta}
        \left(
            \mathbb{E} \left[ 1 - q_t \left( a_t^{\dagger}(X_t) \mid X_t \right) \right]
        \right)^{\frac{1+\beta}{\beta}}
        =
        \Delta_* \frac{\beta}{1+\beta} 
        u_t^{\frac{1+\beta}{\beta}} .
    \end{align*}
    Hence,
    under the margin condition,
    we have
    \begin{align}
        R_T
        &
        \ge
        \frac{\beta \Delta_*}{2(1+\beta)} 
        \sum_{t=1}^T
        u_t^{\frac{1+\beta}{\beta}} 
        =
        \frac{\beta \Delta_* T}{2(1+\beta)} 
        \frac{1}{T}
        \sum_{t=1}^T
        u_t^{\frac{1+\beta}{\beta}} 
        \\
        &
        \ge
        \frac{\beta \Delta_* T}{2(1+\beta)} 
        \left(
            \frac{1}{T}
            \sum_{t=1}^T u_t
        \right)^{\frac{1+\beta}{\beta}}
        =
        \frac{\beta \Delta_* T^{-\frac{1}{\beta}}}{2(1+\beta)} 
        \left(
            \sum_{t=1}^T u_t
        \right)^{\frac{1+\beta}{\beta}},
    \end{align}
    where the second inequality follows from Jensen's inequality.
    In addition,
    as we have
    \begin{align}
        \nonumber
        u_t 
        &
        = 
        \mathbb{E} \left[
            \sum_{x \in \mathcal{X}} g(x)
            \left( 1 - q_t \left( a_t^{\dagger}(x) \mid x \right) \right)
        \right]
        \ge
        \frac{1}{L}
        \mathbb{E} \left[
            \sum_{x \in \mathcal{X}} 
            \left( 1 - q_t \left( a_t^{\dagger}(x) \mid x \right) \right)
        \right]
        \\
        &
        \ge
        \frac{1}{L}
        \mathbb{E} \left[
            \sup_{x \in \mathcal{X}} 
            \left( 1 - q_t \left( a_t^{\dagger}(x) \mid x \right) \right)
        \right]
        =
        \frac{1}{L}
        \mathbb{E} \left[
            \omega_t
        \right],
        \label{eq:boundut}
    \end{align}
    we have
    \begin{align}
        R_T
        \ge
        \frac{\beta \Delta_* T^{-\frac{1}{\beta}}}{2(1+\beta)} 
        \left(
            \frac{1}{L}
            \mathbb{E} \left[
                \sum_{t=1}^T \omega_t
            \right]
        \right)^{\frac{1+\beta}{\beta}},
    \end{align}
    which implies
    \begin{align}
        \mathbb{E} \left[
            \sum_{t=1}^T \omega_t
        \right]
        \le
        L
        T^{\frac{1}{1+\beta}}
        \left(
            \frac{2 (1+\beta) R_T }{\beta \Delta^* }
        \right)^{\frac{\beta}{1+\beta}}.
    \end{align}
\end{proof}

\begin{proof}[Proof of Theorem~\ref{thm:regret_upper_margin}]
We start with the upper bound given in Theorem~\ref{thm:regret_bound}. 
From the Cauchy-Schwarz inequality and the Jensen inequality, we have
\begin{align*}
    R_T 
    &=  O\left(\mathbb{E}\left[\sum^T_{t= 1}\frac{\sqrt{\sqrt{K} d \omega_t}}{\sqrt{t}}\right] + \kappa \right)
    \\
    &=  O\left(\sqrt{\sqrt{K}  d }\sqrt{\sum^T_{t= 1}\frac{1}{t}}\sqrt{\sum^T_{t=1}\mathbb{E}\left[\omega_t\right]} + \kappa \right)
    \\
    &=  O\left(\sqrt{\sqrt{K} d \log(T)}\sqrt{\sum^T_{t= 1}\mathbb{E}\left[\omega_t\right]} + \kappa \right).
\end{align*}
From this and Lemma~\ref{lem:omegaRmargin},
we have
\begin{align}
    R_T 
    = 
    O\left(\sqrt{\sqrt{K} d \log(T)
    \cdot
        L
        T^{\frac{1}{1+\beta}}
        \left(
            \frac{ (1+\beta) }{\beta \Delta^* }
        \right)^{\frac{\beta}{1+\beta}}
    }
    R_T^\frac{\beta}{2(1+\beta)}
    + \kappa
    \right),
\end{align}
which implies
\begin{align}
    R_T
    =
    O\left(
        \left(
            \frac{ (1+\beta) }{\beta \Delta^* }
        \right)^{\frac{\beta}{2+\beta}}
        \left(
        L \sqrt{K} d 
        \log(T)
        \right)^{\frac{1+\beta}{2+\beta}}
        T^{\frac{1}{2+\beta}}
        + \kappa
    \right).
\end{align}
We here used the fact that
$x \le a x^v + b$
implies
$x = O\left( a^\frac{1}{1-v} + b \right)$,
which holds for any for $a>0$, $b \ge 0$, $x \ge 0$ and $v \in (0, 1)$,

\end{proof}

\subsection{Proof of Theorem~\ref{thm:regret_bound3}}
\label{appdx:regret_bound3}
\begin{proof}[Proof of Theorem~\ref{thm:regret_bound3}]
    From Definition~\ref{def:bounding}
    and the fact that $\pi_t(a \mid X_t) \ge \gamma_t q_t(a \mid X_t) \ge \frac{1}{2}q_t(a \mid X_t) $,
    we have
    \begin{align*}
        R_T 
        &\geq \mathbb{E}\left[\sum^T_{t= 1}\sum_{a\neq \rho^*_T(X_t)}\Delta_t(a\mid X_t) \pi_t(a\mid X_t)\right] - C\\
        &\geq \frac{1}{2}\mathbb{E}\left[\sum^T_{t= 1}\sum_{a\neq \rho^*_T(X_t)}\Delta_t(a\mid X_t) q_t(a\mid X_t)\right] - C\\
        &\geq \frac{1}{2}\Delta_*\mathbb{E}\left[\sum^T_{t= 1}\sum_{a\neq \rho^*_T(X_t)}q_t(a\mid X_t)\right] - C\\
        &\geq \frac{1}{2}\Delta_*\mathbb{E}\left[\sum^T_{t=1}\left( 1 - q_t(a^{\dagger}_t(X_t) \mid X_t) \right) \right] - C\\
        &= 
        \frac{1}{2}\Delta_* \sum_{t=1}^T u_t - C
        \ge
        \frac{\Delta_*}{2L} \mathbb{E} \left[ \sum_{t=1}^T \omega_t \right] - C,
    \end{align*}
    where the third inequality follows from the assumption that $\Delta(a \mid X_t) \ge \Delta_*$ holds for all $a \neq \rho^*_T(x)$ in posed in Definition~\ref{def:bounding}.
    The value of $u_t$ is defined in \eqref{eq:defut}, and the last inequality follows from \eqref{eq:boundut}.
    From this and Theorem~\ref{thm:regret_bound},
    by applying Theorem~4 in \citet{Masoudian2021} with $K = 2$, $\Delta_i = \frac{\Delta_*}{L}$ and $B = O( \sqrt{d \sqrt{K}} )$,
    we obtain 
    \begin{align*}
        R_T 
        &= O\left(\frac{d L\sqrt{K}}{\Delta_*}
            \log \left( \frac{\Delta_*^2 T}{L^2} \right) 
            + \kappa + C
        \right).
    \end{align*}
    Moreover,
    for $\frac{dL\sqrt{K}}{\Delta_*}\left(\log\left(\frac{T \Delta^2_*}{L^2 d \sqrt{K} }\right) + 1\right) \leq C \leq \frac{\Delta_* T}{L} $, we have
    \begin{align*}
        R_T &= O\left(\sqrt{\frac{C  L\sqrt{K}d }{\Delta_*}} \left(\sqrt{\log\left(\frac{\Delta_*T}{CL}\right)} + 2\right)
         + W + \kappa \right),
    \end{align*}
    where $W$ is a subdominant term given as
    \[
        W \coloneqq \frac{L\sqrt{K} d }{\Delta_*}\left(\log\left(\frac{\Delta_* T}{CL}\right) + \sqrt{\log\left(\frac{\Delta_* T}{CL}\right)} \right).
    \]
\end{proof}

\section{Proof of Lemma~\ref{lem:upper_stability_sup2}}
\label{appdx:lem:upper_stability_sup2}
To show Lemma~\ref{lem:upper_stability_sup2}, we show the following lemmas. The proofs are shown in Appendices~\ref{appdx:lem:upper_stability_sup2} and \ref{appdx:lem:upper_stability_sup3}, respectively. 

\begin{lemma}
\label{lem:upper_stability_sup1}
    For $p, q\in[0, 1]$ and $\ell \geq -\frac{1-\alpha}{2}q^{\alpha-1}$, we have
    \begin{align}
    \label{eq:objective_lemma}
        \ell \cdot  \big(q - p\big)  - d\left(p, q\right) \leq \frac{2q^{2-\alpha}\ell^2}{1-\alpha}.
    \end{align}
\end{lemma}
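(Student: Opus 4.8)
The plan is to reduce \eqref{eq:objective_lemma} to a one-dimensional optimization over $p$. Fix $\alpha \in (0,1)$ and $q \in (0,1]$, and set $f(p) := \ell(q-p) - d(p,q)$ with $d(p,q) = \frac{1}{\alpha}q^\alpha + (p-q)q^{\alpha-1} - \frac{1}{\alpha}p^\alpha$. Since $f''(p) = (\alpha-1)p^{\alpha-2} < 0$, the map $p \mapsto f(p)$ is concave on $[0,\infty)$, so $\sup_{p\in[0,1]}f(p) \le \sup_{p\ge 0}f(p) = f(p^*)$, where $p^*$ solves $f'(p) = p^{\alpha-1} - \ell - q^{\alpha-1} = 0$, i.e. $p^* = (\ell + q^{\alpha-1})^{1/(\alpha-1)}$. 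This point lies in $(0,\infty)$ exactly because the hypothesis $\ell \ge -\tfrac{1-\alpha}{2}q^{\alpha-1}$ (together with $\tfrac{1-\alpha}{2} < 1$) forces $\ell + q^{\alpha-1} > 0$. It thus suffices to bound $f(p^*)$.

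Using $(p^*)^{\alpha-1} = \ell + q^{\alpha-1}$ one gets $f(p^*) = \ell q + \tfrac{1-\alpha}{\alpha}\big((p^*)^\alpha - q^\alpha\big)$. With the shorthands $t := \ell\, q^{1-\alpha}$ (so the hypothesis becomes $t \ge -\tfrac{1-\alpha}{2}$, and $1+t>0$) and $\gamma := \tfrac{\alpha}{\alpha-1} < 0$, a short computation gives $(p^*)^\alpha = (1+t)^\gamma q^\alpha$ and $\ell q = t q^\alpha$, hence $f(p^*) = q^\alpha\big(t + \tfrac{1-\alpha}{\alpha}[(1+t)^\gamma - 1]\big)$; meanwhile the right-hand side of \eqref{eq:objective_lemma} equals $\tfrac{2t^2}{1-\alpha}q^\alpha$. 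Dividing by $q^\alpha>0$, everything reduces to the scalar inequality
\[
    t + \frac{1-\alpha}{\alpha}\big[(1+t)^\gamma - 1\big] \;\le\; \frac{2t^2}{1-\alpha}, \qquad t \ge -\tfrac{1-\alpha}{2}.
\]

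To prove this I would apply Taylor's theorem with Lagrange remainder to $t\mapsto(1+t)^\gamma$ about $t=0$: there is $\xi$ strictly between $0$ and $t$ with $(1+t)^\gamma = 1 + \gamma t + \tfrac12\gamma(\gamma-1)(1+\xi)^{\gamma-2}t^2$. The identities $\tfrac{1-\alpha}{\alpha}\gamma = -1$ and $\tfrac{1-\alpha}{2\alpha}\gamma(\gamma-1) = \tfrac{1}{2(1-\alpha)}$ (the latter from $\gamma - 1 = \tfrac{1}{\alpha-1}$) make the constant and linear contributions cancel the leading $t$, so that the left-hand side becomes exactly $\tfrac{(1+\xi)^{\gamma-2}}{2(1-\alpha)}t^2$. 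It then remains to show $(1+\xi)^{\gamma-2}\le 4$. Since $\gamma-2 = \tfrac{2-\alpha}{\alpha-1}<0$, the map $s\mapsto s^{\gamma-2}$ is decreasing, and whether $t$ is positive or negative we have $1+\xi \ge \min\{1,1+t\} \ge 1 - \tfrac{1-\alpha}{2} = \tfrac{1+\alpha}{2}$; so it is enough that $\big(\tfrac{1+\alpha}{2}\big)^{\gamma-2}\le 4$, which (taking base-$2$ logs and using $2-\gamma = \tfrac{2-\alpha}{1-\alpha}$) is equivalent to $1+\alpha \ge 2^{\alpha/(2-\alpha)}$. This last bound follows from the convexity of $x\mapsto 2^x$ (so $2^x \le 1+x$ on $[0,1]$) applied at $x = \tfrac{\alpha}{2-\alpha}\in[0,1]$, together with $\tfrac{\alpha}{2-\alpha}\le\alpha$ (since $2-\alpha\ge1$).

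I expect the main obstacle to be the bookkeeping in the middle steps — verifying the closed form for $f(p^*)$, the two cancellation identities for $\gamma$, and the powers of $q$ — rather than anything conceptual; the concavity reduction and the final scalar estimate are routine, and the case $t=0$ (where both sides vanish) is trivial.
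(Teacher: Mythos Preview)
Your proof is correct and, after the shared opening move (concavity in $p$, maximizer $p^*=(\ell+q^{\alpha-1})^{1/(\alpha-1)}$), takes a genuinely different route from the paper. The paper proceeds by discarding the Bregman term via $d(p,q)\ge 0$ to bound $f(p^*)\le|\ell|\,|p^*-q|$, then applies the mean value theorem to $s\mapsto s^{\alpha-1}$ to obtain $|p^*-q|\le\tfrac{1}{1-\alpha}\max\{p^*,q\}^{2-\alpha}|\ell|$, and finally uses $p^*\le 2q$ (deduced from the hypothesis on $\ell$) to replace $\max\{p^*,q\}$ by $2q$; as written this actually delivers $\tfrac{4}{1-\alpha}q^{2-\alpha}\ell^2$, i.e.\ constant $4$ rather than the stated $2$. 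You instead compute $f(p^*)$ in closed form, substitute $t=\ell q^{1-\alpha}$ to reduce to a one-variable inequality, and control that via the second-order Lagrange remainder of $(1+t)^{\gamma}$; the bound $(1+\xi)^{\gamma-2}\le 4$ then falls out of the same estimate $1+\alpha\ge 2^{\alpha}$ that underlies the paper's $p^*\le 2q$. Your argument is more computational but loses nothing in the Bregman term and recovers the constant $2$ exactly as stated; the paper's route is shorter and cruder. For downstream use either constant suffices.
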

\begin{lemma}
    \label{lem:upper_stability_sup3}
    Fix 
    $a^\dagger \in [K]$ and $q\in\mathcal{P}_K$. Then, the following holds for $\ell \in \mathbb{R}^K$:
    \begin{itemize}
        \item If $\ell(a) \geq - \frac{1 - \alpha}{4}q(a)^{\alpha-1}$ for all $a \in [K]$,
        we then have
        \begin{align}
            \label{eq:lem_1}
            \Big\langle \ell, q - p \Big\rangle - D(p, q) \leq \frac{4}{1-\alpha} \left(\sum_{a\in[K]}q(a)^{2-\alpha}\ell_t(a)^2\right)
        \end{align}
        for all $p \in\mathcal{P}_K$. 
        \item If $\ell(a) \geq - \frac{1 - \alpha}{4}q(a)^{\alpha-1}$ for all $a \in [K] \setminus \{a^\dagger\}$
        and $\ell(a^{\dagger}) \leq \frac{1 - \alpha}{4}(1-q(a^{\dagger}))^{\alpha-1}$,
        we then have
        \begin{align}
            \label{eq:lem_2}
            &\Big\langle \ell, q - p \Big\rangle - D(p, q)
            \leq \frac{4}{1-\alpha} \left(\sum_{a\neq a^\dagger}q(a)^{2-\alpha}\ell(a)^2 + \big( 1- q(a^\dagger)\big)^{2-\alpha}\ell(a^\dagger)^2\right)
        \end{align}
        for all $p \in\mathcal{P}_K$. 
    \end{itemize}
\end{lemma}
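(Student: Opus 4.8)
The plan is to reduce both inequalities to the scalar estimate of Lemma~\ref{lem:upper_stability_sup1}, applied coordinate by coordinate, after expanding $\langle \ell, q - p \rangle - D(p, q) = \sum_{a\in[K]} \big( \ell(a)(q(a) - p(a)) - d(p(a), q(a)) \big)$ by means of $D(p,q) = \sum_{a\in[K]} d(p(a), q(a))$. For \eqref{eq:lem_1} this is immediate: since $q(a)^{\alpha-1}\ge 0$ and $\frac{1-\alpha}{4}\le\frac{1-\alpha}{2}$, the hypothesis $\ell(a)\ge -\frac{1-\alpha}{4}q(a)^{\alpha-1}$ already forces $\ell(a)\ge -\frac{1-\alpha}{2}q(a)^{\alpha-1}$, so Lemma~\ref{lem:upper_stability_sup1} gives $\ell(a)(q(a)-p(a)) - d(p(a),q(a)) \le \frac{2}{1-\alpha} q(a)^{2-\alpha}\ell(a)^2$ for every $a$; summing over $a$ and bounding $2\le 4$ yields \eqref{eq:lem_1}.

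For \eqref{eq:lem_2} the coordinate $a^\dagger$ cannot be handled in isolation, since $\ell(a^\dagger)$ is only bounded from above and may be very negative. The idea is to use that $p,q\in\mathcal{P}_K$: from $\sum_{a}q(a) = \sum_{a}p(a) = 1$ we obtain $q(a^\dagger) - p(a^\dagger) = \sum_{a\neq a^\dagger}\big(p(a) - q(a)\big)$, and substituting this into $\langle \ell, q-p\rangle$ re-centers the losses,
\[
    \langle \ell, q - p \rangle = \sum_{a\neq a^\dagger} \big(\ell(a) - \ell(a^\dagger)\big)\big(q(a) - p(a)\big).
\]
Discarding the nonnegative term $d(p(a^\dagger),q(a^\dagger))\ge 0$, it remains to bound $\sum_{a\neq a^\dagger}\big((\ell(a)-\ell(a^\dagger))(q(a)-p(a)) - d(p(a),q(a))\big)$. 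The key point is that the re-centered loss still meets the hypothesis of Lemma~\ref{lem:upper_stability_sup1} relative to $q(a)$: since $q$ is a distribution, $q(a)\le 1 - q(a^\dagger)$ for $a\neq a^\dagger$, hence $(1-q(a^\dagger))^{\alpha-1}\le q(a)^{\alpha-1}$, and combining this with $\ell(a)\ge -\frac{1-\alpha}{4}q(a)^{\alpha-1}$ and $\ell(a^\dagger)\le \frac{1-\alpha}{4}(1-q(a^\dagger))^{\alpha-1}$ gives $\ell(a) - \ell(a^\dagger) \ge -\frac{1-\alpha}{2}q(a)^{\alpha-1}$. Applying Lemma~\ref{lem:upper_stability_sup1} to each such term, then $(\ell(a)-\ell(a^\dagger))^2\le 2\ell(a)^2 + 2\ell(a^\dagger)^2$, summing over $a\neq a^\dagger$, and finally using the elementary inequality $\sum_{a\neq a^\dagger} q(a)^{2-\alpha}\le \big(\sum_{a\neq a^\dagger}q(a)\big)^{2-\alpha} = (1-q(a^\dagger))^{2-\alpha}$, which holds because $2-\alpha\ge 1$, produces exactly the right-hand side of \eqref{eq:lem_2}.

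The main obstacle is precisely this $a^\dagger$ term in \eqref{eq:lem_2}: relaxing the simplex constraint and maximizing the $a^\dagger$ contribution over $p(a^\dagger)\in[0,1]$ on its own yields a useless (and, as $q(a^\dagger)\to 1$, divergent) bound, so the identity $q(a^\dagger) - p(a^\dagger) = \sum_{a\neq a^\dagger}(p(a)-q(a))$ must be invoked before any coordinate-wise estimate. The constants $\frac{1-\alpha}{4}$ in the hypotheses are exactly calibrated so that, after re-centering, $\ell(a)-\ell(a^\dagger)$ clears the $\frac{1-\alpha}{2}$-threshold required by Lemma~\ref{lem:upper_stability_sup1}; the factor-of-two slack incurred via $(\ell(a)-\ell(a^\dagger))^2\le 2\ell(a)^2+2\ell(a^\dagger)^2$ is what turns the natural $\frac{2}{1-\alpha}$ into the stated $\frac{4}{1-\alpha}$.
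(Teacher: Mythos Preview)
Your proof is correct. Both parts follow as you describe: for \eqref{eq:lem_1} a direct coordinate-wise application of Lemma~\ref{lem:upper_stability_sup1} with the slack $2\le 4$; for \eqref{eq:lem_2} the simplex identity $q(a^\dagger)-p(a^\dagger)=\sum_{a\neq a^\dagger}(p(a)-q(a))$, Lemma~\ref{lem:upper_stability_sup1} applied to the re-centered loss $\ell(a)-\ell(a^\dagger)$, and finally $(\ell(a)-\ell(a^\dagger))^2\le 2\ell(a)^2+2\ell(a^\dagger)^2$ together with $\sum_{a\neq a^\dagger}q(a)^{2-\alpha}\le (1-q(a^\dagger))^{2-\alpha}$.

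The paper's argument is close in spirit but organizes the decomposition differently. Instead of re-centering, it writes
\[
\langle \ell, q-p\rangle - D(p,q)=\tfrac12\sum_{a}\big(2\ell(a)(q(a)-p(a))-d(p(a),q(a))\big)-\tfrac12\sum_a d(p(a),q(a)),
\]
assigns the remaining $\tfrac12\sum_a d(p(a),q(a))$ entirely to the $a^\dagger$ term, and then applies Lemma~\ref{lem:upper_stability_sup1} \emph{separately} to the losses $2\ell(a)$ (for $a\neq a^\dagger$) and, after the simplex identity, to $-2\ell(a^\dagger)$ distributed over $a\neq a^\dagger$. This halving-and-doubling is exactly what makes the $\frac{1-\alpha}{4}$ hypotheses match Lemma~\ref{lem:upper_stability_sup1}'s $\frac{1-\alpha}{2}$ threshold and produces the constant $4$ directly, without the quadratic inequality $(a-b)^2\le 2a^2+2b^2$. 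Your route reaches the same constant via that inequality instead. The paper's decomposition handles both \eqref{eq:lem_1} and \eqref{eq:lem_2} through a single $\min$ of two bounds on the $a^\dagger$ contribution, which is slightly more unified; your treatment of \eqref{eq:lem_1} is more direct and avoids the unnecessary doubling there.
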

\begin{lemma}
    \label{lem:upper_stability_sup4}
    Fix 
    $a^\dagger \in [K]$ and $q\in\mathcal{P}_K$.
    If $|\ell(a)| \leq \frac{1 - \alpha}{4}q(a)^{\alpha-1}$ for all $a \in [K]$,
    we have
    \begin{align}
        \label{eq:sup4}
        \Big\langle \ell, q - p \Big\rangle - D(p, q)
        \leq \frac{4}{1-\alpha} \left(\sum_{a\neq a^\dagger}q(a)^{2-\alpha}\ell(a)^2 + \big( \min\left\{ q(a^\dagger), 1- q(a^\dagger) \right\}\big)^{2-\alpha}\ell(a^\dagger)^2\right)
    \end{align}
    for all $p \in\mathcal{P}_K$. 
\end{lemma}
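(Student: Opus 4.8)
The plan is to reduce everything to Lemma~\ref{lem:upper_stability_sup3}, whose two branches already produce the two possible ``shapes'' of the right-hand side of \eqref{eq:sup4}; all that is left is to pick the correct branch according to whether $q(a^\dagger) \le 1/2$ or $q(a^\dagger) > 1/2$ and to check that the one-sided conditions there are implied by the two-sided hypothesis $|\ell(a)| \le \frac{1-\alpha}{4}q(a)^{\alpha-1}$. Before the case split I would record two elementary monotonicity facts for $\alpha \in (0,1)$: the map $x \mapsto x^{\alpha-1}$ is nonincreasing on $(0,1]$, so $s \ge s'$ gives $s^{\alpha-1} \le (s')^{\alpha-1}$, and the map $x \mapsto x^{2-\alpha}$ is nondecreasing, so that $q(a^\dagger)^{2-\alpha}$ or $(1-q(a^\dagger))^{2-\alpha}$ may be rewritten as $\big(\min\{q(a^\dagger),1-q(a^\dagger)\}\big)^{2-\alpha}$ once the minimizing argument is identified.

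In the case $q(a^\dagger) \le 1/2$, the hypothesis gives in particular $\ell(a) \ge -\frac{1-\alpha}{4}q(a)^{\alpha-1}$ for every $a \in [K]$, so I would invoke \eqref{eq:lem_1} to obtain $\langle \ell, q-p\rangle - D(p,q) \le \frac{4}{1-\alpha}\sum_{a\in[K]}q(a)^{2-\alpha}\ell(a)^2$, then peel off the $a^\dagger$ term and use $q(a^\dagger) = \min\{q(a^\dagger),1-q(a^\dagger)\}$ to land on \eqref{eq:sup4}. In the case $q(a^\dagger) > 1/2$, I would verify the hypotheses of the second branch: for $a \ne a^\dagger$ the bound $\ell(a) \ge -\frac{1-\alpha}{4}q(a)^{\alpha-1}$ is again immediate from the two-sided hypothesis, and for $a^\dagger$ the chain $\ell(a^\dagger) \le |\ell(a^\dagger)| \le \frac{1-\alpha}{4}q(a^\dagger)^{\alpha-1} \le \frac{1-\alpha}{4}(1-q(a^\dagger))^{\alpha-1}$ uses the two-sided bound and then $q(a^\dagger) \ge 1-q(a^\dagger)$ together with the first monotonicity fact. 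Then \eqref{eq:lem_2} gives $\langle \ell, q-p\rangle - D(p,q) \le \frac{4}{1-\alpha}\big(\sum_{a\ne a^\dagger}q(a)^{2-\alpha}\ell(a)^2 + (1-q(a^\dagger))^{2-\alpha}\ell(a^\dagger)^2\big)$, which is \eqref{eq:sup4} after substituting $1-q(a^\dagger) = \min\{q(a^\dagger),1-q(a^\dagger)\}$.

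I do not expect a genuine obstacle here: the substantive inequality is entirely contained in Lemma~\ref{lem:upper_stability_sup3}, and the present lemma is a packaging step that merges its two branches into a single statement featuring the $\min$. The only points requiring care are bookkeeping, namely identifying the $\min$ correctly in each branch and handling degenerate coordinates where $q(a)=0$ (or $q(a^\dagger)\in\{0,1\}$). In those degenerate cases the relevant power $q(a)^{\alpha-1}$ or $(1-q(a^\dagger))^{\alpha-1}$ equals $+\infty$, so the corresponding one-sided condition in Lemma~\ref{lem:upper_stability_sup3} is vacuous, while the matching term $q(a)^{2-\alpha}\ell(a)^2$ (resp.\ $(1-q(a^\dagger))^{2-\alpha}\ell(a^\dagger)^2$) vanishes because $2-\alpha > 0$; hence these cases are consistent with, and in fact easier than, the generic argument, and I would dispose of them with a one-line remark rather than a separate analysis.
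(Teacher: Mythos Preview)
Your proposal is correct and follows essentially the same approach as the paper: a case split on whether $q(a^\dagger)\le 1-q(a^\dagger)$ or $q(a^\dagger)>1-q(a^\dagger)$, invoking \eqref{eq:lem_1} in the first case and verifying the extra hypothesis of \eqref{eq:lem_2} via the monotonicity of $x\mapsto x^{\alpha-1}$ in the second. Your remarks on degenerate coordinates and the explicit monotonicity bookkeeping are extra care that the paper leaves implicit.
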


Then, by using Lemma~\ref{lem:upper_stability_sup4},
we prove Lemma~\ref{lem:upper_stability_sup2} as follows:
\begin{proof}[Proof of Lemma~\ref{lem:upper_stability_sup2}]
    From Lemma~\ref{lem:upper_stability_sup3}
    with $\ell=\eta_t \widehat{\ell}_t(x)$ and $p = p(x)$,
    we have
    we have
    \begin{align*}
        &\left\langle \eta_t \widehat{\ell}_t(x), q(x) - p(x) \right\rangle - D(p(x), q(x))\\
        &\leq \frac{4\eta_t^2}{1 - \alpha}\left(\sum_{a \neq a^\dagger(x)}q(a\mid x)^{2-\alpha}\ell_t(a\mid x)^2 + \big(1-q(a^\dagger(x)\mid x)\big)^{2-\alpha}\ell_t(a^\dagger(x), x)^2\right)\\
        &\leq \frac{4 \eta_t^2}{1 - \alpha}\left(\sum_{a \neq a^\dagger(x)}q(a\mid x)^{2-\alpha}\ell_t(a\mid x)^2 + \min\big\{q(a^\dagger(x)\mid x), 1 - q(a^\dagger(x)\mid x)\big\}^{2-\alpha}\ell_t\left(a^\dagger(x), x\right)^2\right).
    \end{align*}
    By dividing both sides by $\eta_t$,
    we obtain the desired bound.
\end{proof}

\subsection{Proof of Lemma~\ref{lem:upper_stability_sup1}}
\label{appdx:lem:upper_stability_sup1}
\begin{proof}
For all given $q$ and $\ell$, the LHS of \eqref{eq:objective_lemma} is concave in $p$. Hence, this is maximized when 
\begin{align}
\label{eq:first_prep}
    &\frac{\mathrm{d}}{\mathrm{d}p}\Big\{\ell\cdot \big(q - p\big) - d\left(p, q\right)\Big\} = -\ell - q^{\alpha-1} + p^{\alpha-1} = 0.
\end{align}
We then have
\begin{align}
\label{eq:third_prep}
    p = \big( q^{\alpha-1} + \ell \big)^{\frac{1}{\alpha-1}} \leq \left(q^{\alpha-1} - \frac{1-\alpha}{2}q^{\alpha-1}\right)^{\frac{1}{\alpha-1}} = q\left(1 - \frac{1-\alpha}{2}\right)^{\frac{1}{\alpha - 1}} \leq 2q,
\end{align}
where the first equality follows from \eqref{eq:first_prep} and the first inequality follows from the assumption of $\ell \geq - \frac{1-\alpha}{2}q^{\alpha-1}$. Furthermore, from the intermediate value theorem and the fact that $p^{\alpha-2}$ is monotone decreasing in $p$, we have
\begin{align*}
    \big|\ell\big| &= \big|p^{\alpha-1} - q^{\alpha-1}\big|\\
    &\geq \min\Big\{\big| (\alpha-1)p^{\alpha-2} \big|, \big| (\alpha - 1)q^{\alpha-2} \big| \Big\}\big| p - q \big|\\
    &= (1-\alpha)\max\big\{p, q\big\}^{\alpha-2}\big|p-q\big|,
\end{align*}
where the first inequality follows from \eqref{eq:first_prep} and the second inequality follows from the intermediate value theorem. This implies 
\begin{align}
\label{eq:second_prep}
    \big| p - q \big| \leq \frac{1}{1-\alpha}\cdot \max\big\{p, q\big\}^{2-\alpha}\big|\ell\big|.
\end{align}
We then have
\begin{align*}
    &\ell\cdot (p - q) - d(p, q) \leq \big|\ell\big| \big|q - p\big|\leq \frac{\ell^2}{1-\alpha}\max\big\{p, q\big\}^{2-\alpha} \leq \frac{4\ell^2}{1-\alpha} q^{2-\alpha},
\end{align*}
where the second inequality follows from \eqref{eq:second_prep} and the last inequality follows from \eqref{eq:third_prep}. 
\end{proof}

\subsection{Proof of Lemma~\ref{lem:upper_stability_sup3}}
\label{appdx:lem:upper_stability_sup3}
\begin{proof}
    We have
    \begin{align}
        \nonumber
        &\Big\langle \ell, q - p \Big\rangle - D(p, q)\\
        \nonumber
        &= \frac{1}{2} \sum_{a\neq a^\dagger(x)}\left( 2\ell(a ) \cdot \big(q(a) - p(a)\big) - d\big(p(a), q(a)\big)\right) \\
        \nonumber
        &\quad \quad \frac{1}{2}\Bigg(2\ell_t(a^\dagger)\cdot \big(q(a^\dagger) - p(a^\dagger)\big) - d\big(p(a^\dagger), q(a^\dagger)\big)
        - \sum_{a \in [K]} d\big(p(a), q(a)\big)\Bigg)\\
        \nonumber
        &\leq \frac{1}{2} \sum_{a\neq a^\dagger}\Big\{2\ell(a) \cdot \big(q(a) - p(a)\big) - d\big(p(a), q(a)\big)\Big\}\\
        \nonumber
        &\ \ \ \ \ + \frac{1}{2}\min\Bigg\{ 2\ell(a^\dagger)\cdot \big(q(a^\dagger) - p(a^\dagger)\big) - d\big(p(a^\dagger), q(a^\dagger)\big),
        \nonumber
        \\ &
        \quad \quad \quad \quad \quad \quad
        2\ell(a^\dagger) \cdot \big(q(a^\dagger) - p(a^\dagger)\big) - \sum_{a\neq a^\dagger} d\big(p(a), q(a)\big)\Bigg\}.
        \label{eq:sup3_01}
    \end{align}

    From Lemma~\ref{lem:upper_stability_sup1}, if $\ell(a) \geq -\frac{1-\alpha}{2}q(a)^{\alpha-1}$, we have
    \begin{align}
        2\ell(a)\cdot \left(q(a) - p(a)\right) - 
        d\left(p(a), q(a)\right) \leq \frac{8q(a)^{2-\alpha}\ell(a)^2}{1-\alpha}.
        \label{eq:sup3_02}
    \end{align}

    Furthermore, we have
    \begin{align*}
        q(a^\dagger) - p(a^\dagger) 
        = \big(1  - p(a^\dagger) \big) - \big(1 - q(a^\dagger ) \big) = \sum_{a\neq a^\dagger}\big(p(a) - q(a) \big).
    \end{align*}

    As we have 
    $\big(1 - q(a^\dagger)\big)^{\alpha-1}\leq q(a)^{\alpha-1}$ for all $a\in[K]\backslash\{a^\dagger\}$,
    if $\ell(a^\dagger)\leq \frac{1-\alpha}{4}\big(1 - q(a^\dagger)\big)^{\alpha-1}$,
    we then have
    \[
        -\ell(a^\dagger)\geq - \frac{1-\alpha}{4}q(a)^{\alpha-1}
    \]
    for all $a\in[K] \setminus \{a^\dagger(x)\}$.
    Hence,
    Lemma~\ref{lem:upper_stability_sup1} yields
    \begin{align*}
        &2\ell(a^\dagger)\cdot \big(q(a^\dagger) - p(a^\dagger)\big) - \sum_{a\neq a^\dagger} d\big(p(a), q(a)\big)\\
        &= \sum_{a\neq a^\dagger}\left( -2\ell(a^\dagger) \cdot \big(q(a) - p(a)\big) - d\big(p(a), q(a)\big) \right)\\
        &\leq \frac{2}{1-\alpha} \sum_{a\neq a^\dagger}\left(2\ell(a^\dagger)\right)^2q(a)^{2-\alpha}\\
        &\leq \frac{8}{1-\alpha} \ell(a^\dagger)^2 \left(\sum_{a\neq a^\dagger}q(a)\right)^{2-\alpha}\\
        &= \frac{8}{1-\alpha}\big( 1 - q(a^\dagger) \big)^{2-\alpha}(\ell(a^\dagger))^2 
    \end{align*}
    if $\ell(a^{\dagger}) \le \frac{1-\alpha}{4} \left( 1 - q(a^{\dagger} )\right)^{\alpha - 1}$.
    Combining this with \eqref{eq:sup3_01} and \eqref{eq:sup3_02},
    we obtain the desired bounds.
\end{proof}

\subsection{Proof of Lemma~\ref{lem:upper_stability_sup4}}
\label{appdx:lem:upper_stability_sup4}
\begin{proof}
    Let us consider two cases:
    when
    $q(a^{\dagger}) \le 1 - q(a^{\dagger})$
    and
    when
    $q(a^{\dagger}) < 1 - q(a^{\dagger})$.
    Suppose
    $q(a^{\dagger}) \le 1 - q(a^{\dagger})$.
    Then,
    from \eqref{eq:lem_1} in Lemma~\ref{lem:upper_stability_sup3},
    we have
    \begin{align*}
        \Big\langle \ell, q - p \Big\rangle - D(p, q)
        &
        \leq 
        \frac{4}{1-\alpha} \left(\sum_{a\neq a^\dagger}q(a)^{2-\alpha}\ell(a)^2 + \big(  q(a^\dagger) \big)^{2-\alpha}\ell(a^\dagger)^2\right)
        \\
        &
        \leq 
        \frac{4}{1-\alpha} \left(\sum_{a\neq a^\dagger}q(a)^{2-\alpha}\ell(a)^2 + \big( \min\left\{ q(a^\dagger), 1- q(a^\dagger) \right\}\big)^{2-\alpha}\ell(a^\dagger)^2\right).
    \end{align*}
    Suppose
    $q(a^{\dagger}) > 1 - q(a^{\dagger})$.
    We then have
    $
        |\ell(a^{\dagger}) |
        \le 
        \frac{1-\alpha}{4} (q(a^{\dagger}))^{\alpha - 1}
        <
        \frac{1-\alpha}{4} (1 - q(a^{\dagger}))^{\alpha - 1}
    $.
    Hence,
    we can use \eqref{eq:lem_2} in Lemma~\ref{lem:upper_stability_sup3} to obtain:
    \begin{align*}
        \Big\langle \ell, q - p \Big\rangle - D(p, q)
        &
        \leq 
        \frac{4}{1-\alpha} \left(\sum_{a\neq a^\dagger}q(a)^{2-\alpha}\ell(a)^2 + \big( 1 - q(a^\dagger) \big)^{2-\alpha}\ell(a^\dagger)^2\right)
        \\
        &
        \leq 
        \frac{4}{1-\alpha} \left(\sum_{a\neq a^\dagger}q(a)^{2-\alpha}\ell(a)^2 + \big( \min\left\{ q(a^\dagger), 1- q(a^\dagger) \right\}\big)^{2-\alpha}\ell(a^\dagger)^2\right),
    \end{align*}
    which completes the proof. 
\end{proof}

\section{The BoBW-RealFTRL with arm-dependent features}
\label{sec:bobw_linearftrl}
\colorred
In this section, we reformulate the FTRL with Shannon entropy under the setting with arm-dependent features and derive its regret upper bound. Note that \citet{kuroki2023bestofbothworlds} and \citet{kato2023bestofbothworlds} investigate the FTRL with Shannon entropy under the setting with arm-independent features.
\colorblack

We consider the same formulation in our main text. However, we do not assume the finiteness of contexts in a stochastic regime. 

\subsection{The BoBW-RealFTRL}
Following \citet{kuroki2023bestofbothworlds} and \citet{kato2023bestofbothworlds}, we define the BoBW-RealFTRL with the arm-dependent feature as
\begin{align}
\label{eq:policy2}
    \pi_t(X_t) \coloneqq (1-\gamma_t)q_t(X_t) + {\gamma_t}e^*(X_t),
\end{align}
where  
\begin{align*}
    &q_t(x) \in \argmin_{q\in \mathcal{P}_K}\left\{ \sum^{t-1}_{s=1}\left\langle \widehat{\ell}_s(x), q\right\rangle + \frac{1}{\eta_t}\psi(q)\right\}\ \ \mathrm{for}\ t \geq 2,\\
    &q_1(x) \coloneqq (1/K\ 1/K\ \cdots\ 1/K)^\top,\nonumber\\
    &\psi(q(x)) \coloneqq - \sum_{a\in[K]}q(a\mid x)\log \left(\frac{1}{q(a\mid x)}\right),\\
    &\frac{1}{\eta_{t+1}}  \coloneqq \frac{1}{\eta_t}  + \frac{1}{\eta_1} \frac{1}{\sqrt{1 + \big(\log(K)\big)^{-1}\sum^t_{s=1}H\big(q_s(X_s)\big)}},\\
    &\frac{1}{\eta_1} \coloneqq \sqrt{\frac{\left(\frac{1}{ \lambda} + d\right)\log(T)}{\log(K)}},\quad \mathrm{and}\quad\gamma_t \coloneqq \frac{\eta_t}{ \lambda}.
\end{align*}

\begin{algorithm}[tb]
    \caption{BoBW-RealFTRL.}
    \label{alg_realftrl}
    \begin{algorithmic}
    \STATE {\bfseries Parameter:} Learning rate $\eta_1, \eta_2,\dots, \eta_T > 0$. 
    \STATE {\bfseries Initialization:} Set $\theta_{0} = 0$.
    \FOR{$t=1,\dots, T$}
    \STATE Observe $X_t$. 
    \STATE Draw $A_t \in [K]$ following the policy $\pi_t(X_t) \coloneqq (1-\gamma_t)q_t(X_t) + \gamma_te^*(X_t)$ defined in \eqref{eq:policy2}.
    \STATE Observe the loss $\ell_t(A_t, X_t)$.
    \STATE Compute $\widehat{\theta}_t$. 
    \ENDFOR
\end{algorithmic}
\end{algorithm}

\subsection{Regret analysis}
\label{sec:regretanalysis_realftrl}
This section provides upper bounds for the regret of our proposed BoBW-RealFTRL algorithm. 

To derive upper bounds, we define the following quantities:
\begin{align*}
    &Q(\rho^*_T\mid x) = \sum^T_{t=1}\Big\{ 1 - q_t\big(\rho^*_T(x)\mid x\big) \Big\},\\
    &\overline{Q}(\rho^*_T) = \mathbb{E}\left[ Q(\rho^*_T\mid X_0) \right].
\end{align*}

Then, we show the following upper bound, which holds for general cases such as adversarial and stochastic regimes. We show the proof in Appendix~\ref{appdx:regret_bound_ftrl}.
\begin{theorem}[General regret bounds]
\label{thm:regret_bound_realftrl}
Consider the BoBW-RealFTRL. Assumptions~\ref{asm:contextual_dist}--\ref{asm:bounded_random} hold. 
Then, the decision-maker incurs the following regret:
    \begin{align*}
    R_T &= O\left(\left( \eta_1\left(\frac{1}{ \lambda} + d\right)\frac{\log(T)}{\sqrt{\log(K)}} + \frac{1}{\eta_1}\sqrt{\log(K)}\right) \sqrt{\log(KT)}\max\Big\{\overline{Q}^{1/2}(\rho^*_T), 1\Big\}\right).
\end{align*}
\end{theorem}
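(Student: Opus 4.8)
The plan is to run the standard two-step FTRL template used for the Tsallis analysis in Appendix~\ref{appdx:proof_main}, but with the negative Shannon entropy as the regularizer, and then to couple the stability/penalty estimates with the entropy-adaptive learning rate. First I would prove the Shannon-entropy analogue of Proposition~\ref{lem:basic}: using the exploration split $\pi_t=(1-\gamma_t)q_t+\gamma_te^*$, the boundedness of $\ell_t$ (Assumption~\ref{asm:bounded_random}) to absorb $\sum_t\gamma_t\langle\ell_t(X_t),e^*(X_t)-q_t(X_t)\rangle$ into $\sum_t\gamma_t$, the i.i.d.\ property of contexts (Assumption~\ref{asm:contextual_dist}(i)) to replace $X_t$ by a fresh copy $X_0$, and the definition of $q_t$, one obtains $R_T\le\mathbb{E}\big[\sum_t(\gamma_t+\langle\widehat\ell_t(X_0),q_t(X_0)-q_{t+1}(X_0)\rangle-\tfrac1{\eta_t}D(q_{t+1}(X_0),q_t(X_0))+(\tfrac1{\eta_t}-\tfrac1{\eta_{t-1}})H(q_t(X_0)))\big]$ plus a boundary term $\tfrac{\log K}{\eta_1}$ coming from $\psi(q_1)$, where $D$ is now the KL divergence and $H$ the Shannon entropy. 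Splitting the summand into $\mathrm{stability}_t(x)$ and $\mathrm{penalty}_t(x)$ exactly as in \eqref{eq:stab_penal_decom} is the decomposition I would carry forward.

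Next I would bound the stability term. By Lemma~\ref{lem:upper_loss}, $|\widehat\ell_t(a,x)|\le\tfrac1{\lambda\gamma_t}$, and since $\gamma_t=\eta_t/\lambda$ this gives $\eta_t|\widehat\ell_t(a,x)|\le1$, which is exactly the smallness condition making the local-norm quadratic bound for the exponential-weights (KL) update applicable: $\langle\widehat\ell_t(x),q_t(x)-q_{t+1}(x)\rangle-\tfrac1{\eta_t}D(q_{t+1}(x),q_t(x))\le O\big(\eta_t\sum_a q_t(a\mid x)\widehat\ell_t(a,x)^2\big)$. Taking $\mathbb{E}[\cdot\mid\mathcal{F}_{t-1}]$, using $\pi_t\ge q_t/2$ and Lemma~\ref{prp:lemma6_neu}, we get $\mathbb{E}[\sum_a q_t(a\mid X_0)\widehat\ell_t(a,X_0)^2\mid\mathcal{F}_{t-1}]\le2d$, so $\mathbb{E}[\mathrm{stability}_t(X_0)]\le O\big((d+\tfrac1\lambda)\mathbb{E}[\eta_t]\big)$ (the $\tfrac1\lambda$ contributed by $\gamma_t$).

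The key coupling step is then to handle the two running sums $\sum_t\eta_t$ and $\sum_t(\tfrac1{\eta_t}-\tfrac1{\eta_{t-1}})H(q_t(X_0))$ together. Since $\eta_t$ and $q_t$ are $\mathcal{F}_{t-1}$-measurable and $X_0,X_t$ are i.i.d.\ and independent of $\mathcal{F}_{t-1}$, I would use $\mathbb{E}[(\tfrac1{\eta_t}-\tfrac1{\eta_{t-1}})H(q_t(X_0))]=\mathbb{E}[(\tfrac1{\eta_t}-\tfrac1{\eta_{t-1}})H(q_t(X_t))]$ to move everything onto the realized path. Writing $h_t:=(\log K)^{-1}H(q_t(X_t))\in[0,1]$ and $H_T:=\sum_t h_t\le T$, the learning-rate recursion gives $\tfrac1{\eta_t}-\tfrac1{\eta_{t-1}}=\tfrac1{\eta_1}(1+\sum_{s<t}h_s)^{-1/2}$, so by the standard lemma $\sum_t h_t(1+\sum_{s\le t}h_s)^{-1/2}\le2\sqrt{1+H_T}$ (using $1+\sum_{s<t}h_s\ge\tfrac12(1+\sum_{s\le t}h_s)$ because $h_t\le1$) the penalty sum is $O\big(\tfrac{\log K}{\eta_1}\sqrt{1+H_T}\big)$; similarly, from $\eta_t=\eta_1\big(1+\sum_{s<t}(1+\sum_{r\le s}h_r)^{-1/2}\big)^{-1}$ and the crude lower bound $\sum_{s<t}(1+\sum_{r\le s}h_r)^{-1/2}\ge(t-1)(1+H_T)^{-1/2}$ one gets $\sum_t\eta_t\le\eta_1\,O(\log T)\sqrt{1+H_T}$. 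Combining with Step 2 (and noting $\tfrac{\log K}{\eta_1}$ is dominated), $R_T\le O\big((d+\tfrac1\lambda)\eta_1\log T+\tfrac{\log K}{\eta_1}\big)\cdot\mathbb{E}\big[\sqrt{1+(\log K)^{-1}\sum_tH(q_t(X_t))}\big]$.

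It remains to self-bound the entropy factor by $\overline Q(\rho^*_T)$. From the elementary inequalities $q_{\rho^*}\log\tfrac1{q_{\rho^*}}\le1-q_{\rho^*}$ and (by concavity) $\sum_{a\ne\rho^*}q_a\log\tfrac1{q_a}\le(1-q_{\rho^*})\log\tfrac{K-1}{1-q_{\rho^*}}$, one has $H(q)\le(1-q_{\rho^*})\log\tfrac{e(K-1)}{1-q_{\rho^*}}$; splitting on whether $1-q_t(\rho^*_T(x)\mid x)$ is above or below $1/T$ then yields $\sum_tH(q_t(X_t))\le O(\log(KT))\big(1+\sum_t(1-q_t(\rho^*_T(X_t)\mid X_t))\big)$, and taking expectation with the i.i.d.\ swap applied once more turns $\sum_t(1-q_t(\rho^*_T(X_t)\mid X_t))$ into $\overline Q(\rho^*_T)$. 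By Jensen, $\mathbb{E}\big[\sqrt{1+(\log K)^{-1}\sum_tH(q_t(X_t))}\big]\le O\big((\log K)^{-1/2}\sqrt{\log(KT)}\big)\max\{\overline Q^{1/2}(\rho^*_T),1\}$; substituting this and using $\eta_1=\sqrt{(\tfrac1\lambda+d)\log T/\log K}$ only to absorb the boundary term produces the stated bound, with the stability part giving the $\eta_1(\tfrac1\lambda+d)\tfrac{\log T}{\sqrt{\log K}}$ term and the penalty part the $\tfrac1{\eta_1}\sqrt{\log K}$ term. The main obstacle I anticipate is precisely this coupling: the pointwise penalty is evaluated at a fresh context $X_0$ while the learning rate is built from the realized entropies $H(q_s(X_s))$, so the i.i.d.\ assumption must be invoked twice in a careful measurability-respecting way, and the entropy must be controlled by $1-q_t(\rho^*_T)$ even in the degenerate regime where $q_t$ concentrates super-polynomially fast, which is exactly what forces the $1/T$-truncation and hence the additive ``$+1$'' inside the $\max$.
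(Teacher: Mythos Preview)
Your proposal is correct and follows essentially the same route as the paper's proof in Appendix~\ref{appdx:regret_bound_ftrl}: the same stability/penalty decomposition via Proposition~\ref{lem:basic}, the same local-norm bound for the KL step (the paper invokes the $\xi(x)=e^{-x}+x-1$ inequality from Proposition~\ref{prp:lem8} together with $\eta_t|\widehat\ell_t|\le1$, which is your ``smallness condition''), the same learning-rate telescoping to get $\sum_t\eta_t$ and the penalty sum in terms of $\sqrt{\sum_t H(q_t(X_t))}$, and finally the same reduction of the entropy sum to $\overline{Q}(\rho^*_T)$ (the paper cites Proposition~\ref{lem:bound_H} from \citet{ito2022nearly}, whereas you re-derive the pointwise inequality $H(q)\le(1-q_{\rho^*})\log\tfrac{e(K-1)}{1-q_{\rho^*}}$ and truncate at $1/T$). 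The $X_0\leftrightarrow X_t$ swap you flag as the main obstacle is exactly the step the paper performs tacitly when passing from $\psi(q_t(X_0))$ to $\psi(q_t(X_t))$ in the proof of Lemma~\ref{lem:regret_basic_bounds}, and your measurability argument for it is the right one.
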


For each regime, we derive a specific upper bound. The proof is shown in Appendix~\ref{appdx:adversarial_regime}. First, from $\overline{Q}(\rho^*_T) \leq T$, the following regret bound holds without any assumptions on the loss; that is, it holds in an adversarial regime. 
\begin{theorem}[Regret upper bound in an adversarial regime]
    Consider the BoBW-RealFTRL.
    Assume that the loss is generated under an adversarial regime. Suppose that Assumption~\ref{asm:contextual_dist} and \ref{asm:contextual_dist2} hold. Then, the regret satisfies 
    \[R_T = {O}\left(\log(KT)\sqrt{ \left(\frac{1}{ \lambda} + d\right)\log(T)T}\right).\]
\end{theorem}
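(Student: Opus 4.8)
The plan is to obtain this bound as an immediate specialization of the general regret bound in Theorem~\ref{thm:regret_bound_realftrl}, since the adversarial regime places no structural restriction on the losses and the hypotheses of that theorem (Assumptions~\ref{asm:contextual_dist}, \ref{asm:contextual_dist2}, together with the standing boundedness assumptions) are exactly the ones assumed here. So I would first invoke Theorem~\ref{thm:regret_bound_realftrl} to write
\[
R_T = O\!\left(\left(\eta_1\left(\tfrac{1}{\lambda}+d\right)\tfrac{\log(T)}{\sqrt{\log(K)}} + \tfrac{1}{\eta_1}\sqrt{\log(K)}\right)\sqrt{\log(KT)}\,\max\!\big\{\overline{Q}^{1/2}(\rho^*_T),1\big\}\right),
\]
and then feed in (i) a crude bound on $\overline{Q}(\rho^*_T)$ and (ii) the prescribed value of $\eta_1$.

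For step (i): by definition $Q(\rho^*_T\mid x) = \sum_{t=1}^T\{1 - q_t(\rho^*_T(x)\mid x)\}$ is a sum of $T$ terms each lying in $[0,1]$, so $Q(\rho^*_T\mid x)\le T$ pointwise in $x$, and taking expectation over $X_0\sim\mathcal{G}$ gives $\overline{Q}(\rho^*_T)\le T$; hence $\max\{\overline{Q}^{1/2}(\rho^*_T),1\}\le\sqrt{T}$ for $T\ge1$. For step (ii): with $\tfrac{1}{\eta_1}=\sqrt{(\tfrac{1}{\lambda}+d)\log(T)/\log(K)}$ a direct computation shows the two summands in the leading factor coincide,
\[
\eta_1\left(\tfrac{1}{\lambda}+d\right)\tfrac{\log(T)}{\sqrt{\log(K)}} \;=\; \sqrt{\left(\tfrac{1}{\lambda}+d\right)\log(T)} \;=\; \tfrac{1}{\eta_1}\sqrt{\log(K)},
\]
so the leading factor is $O\!\left(\sqrt{(\tfrac{1}{\lambda}+d)\log(T)}\right)$. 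Combining the three pieces yields $R_T = O\!\left(\sqrt{(\tfrac{1}{\lambda}+d)\log(T)}\cdot\sqrt{\log(KT)}\cdot\sqrt{T}\right)$, and since $\sqrt{\log(KT)}\le\log(KT)$ this is $O\!\left(\log(KT)\sqrt{(\tfrac{1}{\lambda}+d)\log(T)\,T}\right)$, which is the claimed bound.

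As for the main obstacle: with Theorem~\ref{thm:regret_bound_realftrl} available, there is essentially none — the corollary is a one-line substitution plus the trivial estimate $\overline{Q}(\rho^*_T)\le T$. The genuine work lies entirely in Theorem~\ref{thm:regret_bound_realftrl} itself (the stability--penalty decomposition for Shannon-entropy FTRL, the second-moment bound $\mathbb{E}[\sum_a\pi_t(a\mid X_0)\widehat{\ell}_t(a,X_0)^2\mid\mathcal{F}_{t-1}]\le d$, the control of the adaptive learning rate through $\sum_s H(q_s(X_s))$, and the choice of mixing weight $\gamma_t=\eta_t/\lambda$), which is assumed here. The only mild subtlety in the present corollary is cosmetic: the target bound is stated with a factor $\log(KT)$ rather than the slightly sharper $\sqrt{\log(KT)}$ that the computation actually produces, and one just notes $\sqrt{\log(KT)}\le\log(KT)$ to match the stated form.
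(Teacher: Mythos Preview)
Your proposal is correct and follows exactly the paper's own approach: the paper states (just before the theorem) that ``from $\overline{Q}(\rho^*_T) \leq T$, the following regret bound holds without any assumptions on the loss,'' and your argument simply fills in the arithmetic of substituting $\eta_1$ into the general bound of Theorem~\ref{thm:regret_bound_realftrl}. Your computation of the two summands and the observation that the stated bound has a slightly looser $\log(KT)$ in place of $\sqrt{\log(KT)}$ are both accurate.
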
 

Next, we derive a regret upper bound in a stochastic regime with a margin condition.
\begin{theorem}[Regret upper bound in a stochastic regime with a margin condition]
\label{thm:ftrl_shanon_margin}
    Consider the BoBW-RealFTRL.
Assume that the loss is generated under a stochastic regime with a margin condition (Definition~\ref{def:margin}). Suppose that Assumption~\ref{asm:contextual_dist}, \ref{asm:contextual_dist2}--\ref{asm:bounded_random} hold. Then, the regret satisfies 
\[R_T = O\left(\left(\frac{1 + \beta}{\beta\Delta_*}\right)^{\frac{\beta}{2+\beta}}\left(\log(KT)\sqrt{ {\left(\frac{1}{ \lambda} + d\right)\log(T)}}\right)^{\frac{1+\beta}{2+\beta}}T^{\frac{1}{2+\beta}}\right).\] 
\end{theorem}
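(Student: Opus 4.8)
The plan is to follow the same two‑stage route as for the Tsallis‑INF counterpart (Theorem~\ref{thm:regret_upper_margin}): first turn the instance‑independent bound of Theorem~\ref{thm:regret_bound_realftrl} into a bound that is linear in $\sqrt{\overline{Q}(\rho_T^*)}$, and then close a self‑bounding inequality using the margin condition to eliminate $\overline{Q}(\rho_T^*)$.

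\textbf{Step 1.} I would plug the prescribed $1/\eta_1 = \sqrt{(\tfrac1\lambda+d)\log(T)/\log(K)}$ into Theorem~\ref{thm:regret_bound_realftrl}; a one‑line computation shows that the two summands in the bracket are both exactly $\sqrt{(\tfrac1\lambda+d)\log(T)}$, giving
\[
  R_T = O\Big( \sqrt{\big(\tfrac1\lambda+d\big)\log(T)\,\log(KT)}\;\max\{\overline{Q}^{1/2}(\rho_T^*),\,1\}\Big).
\]
If $\overline{Q}(\rho_T^*)\le 1$ this is already dominated by the target expression (which carries a $T^{1/(2+\beta)}$ factor), so I would assume $\overline{Q}(\rho_T^*)\ge 1$ and keep only the $\overline{Q}^{1/2}(\rho_T^*)$ factor.

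\textbf{Step 2.} Next I would prove the analogue of Lemma~\ref{lem:omegaRmargin} with $\rho_T^*$ in the role of $a_t^\dagger$. Using \eqref{eq:reg_lower}, $\pi_t \ge (1-\gamma_t)q_t$ and $\gamma_t\le \tfrac12$, one gets $R_T \ge \tfrac12\mathbb{E}\big[\sum_{t=1}^T \Delta(X_t)(1-q_t(\rho_T^*(X_t)\mid X_t))\big]$. For each $t$ I would condition on $\mathcal{F}_{t-1}$ (so $q_t$ is fixed and $X_t$ is a fresh draw from $\mathcal{G}$) and apply Lemma~\ref{lem:margin} with $U = 1-q_t(\rho_T^*(X_t)\mid X_t)$, $V=\Delta(X_t)$, obtaining $\mathbb{E}[\Delta(X_t)(1-q_t(\rho_T^*(X_t)\mid X_t))] \gtrsim \tfrac{\beta\Delta_*}{1+\beta}\,v_t^{(1+\beta)/\beta}$ with $v_t := \mathbb{E}[1-q_t(\rho_T^*(X_t)\mid X_t)]$. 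Summing over $t$, using Jensen's inequality for the convex map $x\mapsto x^{(1+\beta)/\beta}$, and noting $\overline{Q}(\rho_T^*) = \sum_{t=1}^T v_t$ (since the fresh context $X_0$ has law $\mathcal{G}$ and is independent of $q_t$), I would conclude
\[
  \overline{Q}(\rho_T^*) \;\le\; \Big(\tfrac{2(1+\beta)R_T}{\beta\Delta_*}\Big)^{\beta/(1+\beta)}\,T^{1/(1+\beta)}.
\]

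\textbf{Step 3.} Substituting this estimate into the Step~1 bound yields $R_T \le a\,R_T^{v}$ with $v = \tfrac{\beta}{2(1+\beta)}\in(0,1)$ and $a = O\big(\sqrt{(\tfrac1\lambda+d)\log T\,\log(KT)}\,(\tfrac{1+\beta}{\beta\Delta_*})^{\beta/(2(1+\beta))}T^{1/(2(1+\beta))}\big)$; the elementary fact used after the proof of Theorem~\ref{thm:regret_upper_margin} (that $x\le ax^v$ implies $x=O(a^{1/(1-v)})$), with $\tfrac1{1-v} = \tfrac{2(1+\beta)}{2+\beta}$, then gives the claimed bound once the exponents are collected. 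The main obstacle is Step~2: one must relate $\overline{Q}(\rho_T^*)$ to the regret lower bound while respecting the $(1-\gamma_t)$ exploration factor, and — exactly as in Lemma~\ref{lem:omegaRmargin} — deal with the boundary case $v_t > \tfrac12$, where Lemma~\ref{lem:margin} does not apply verbatim; there one truncates $v_t$ at $\tfrac12$ and charges the leftover mass to $R_T$ by a short case analysis, since a round with $v_t > \tfrac12$ already costs order $\Delta_*$ (up to a $\beta$‑dependent constant) in regret. The remaining work is routine algebra on exponents.
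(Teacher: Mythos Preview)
Your proposal is correct and follows essentially the same route the paper intends: the paper omits the proof and points to Theorem~\ref{thm:regret_upper_margin}, and you carry out precisely that adaptation---start from the general bound of Theorem~\ref{thm:regret_bound_realftrl}, derive a margin-based lower bound on $R_T$ in terms of the relevant occupancy quantity via Lemma~\ref{lem:margin} and Jensen, then close the self-bounding inequality with the same algebraic step $x\le ax^v\Rightarrow x=O(a^{1/(1-v)})$. Your replacement of $a_t^\dagger$ by $\rho_T^*$ is exactly the right adaptation, since the RealFTRL bound is already phrased in terms of $\overline{Q}(\rho_T^*)$ rather than $\omega_t$; this lets you skip the finite-support step \eqref{eq:boundut} (and indeed Assumption~\ref{asm:contextual_dist_stochastic} is not assumed here), and your handling of the $v_t>\tfrac12$ boundary case is more careful than the paper's own treatment in Lemma~\ref{lem:omegaRmargin}.
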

We omit the proof because it is almost the same as that for Theorem~\ref{thm:regret_upper_margin} for the $1/2$-LC-Tsallis-INF

Furthermore, we derive a regret bound under the linear contextual adversarial regime with a self-bounding constraint. The proof is provided in Appendix~\ref{appdx:adversarial_regime}
\begin{theorem}[Regret bounds under the linear contextual adversarial regime with a self-bounding constraint]
\label{cor:adversarial_regime}
Consider the BoBW-RealFTRL.
Assume that the loss is generated under a linear contextual adversarial regime with a self-bounding constraint (Definition~\ref{def:bounding}). Suppose that Assumption~\ref{asm:contextual_dist}, \ref{asm:contextual_dist2}--\ref{asm:bounded_random} hold. Then, the regret satisfies
\[
R_T = O\left(\frac{\left(\frac{1}{ \lambda} + d\right)\log(T)\log(KT)}{\Delta_{*}} + \sqrt{\frac{C\left(\frac{1}{ \lambda} + d\right)\log(T)\log(KT)}{\Delta_{*}}}\right).\] 
\end{theorem}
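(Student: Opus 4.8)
The plan is to combine the general regret bound of Theorem~\ref{thm:regret_bound_realftrl} with the self-bounding lower bound supplied by Definition~\ref{def:bounding}, and then resolve the resulting self-referential inequality for $R_T$.

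First I would specialize Theorem~\ref{thm:regret_bound_realftrl} to the chosen learning rate. With $1/\eta_1 = \sqrt{(\tfrac1\lambda + d)\log(T)/\log(K)}$, both terms $\eta_1(\tfrac1\lambda+d)\log(T)/\sqrt{\log(K)}$ and $\sqrt{\log(K)}/\eta_1$ equal $\sqrt{(\tfrac1\lambda+d)\log(T)}$, so Theorem~\ref{thm:regret_bound_realftrl} gives $R_T = O\big( B\,\max\{\overline{Q}^{1/2}(\rho^*_T),1\} \big) = O\big( B\sqrt{\overline{Q}(\rho^*_T)} + B \big)$, where I abbreviate $B \coloneqq \sqrt{(\tfrac1\lambda+d)\log(T)\log(KT)}$.

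Next I would extract the lower bound. Definition~\ref{def:bounding} gives $R_T \ge \mathbb{E}\big[\sum_{t=1}^T\sum_{a\neq\rho^*_T(X_t)}\Delta_t(a\mid X_t)\pi_t(a\mid X_t)\big] - C$; using $\pi_t \ge (1-\gamma_t)q_t \ge q_t/2$ (valid once $\gamma_t\le 1/2$, which holds for $T$ large enough relative to $K$ and $1/\lambda$ since $\gamma_t = \eta_t/\lambda \le \eta_1/\lambda$) together with the constraint $\Delta_t(a\mid x)\ge \Delta_*$ for all $a\neq\rho^*_T(x)$, this becomes $R_T \ge \tfrac{\Delta_*}{2}\,\mathbb{E}\big[\sum_{t=1}^T(1-q_t(\rho^*_T(X_t)\mid X_t))\big] - C$. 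Since $X_t$ is i.i.d.\ with the independent copy $X_0$ and $q_t$ is $\mathcal{F}_{t-1}$-measurable, the expectation equals $\overline{Q}(\rho^*_T)$, hence $\overline{Q}(\rho^*_T) \le 2(R_T + C)/\Delta_*$.

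Substituting this into the upper bound yields the self-bounding inequality $R_T \le cB\sqrt{2(R_T+C)/\Delta_*} + cB$ for a universal constant $c$. I would resolve it by a case split: if $R_T \le C$, the inequality gives $R_T = O\big(\sqrt{CB^2/\Delta_*} + B\big)$; if $R_T > C$, then $R_T+C < 2R_T$, and solving $R_T \le 2cB\sqrt{R_T/\Delta_*} + cB$ gives $R_T = O\big(B^2/\Delta_* + B\big)$. (Equivalently, one may use the standard ``$(1+2z)R_T - 2zR_T$'' trick and optimize over $z>0$, or invoke a lemma of the type used for Theorem~4 in \citet{Masoudian2021}.) Combining the two cases and absorbing the lower-order $B$ into $B^2/\Delta_*$ (which holds once $\Delta_*\lesssim B$, hence for all large $T$; for small $T$ the regret is trivially bounded) yields $R_T = O\big(B^2/\Delta_* + \sqrt{CB^2/\Delta_*}\big)$, which is exactly the stated bound. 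The only real obstacle is bookkeeping — pinning down $c$ from the simplified Theorem~\ref{thm:regret_bound_realftrl}, verifying $\gamma_t\le 1/2$, and checking that the case analysis covers the full stated range of $C$; the margin-condition variant (Theorem~\ref{thm:ftrl_shanon_margin}) then follows by the identical argument used for Theorem~\ref{thm:regret_upper_margin}, as the paper notes.
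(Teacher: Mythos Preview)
Your proposal is correct and follows essentially the same route as the paper: combine the general upper bound (Theorem~\ref{thm:regret_bound_realftrl}) with the self-bounding lower bound from Definition~\ref{def:bounding}, then solve the resulting inequality for $R_T$. The only cosmetic difference is that the paper resolves the self-bounding inequality via the ``$(1+\lambda)R_T - \lambda R_T$'' trick followed by $a\sqrt{b}-\tfrac{c}{2}b\le a^2/(2c)$ and optimizing over $\lambda$, whereas you do a case split on $R_T\lessgtr C$; you explicitly note these are equivalent, and indeed they are. Your handling of the $\pi_t\to q_t$ transition (via $\pi_t\ge(1-\gamma_t)q_t\ge q_t/2$) is in fact more careful than the paper's presentation, which writes the equality $\Delta_*\,\mathbb{E}[\sum_t(1-\pi_t(\rho^*_T(X_0)\mid X_0))]=\Delta_*\overline{Q}(\rho^*_T)$ without making that factor-of-two step explicit.
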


Note that the BoBW-RealFTRL does not require Assumption~\ref{asm:contextual_dist_stochastic} in stochastic regimes.

\section{Proof of Theorem~\ref{thm:regret_bound_realftrl}}
\label{appdx:regret_bound_ftrl}
This section provides the proof of Theorems~\ref{thm:regret_bound_realftrl}. 

As well as Section~\ref{appdx:stab_penal}, based on the result in Proposition~\ref{lem:basic}, we define 
\begin{align*}
    &r_T(x) \coloneqq\\
    &\sum^T_{t=1}\left( \gamma_t + \left\langle \widehat{\ell}_t(x), q_t(x) - q_{t+1}(x) \right\rangle- \frac{1}{\eta_t}D(q_{t+1}(x), q_t(x)) \right. 
    \left. - \left(\frac{1}{\eta_{t}} - \frac{1}{\eta_{t-1}}\right)\psi\big(q_t(x)\big) \right),
\end{align*}
where we replace $H(q(x))$ with $\psi(q(x))$. 
Note that we define $1/\eta_{0} = 0$ as an exception. Using this function $r_T(x) $, we can bound $R_T$ as 
$
    R_T \leq \mathbb{E}\big[r_T(X_0)\big].
$

We decompose the pointwise regret upper bound $r_T(x)$ as follows: 
\begin{align*}
    r_T(x) 
    &= \annot{
        \sum^T_{t=1} \left( \gamma_t + \left\langle \widehat{\ell}_t(x), q_t(x) - q_{t+1}(x) \right\rangle - \frac{1}{\eta_t}D(q_{t+1}(x), q_t(x)) \right) 
        }{=\ stability term}
    \nonumber\\
    &\ \ \ \ \ \ \ \ \ \ \ \ \ \ \ \ \ \ \ \ \ 
    +
    \annot{
            \left(
            - 
            \sum^T_{t=1}\left(\frac{1}{\eta_{t}} - \frac{1}{\eta_{t-1}}\right)\psi\big(q_t(x)\big)
            \right)
    }{=\ penalty term}. 
\end{align*}

To bound the stability term $\left\langle \widehat{\ell}_t(x), q_t(x) - q_{t+1}(x) \right\rangle - \frac{1}{\eta_t}D(q_{t+1}(x), q_t(x))$, we use the following proposition from \citet{ito2022nearly}.
\begin{proposition}[From Lemma~8 in \citet{ito2022nearly}]
\label{prp:lem8}
   Consider the BoBW-RealFTRL. For all $x\in\mathcal{X}$, all $\ell: \mathcal{X} \to \mathbb{R}^K$ and $p, q \in\Pi$, it holds that
    \begin{align*}
        &\Big\langle \ell_t(x), p(x) - q(x) \Big\rangle - \frac{1}{\eta_t}D(q(x), p(x)) \leq \frac{1}{\eta_t}\sum_{a\in[K]}p(a\mid x)\xi\Big(\eta_t{\ell_t(a, x)}\Big),
    \end{align*}
    where $\xi(x) \coloneqq \exp(-x) + x - 1$. 
\end{proposition}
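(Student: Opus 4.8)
The plan is to fix the context $x$ and treat the claim as a pointwise statement in $\mathbb{R}^K$; I write $p_a = p(a\mid x)$, $q_a = q(a\mid x)$, $\ell_a = \ell_t(a,x)$, and $\eta = \eta_t$. For the negative Shannon entropy $\psi(q) = \sum_{a}q_a\log q_a$ used by the BoBW-RealFTRL, the associated Bregman divergence is the Kullback--Leibler divergence, $D(q,p) = \sum_a q_a\log(q_a/p_a)$, which I would verify directly from $D(q,p) = \psi(q) - \psi(p) - \langle \nabla\psi(p), q - p\rangle$ using $\sum_a q_a = \sum_a p_a = 1$ (the linear $\langle \nabla\psi(p), q-p\rangle$ contribution of the ``$+1$'' in $\nabla\psi(p)_a = \log p_a + 1$ cancels). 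Since the right-hand side of the claimed inequality does not depend on $q$, it suffices to bound the supremum over the simplex of the concave function $g(q) \coloneqq \langle \ell, p - q\rangle - \tfrac{1}{\eta}D(q,p)$.

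Next I would maximize $g$ explicitly. Being the sum of an affine term and $-\tfrac{1}{\eta}\mathrm{KL}(\,\cdot\,\|p)$, the map $q\mapsto g(q)$ is concave, so its maximum over $\mathcal{P}_K$ is attained at the stationary point of the Lagrangian $g(q) - \mu\big(\sum_a q_a - 1\big)$. Setting $\partial_{q_a} g - \mu = -\ell_a - \tfrac{1}{\eta}\big(\log(q_a/p_a) + 1\big) - \mu = 0$ yields the softmax solution $q_a^\star = p_a e^{-\eta \ell_a}/Z$, where $Z \coloneqq \sum_b p_b e^{-\eta \ell_b}$. Because $q_a^\star > 0$, the nonnegativity constraints are inactive, so $q^\star$ is genuinely the constrained maximizer. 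Substituting back and using $D(q^\star, p) = \sum_a q_a^\star\big(-\eta\ell_a - \log Z\big) = -\eta\langle \ell, q^\star\rangle - \log Z$, the two copies of $\langle \ell, q^\star\rangle$ cancel and I obtain $\sup_q g(q) = g(q^\star) = \langle \ell, p\rangle + \tfrac{1}{\eta}\log Z$, the standard log-sum-exp (exponential-weights) bound.

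Finally I would convert this into the stated $\xi$-form via the elementary inequality $\log z \le z - 1$ for $z>0$. Applying it with $z = Z$ gives $g(q^\star) \le \langle \ell, p\rangle + \tfrac{1}{\eta}(Z - 1)$. On the other hand, expanding $\tfrac{1}{\eta}\sum_a p_a\,\xi(\eta\ell_a) = \tfrac{1}{\eta}\sum_a p_a\big(e^{-\eta\ell_a} + \eta\ell_a - 1\big) = \langle \ell, p\rangle + \tfrac{1}{\eta}(Z - 1)$, using $\sum_a p_a = 1$, shows the two expressions coincide, so $g(q) \le \tfrac{1}{\eta}\sum_a p_a\,\xi(\eta\ell_a)$ for every $q\in\mathcal{P}_K$, which is exactly the claim. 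The argument is essentially routine; the only step that warrants care is the exact maximization over the simplex, namely checking that the softmax stationary point lies in the relative interior so that it is the true constrained maximizer, after which the $\log z \le z - 1$ step is immediate. This reproduces Lemma~8 of \citet{ito2022nearly}.
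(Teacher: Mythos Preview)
Your argument is correct: the explicit maximization over $q$ via the softmax/KKT stationary point, the identification $\sup_q g(q) = \langle \ell, p\rangle + \tfrac{1}{\eta}\log Z$, and the final step $\log Z \le Z - 1$ together with the expansion of $\tfrac{1}{\eta}\sum_a p_a\,\xi(\eta\ell_a)$ are all valid, and the observation that the right-hand side is independent of $q$ justifies proving the inequality only at the maximizer.

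There is no proof in the paper to compare against: the authors state Proposition~\ref{prp:lem8} with the attribution ``From Lemma~8 in \citet{ito2022nearly}'' and use it as a black box in the proof of Lemma~\ref{thm:basic_bounds} without rederiving it. Your derivation is the standard one (exponential-weights duality followed by $\log z \le z-1$), and it is exactly how Lemma~8 of \citet{ito2022nearly} is obtained.
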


By using Proposition~\ref{prp:lem8}, we obtain the following lemma. 
\begin{lemma}
\label{thm:basic_bounds}
The regret for the BoBW-RealFTRL satisfies
    \begin{align*}
        R_T &\leq \mathbb{E}\left[\sum^T_{t=1}\left(\gamma_t + 3\eta_td + \left(
            - 
            \sum^T_{t=1}\left(\frac{1}{\eta_t} - \frac{1}{\eta_{t-1}}\right)\psi\big(q_t(X_0)\big)
            \right)\right)\right].
    \end{align*}
\end{lemma}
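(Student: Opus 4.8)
The plan is to start from the regret decomposition already established via Proposition~\ref{lem:basic}, namely $R_T \le \mathbb{E}[r_T(X_0)]$ with $r_T(x)$ split into the stability term and the penalty term as displayed just above the lemma. The penalty term $-\sum_{t=1}^T\left(\frac{1}{\eta_t}-\frac{1}{\eta_{t-1}}\right)\psi(q_t(x))$ already appears verbatim on the right-hand side of the claimed bound, so all the work is in controlling the stability term $\sum_{t=1}^T\left(\gamma_t + \langle \widehat{\ell}_t(x), q_t(x)-q_{t+1}(x)\rangle - \frac{1}{\eta_t}D(q_{t+1}(x),q_t(x))\right)$. The $\sum_t \gamma_t$ piece is simply carried through. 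For the remaining piece, I would apply Proposition~\ref{prp:lem8} with $p=q_t$, $q=q_{t+1}$ and loss $\widehat{\ell}_t$, giving
\[
\left\langle \widehat{\ell}_t(x), q_t(x)-q_{t+1}(x)\right\rangle - \frac{1}{\eta_t}D(q_{t+1}(x),q_t(x)) \le \frac{1}{\eta_t}\sum_{a\in[K]} q_t(a\mid x)\,\xi\!\left(\eta_t\widehat{\ell}_t(a,x)\right),
\]
where $\xi(u)=\exp(-u)+u-1$.

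Next I would bound $\xi(\eta_t\widehat{\ell}_t(a,x))$ pointwise. Using Lemma~\ref{lem:upper_loss} together with the definition $\gamma_t=\eta_t/\lambda$ for the BoBW-RealFTRL, one checks that $\eta_t|\widehat{\ell}_t(a,x)|$ is bounded (by a constant such as $1$), on which range $\xi(u)\le u^2$ holds — or more simply $\xi(u)\le \frac{3}{2}u^2$ for $|u|\le 1$, from which I would keep a clean constant like $3$ after the expectation step. Thus $\frac{1}{\eta_t}\sum_a q_t(a\mid x)\xi(\eta_t\widehat{\ell}_t(a,x)) \le c\,\eta_t\sum_a q_t(a\mid x)\widehat{\ell}_t(a,x)^2$. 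I would then take the expectation, pass from $q_t$ to $\pi_t$ using $\pi_t \ge (1-\gamma_t)q_t \ge q_t/2$, and invoke Lemma~\ref{prp:lemma6_neu}, which gives $\mathbb{E}\big[\sum_a \pi_t(a\mid X_0)\widehat{\ell}_t(a,X_0)^2 \mid \mathcal{F}_{t-1}\big]\le d$. This yields $\mathbb{E}[\text{stability}_t(X_0)] \le \gamma_t + 3\eta_t d$ after absorbing constants, and summing over $t$ and adding back the (unchanged) penalty term gives exactly the stated inequality.

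The main obstacle is the constant-bookkeeping around the range condition $\eta_t|\widehat{\ell}_t(a,x)|\le 1$: one must verify, using the explicit definitions of $\eta_t$, $\gamma_t=\eta_t/\lambda$, and $\frac{1}{\eta_1}=\sqrt{(\frac{1}{\lambda}+d)\log(T)/\log(K)}$, that the product is indeed bounded by the threshold needed for the quadratic bound on $\xi$, and then to propagate a clean numerical constant (here $3$) through the $\pi_t\ge q_t/2$ step and the application of Lemma~\ref{prp:lemma6_neu}. Everything else is a direct chaining of Proposition~\ref{lem:basic}, Proposition~\ref{prp:lem8}, Lemma~\ref{lem:upper_loss}, and Lemma~\ref{prp:lemma6_neu}; no new idea is required beyond the standard FTRL stability/penalty argument.
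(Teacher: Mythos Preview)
Your plan is correct and matches the paper's own argument: start from Proposition~\ref{lem:basic}, apply Proposition~\ref{prp:lem8} with $p=q_t$, use Lemma~\ref{lem:upper_loss} together with $\gamma_t=\eta_t/\lambda$ to get $\eta_t|\widehat{\ell}_t(a,x)|\le 1$, bound $\xi(u)\le u^2$ on $u\ge -1$, and conclude via Lemma~\ref{prp:lemma6_neu}. One simplification: the range check is immediate from $\eta_t/(\lambda\gamma_t)=1$ and does not require the explicit form of $\eta_1$; the constant $3$ then absorbs the $q_t\le \pi_t/(1-\gamma_t)$ conversion exactly as you anticipate.
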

\begin{proof}[Proof of Lemma~\ref{thm:basic_bounds}]
From Proposition~\ref{lem:basic}, we have
\begin{align*}
    R_T \leq \mathbb{E}\big[r_T(X_0)\big],
\end{align*}
where 
\begin{align*}
    r_T(x) 
    &= \annot{
        \sum^T_{t=1} \left( \gamma_t + \left\langle \widehat{\ell}_t(x), q_t(x) - q_{t+1}(x) \right\rangle - \frac{1}{\eta_t}D(q_{t+1}(x), q_t(x)) \right) 
        }{=\ stability term}
    \nonumber\\
    &\ \ \ \ \ \ \ \ \ \ \ \ \ \ \ \ \ \ \ \ \ 
    +
    \annot{
            \left(
            - 
            \sum^T_{t=1}\left(\frac{1}{\eta_t} - \frac{1}{\eta_{t-1}}\right)\psi\big(q_t(x)\big)
            \right)
    }{=\ penalty term}.
\end{align*}

To prove the statement, we bound the stability term as 
\begin{align}
\label{eq:target_lemma49}
    &\mathbb{E}\left[\left\langle \widehat{\ell}_t(X_0), \pi_t(X_0) - q_{t+1}(X_0) \right\rangle - D_t(q_{t+1}(X_0), \pi_t(X_0))\right] \leq  3\eta_td. 
\end{align}

To show this, from Proposition~\ref{prp:lem8}, we have
\begin{align*}
    &\left\langle \widehat{\ell}_t(x), \pi_t(x) - q_{t+1}(x) \right\rangle - D_t(q_{t+1}(x), \pi_t(x))\leq \frac{1}{\eta_t}\sum_{a\in[K]}\pi_t(a\mid x)\xi\left(\eta_t{\widehat{\ell}_t(a, x)}\right).
\end{align*}

Here, from Lemma~\ref{lem:upper_loss}, we have $\eta_t \widehat{\ell}_t(a, x) \geq -\eta_t/ \left(\lambda \gamma_t \right)$. Additionally, since $\gamma_t = \frac{\eta_t}{ \lambda}$, $\eta_t{\widehat{\ell}_t(a, x)} = -1$ holds. Then, we have
\begin{align*}
    &\left\langle \widehat{\ell}_t(x), \pi_t(x) - q_{t+1}(x) \right\rangle - D_t(q_{t+1}(x), \pi_t(x))\\
    &\leq \frac{1}{\eta_t}\sum_{a\in[K]}\pi_t(a\mid x)\xi\left(\eta_t{\widehat{\ell}_t(a, x)}\right)\\
    &\leq  \eta_t\sum_{a\in[K]}\pi_t(a\mid x)\widehat{\ell}^2_t(a, x). 
\end{align*}

Lastly, from Lemma~\ref{prp:lemma6_neu}, which states that 
$   \mathbb{E}\left[\sum_{a\in[K]}\pi_t(a\mid X_0)\widehat{\ell}_t(a, X_0)^2\mid \mathcal{F}_{t-1}\right] \leq d
$, we have \eqref{eq:target_lemma49}. 
\end{proof}

From this result, we obtain the following lemma. 
\begin{lemma}
\label{lem:regret_basic_bounds}
    Assume the conditions in Theorem~\ref{thm:basic_bounds}. Consider the BoBW-RealFTRL. Then, we have
    \begin{align*}
        R_T \leq \overline{c} \sqrt{\mathbb{E}\left[\sum^T_{t=1}\psi(q_t(X_0))\right]},
    \end{align*}
    where $\overline{c} = O\left( \eta_1\left(\frac{1}{ \lambda} + d\right)\frac{\log(T)}{\sqrt{\log(K)}} + \frac{1}{\eta_1}\sqrt{\log(K)}\right)$. 
\end{lemma}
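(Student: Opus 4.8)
The plan is to feed Lemma~\ref{thm:basic_bounds} into the adaptive‑learning‑rate machinery. Substituting $\gamma_t=\eta_t/\lambda$ into that lemma and using $\gamma_t+3\eta_t d\le 3(\frac1\lambda+d)\eta_t$ reduces the goal to bounding, in expectation, the two quantities $\sum_{t=1}^T\eta_t$ and the penalty $-\sum_{t=1}^T(\frac1{\eta_t}-\frac1{\eta_{t-1}})\psi(q_t(X_0))$. Throughout I write $H(q)\coloneqq-\psi(q)=\sum_{a}q(a)\log(1/q(a))\ge0$ for the Shannon entropy — this is the nonnegative object that enters the learning rate, and the bound to be proved reads $R_T\le\overline c\,\sqrt{\mathbb E[\sum_tH(q_t(X_0))]}$ (up to a caveat noted at the end). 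Set $h_t\coloneqq H(q_t(X_t))\in[0,\log K]$ and $\bar h_t\coloneqq 1+(\log K)^{-1}\sum_{s=1}^th_s$, so that $\bar h_0=1$, $\bar h_t$ is nondecreasing, $\bar h_t\le2\bar h_{t-1}$ (since $h_t\le\log K$ and $\bar h_{t-1}\ge1$), and the update becomes $\frac1{\eta_t}-\frac1{\eta_{t-1}}=\frac1{\eta_1\sqrt{\bar h_{t-1}}}$ for $t\ge2$ with $\frac1{\eta_1}-\frac1{\eta_0}=\frac1{\eta_1}$, $\frac1{\eta_0}=0$.

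For the first quantity, telescoping the update gives $\frac1{\eta_t}=\frac1{\eta_1}\big(1+\sum_{s=1}^{t-1}\bar h_s^{-1/2}\big)\ge\frac{t}{\eta_1\sqrt{\bar h_{t-1}}}\ge\frac{t}{\eta_1\sqrt{\bar h_T}}$, using monotonicity of $\bar h$ and $\bar h_s\ge1$; hence $\eta_t\le\eta_1\sqrt{\bar h_T}/t$ and $\sum_{t=1}^T\eta_t\le\eta_1\sqrt{\bar h_T}\,(1+\log T)$.

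For the penalty, the crucial observation is that both the increment $\frac1{\eta_t}-\frac1{\eta_{t-1}}$ and the map $q_t$ are $\mathcal F_{t-1}$‑measurable, while $X_0$ and $X_t$ are i.i.d.\ from $\mathcal G$ and independent of $\mathcal F_{t-1}$; so the tower rule gives $\mathbb E[(\frac1{\eta_t}-\frac1{\eta_{t-1}})H(q_t(X_0))]=\mathbb E[(\frac1{\eta_t}-\frac1{\eta_{t-1}})h_t]$, which swaps the ``test'' context for the realized one and makes the sum telescope along the realized trajectory. Since $q_1$ is uniform, $H(q_1)=\log K$, so $\mathbb E[\mathrm{penalty}]=\frac{\log K}{\eta_1}+\frac1{\eta_1}\mathbb E[\sum_{t=2}^Th_t/\sqrt{\bar h_{t-1}}]$; combining $\bar h_{t-1}\ge\bar h_t/2$ with the elementary inequality $\frac{a-b}{\sqrt a}\le2(\sqrt a-\sqrt b)$ for $0<b\le a$ yields $h_t/\sqrt{\bar h_{t-1}}\le2\sqrt2\,\log K\,(\sqrt{\bar h_t}-\sqrt{\bar h_{t-1}})$, hence $\sum_{t=2}^Th_t/\sqrt{\bar h_{t-1}}\le2\sqrt2\,\log K\,\sqrt{\bar h_T}$ and the penalty is $O(\frac{\log K}{\eta_1}\sqrt{\bar h_T})$ (using $\sqrt{\bar h_T}\ge1$).

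Assembling the two bounds gives $R_T=O\big((\eta_1(\tfrac1\lambda+d)\log T+\tfrac{\log K}{\eta_1})\,\mathbb E[\sqrt{\bar h_T}]\big)$. By Jensen $\mathbb E[\sqrt{\bar h_T}]\le\sqrt{\mathbb E[\bar h_T]}=\sqrt{1+(\log K)^{-1}\mathbb E[\sum_th_t]}$, and one more application of the tower rule identifies $\mathbb E[\sum_th_t]=\mathbb E[\sum_tH(q_t(X_t))]=\mathbb E[\sum_tH(q_t(X_0))]=:\mathcal H$; thus $\mathbb E[\sqrt{\bar h_T}]\le1+\sqrt{\mathcal H/\log K}$. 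Expanding and regrouping, the $\sqrt{\mathcal H/\log K}$ contribution is precisely $O(\overline c\,\sqrt{\mathcal H})$ with $\overline c=O(\eta_1(\tfrac1\lambda+d)\tfrac{\log T}{\sqrt{\log K}}+\tfrac1{\eta_1}\sqrt{\log K})$, while the remaining additive contribution equals $O(\overline c\,\sqrt{\log K})$; so in fact $R_T=O(\overline c\,\sqrt{\mathcal H+\log K})$, which is the claimed bound whenever $\mathcal H\gtrsim\log K$ and is otherwise absorbed into the $\max\{\cdot,1\}$ used downstream in Theorem~\ref{thm:regret_bound_realftrl}. The main obstacle is exactly the $X_0$‑versus‑$X_t$ mismatch: a pathwise telescoping of the penalty is impossible because the ``test'' context and the learning‑rate trajectory live on different samples, and one must insert conditional expectations at the right step using the $\mathcal F_{t-1}$‑measurability of $q_t$ and $\eta_t$; the low‑entropy edge case (the source of the $\max\{\cdot,1\}$) is the only other point requiring care.
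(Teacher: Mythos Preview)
Your proof is correct and follows essentially the same route as the paper: bound $\sum_t\eta_t$ via the lower bound $1/\eta_t\ge t/(\eta_1\sqrt{\bar h_{t-1}})$, telescope the penalty along the realized trajectory $h_t=H(q_t(X_t))$, and finish with Jensen. You are in fact more careful than the paper on two points it glosses over: you explicitly justify the $X_0\to X_t$ swap via $\mathcal F_{t-1}$-measurability of $(\eta_t,q_t)$ and the tower rule (the paper simply writes the equality without comment), and you correctly flag that the statement should involve the nonnegative entropy $H=-\psi$ rather than $\psi$ and that an additive $O(\overline c\sqrt{\log K})$ survives, which is exactly what the downstream $\max\{\overline Q^{1/2},1\}$ in Theorem~\ref{thm:regret_bound_realftrl} absorbs.
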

\begin{proof}
We prove the following inequalities:
\begin{align}
\label{eq:target1}
    \sum^T_{t=1}\left(\gamma_t + 3\eta_t d\right)  &= O\left(\eta_1\left(\frac{1}{\lambda} + d\right)\frac{\log(T)}{\sqrt{\log(K)}}\sqrt{\sum^T_{t=1}\psi\big(q_t(X_t)\big)}\right),\\
     \label{eq:target2}
    - \sum^T_{t=1}\left(\frac{1}{\eta_t} - \frac{1}{\eta_{t-1}}\right)\psi\big(q_t(x)\big) &= O\left( \beta_1\sqrt{\log(K)} \sqrt{\sum^T_{t=1}H(q_t(X_{t}))} \right).
\end{align}

\paragraph{Proof of \eqref{eq:target1}} From $\gamma_t = \frac{\eta_t}{ \lambda}$, it holds that
\begin{align*}
\sum^T_{t=1}\left(\gamma_t + 3\eta_t d\right) = \sum^T_{t=1}\left( \frac{\eta_t}{ \lambda} + 3\eta_t d\right) = \left(\frac{1}{\lambda} + {3d}\right)\sum^T_{t=1}\eta_t.
\end{align*}
From $\frac{1}{\eta_{t+1}}  \coloneqq \frac{1}{\eta_t}  + \frac{1}{\eta_1} \frac{1}{\sqrt{1 + \big(\log(K)\big)^{-1}\sum^t_{s=1}H\big(q_s(X_s)\big)}}$, we have 
\begin{align*}
    \frac{1}{\eta_{t}} &= \frac{1}{\eta_{1}} + \sum^{t-1}_{u=1}\frac{1}{\eta_{1}}\frac{1}{\sqrt{1 + \big(\log(K)\big)^{-1}\sum^u_{s=1}\psi\big(q_{s}(X_s)\big)}}\\
    &\geq \frac{1}{\eta_1}\frac{t}{\sqrt{1 + \big(\log(K)\big)^{-1}\sum^t_{s=1}\psi\big(q_{s}(X_s)\big)}}.
\end{align*}
Therefore, we have
\begin{align*}
    \sum^T_{t=1}\eta_t &\leq \sum^T_{t=1}\eta_1\frac{\sqrt{1 + \big(\log(K)\big)^{-1}\sum^t_{s=1}H\big(q_{s}(X_s)\big)}}{t}\\
    &\leq \eta_1\left(1 + \log(T)\right)\sqrt{1 + \big(\log(K)\big)^{-1}\sum^T_{t=1}\psi\big(q_t(X_t)\big)}.
\end{align*}
Therefore, we obtain
\begin{align*}
\sum^T_{t=1}\left(\gamma_t + 3\eta_t d\right) = O\left( \eta_1\left(\frac{1}{\lambda} + d\right)\frac{\log(T)}{\sqrt{\log(K)}}\sqrt{\sum^T_{t=1}\psi\big(q_t(X_t)\big)}\right).
\end{align*}

\paragraph{Proof of \eqref{eq:target2}} From the definitions of $\beta_t$ and $\gamma_t$, we have
\begin{align*}
    &- \sum^T_{t=1}\left(\frac{1}{\eta_t} - \frac{1}{\eta_{t-1}}\right)\psi\big(q_t(x)\big)\\
    &= \sum^T_{t=1}\frac{1}{\eta_1\sqrt{1 + \big(\log(K)\big)^{-1}\sum^{t-1}_{s=1}\psi\big(q_{s}(X_s)\big)}}\psi(q_t(X_t))\\
    &= 2\frac{1}{\eta_1}\sqrt{\log(K)}\sum^T_{t=1}\frac{\psi\big(q_t(X_t)\big)}{\sqrt{\log(K) + \sum^{t-1}_{s=1}\psi\big(q_{s}(X_s)\big)} + \sqrt{\log(K) + \sum^{t-1}_{s=1}\psi\big(q_{s}(X_s)\big)}}\\
    &\leq 2\frac{1}{\eta_1}\sqrt{\log(K)}\sum^T_{t=1}\frac{\psi\big(q_{t}(X_{t})\big)}{\sqrt{\log(K) + \sum^{t}_{s=1}\psi\big(q_{s}(X_s)\big)} + \sqrt{\log(K) + \sum^{t-1}_{s=1}\psi\big(q_{s}(X_s)\big)}}\\
    &\leq 2\frac{1}{\eta_1}\sqrt{\log(K)}\sum^T_{t=1}\frac{\psi\big(q_{t}(X_{t})\big)}{\sqrt{\sum^{t}_{s=1}\psi\big(q_{s}(X_s)\big)} + \sqrt{\sum^{t-1}_{s=1}\psi\big(q_{s}(X_s)\big)}}\\
    &= 2\frac{1}{\eta_1}\sqrt{\log(K)}\sum^T_{t=1}\frac{\psi(q_t(X_t))}{\psi(q_t(X_t))}\left\{{\sqrt{\sum^t_{s=1}\psi\big(q_{s}(X_s)\big)} - \sqrt{\sum^{t-1}_{s=1}\psi\big(q_{s}(X_s)\big)}}\right\}\\
    &= 2\frac{1}{\eta_1}\sqrt{\log(K)}\sum^T_{t=1}\left\{{\sqrt{\sum^t_{s=1}\psi\big(q_{s}(X_s)\big)} - \sqrt{\sum^{t-1}_{s=1}\psi\big(q_{s}(X_s)\big)}}\right\}\\
    &= 2\frac{1}{\eta_1}\sqrt{\log(K)}\left\{{\sqrt{\sum^{T}_{s=1}\psi\big(q_{s}(X_s)\big)} - \sqrt{\psi\big(q_{1}(X_1)\big)}}\right\}\\
    &\leq 2\frac{1}{\eta_1}\sqrt{\log(K)}\sqrt{\sum^{T}_{s=1}\psi\big(q_{s}(X_s)\big)}.
\end{align*}

Inequalities \eqref{eq:target1} and \eqref{eq:target2} combined with the inequality in Lemma~\ref{thm:basic_bounds} yield 
\begin{align*}
&R_T \leq \mathbb{E}\left[\sum^T_{t=1}\left(\gamma_t + 3\eta_td + \left(
            - 
            \sum^T_{t=1}\left(\frac{1}{\eta_t} - \frac{1}{\eta_{t-1}}\right)\psi\big(q_t(X_0)\big)
            \right)\right)\right]\\
    &= \mathbb{E}\left[\sum^T_{t=1}\left(\gamma_t + 3\eta_td + \left(
            - 
            \sum^T_{t=1}\left(\frac{1}{\eta_t} - \frac{1}{\eta_{t-1}}\right)\psi\big(q_t(X_t)\big)
            \right)\right)\right]\\
    &= \mathbb{E}\left[O\left( \eta_1\left(\frac{1}{\lambda} + d\right)\frac{\log(T)}{\sqrt{\log(K)}}\sqrt{\sum^T_{t=1}\psi\big(q_t(X_t)\big)}\right) + O\left( \frac{1}{\eta_1}\sqrt{\log(K)} \sqrt{\sum^T_{t=1}\psi(q_t(X_{t})} \right)\right]\\
    &= O\left( \eta_1\left(\frac{1}{\lambda} + d\right)\frac{\log(T)}{\sqrt{\log(K)}}\sqrt{\sum^T_{t=1}\mathbb{E}\left[\psi\big(q_t(X_t)\big)\right]}\right) + O\left( \frac{1}{\eta_1}\sqrt{\log(K)} \sqrt{\sum^T_{t=1}\mathbb{E}\left[\psi(q_t(X_{t})\right]} \right).
\end{align*}
Thus, we obtain the regret bound in Lemma~\ref{lem:regret_basic_bounds}.
\end{proof}

Next, we consider bounding $\sum^T_{t=1}\psi(q_t(x))$ by $Q(\rho^*_T\mid x)$ as shown in the following proposition.
\begin{proposition}
[From Lemma~4 in \citet{ito2022nearly}]
\label{lem:bound_H}
For all $\rho^*:\mathcal{X}\to[K]$, the following holds:
\begin{align*}
    \sum^T_{t=1}\psi(q_t(x)) \leq Q(\rho^*\mid x) \log\left(\frac{eKT}{Q(\rho^*\mid x)}\right),
\end{align*}
where $e$ is Napier's constant. 
\end{proposition}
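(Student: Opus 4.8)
The plan is to prove Proposition~\ref{lem:bound_H} directly, treating $\psi(q(x)) = \sum_{a\in[K]} q(a\mid x)\log\bigl(1/q(a\mid x)\bigr)$ as the Shannon entropy of the distribution $q(\cdot\mid x)$ on $[K]$, and reducing everything to the single scalar $w_t \coloneqq 1 - q_t\bigl(\rho^*(x)\mid x\bigr)\in[0,1]$, so that $Q(\rho^*\mid x) = \sum_{t=1}^T w_t$. The argument has three short steps: a pointwise entropy bound in terms of $w_t$, an application of Jensen's inequality via concavity, and a final algebraic simplification.

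First I would establish the pointwise bound $\psi(q_t(x)) \le w_t\log\!\bigl(eK/w_t\bigr)$, with the convention $0\log(eK/0)=0$. Writing $i^* = \rho^*(x)$ and splitting the entropy as $\psi(q_t(x)) = -(1-w_t)\log(1-w_t) - \sum_{a\neq i^*} q_t(a\mid x)\log q_t(a\mid x)$, the second sum is maximized, subject to the constraint $\sum_{a\neq i^*}q_t(a\mid x) = w_t$, by spreading the mass $w_t$ uniformly over the remaining $K-1$ arms (the standard maximum-entropy fact, following from concavity of $z\mapsto -z\log z$); this gives $\psi(q_t(x)) \le -(1-w_t)\log(1-w_t) - w_t\log\!\frac{w_t}{K-1}$. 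Combining this with the elementary inequality $-(1-w)\log(1-w)\le w$ for $w\in[0,1)$ — which holds since $g(w)\coloneqq w + (1-w)\log(1-w)$ satisfies $g(0)=0$ and $g'(w)=-\log(1-w)\ge 0$ — yields $\psi(q_t(x)) \le w_t + w_t\log\!\frac{K-1}{w_t} \le w_t\log\!\frac{eK}{w_t}$.

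Next I would exploit concavity. The function $f(w)\coloneqq w\log(eK) - w\log w = w\log(eK/w)$ is concave on $[0,1]$ since $f''(w) = -1/w<0$, so Jensen's inequality gives
\begin{align*}
    \sum_{t=1}^T \psi(q_t(x))
    &\;\le\; \sum_{t=1}^T f(w_t)
    \;=\; T\cdot\frac1T\sum_{t=1}^T f(w_t) \\
    &\;\le\; T\, f\!\Bigl(\tfrac1T\textstyle\sum_{t=1}^T w_t\Bigr)
    \;=\; T\, f\!\Bigl(\tfrac{Q(\rho^*\mid x)}{T}\Bigr),
\end{align*}
where I use $Q(\rho^*\mid x) = \sum_{t=1}^T w_t \le T$ so that the argument of $f$ lies in $[0,1]$. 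Unwinding $f$ gives $T f(Q/T) = Q\log(eKT/Q)$ with $Q = Q(\rho^*\mid x)$, which is exactly the claimed bound; the degenerate case $Q(\rho^*\mid x)=0$ (all $q_t(\cdot\mid x)$ point masses at $i^*$, hence $\psi(q_t(x))=0$ for every $t$) is covered by the same convention since the right-hand side is then $0$.

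I do not anticipate a genuine obstacle: the only points needing care are justifying the maximum-entropy step in the pointwise bound (classical) and keeping the boundary cases $w_t=0$ and $Q(\rho^*\mid x)=0$ consistent with the convention $0\log\infty = 0$; the concavity-plus-Jensen step is routine. This reproduces Lemma~4 of \citet{ito2022nearly}.
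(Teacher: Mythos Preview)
Your proof is correct. The paper does not actually prove this proposition but imports it verbatim from Lemma~4 of \citet{ito2022nearly}; your argument---the pointwise bound $\psi(q_t(x))\le w_t\log(eK/w_t)$ obtained from the maximum-entropy step plus $-(1-w)\log(1-w)\le w$, followed by Jensen's inequality applied to the concave map $w\mapsto w\log(eK/w)$---is exactly the standard proof of that cited lemma.
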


By using the above lemmas and propositions, we prove Theorem~\ref{thm:regret_bound_realftrl}.
\begin{proof}[Proof of Theorem~\ref{thm:regret_bound_realftrl}]
    From Lemma~\ref{lem:bound_H}, if $Q(\rho^*_T \mid x) \leq e$, we have $\sum^T_{t=1}\psi(q_t(x)) \leq e\log(KT)$ and otherwise, we have $\sum^T_{t=1}\psi(q_t(x)) \leq Q(\rho^*_T \mid x)\log(KT)$. Hence, we have $\sum^T_{t=1}\psi(q_t(x))  \leq \log(KT)\max\{e, Q(\rho^*_T \mid x)\}$. From Lemma~\ref{lem:regret_basic_bounds}, we have
    \begin{align*}
        R_T &\leq \overline{c} \sqrt{\sum^T_{t=1}\mathbb{E}\left[\psi(q_t(X_0))\right]}\\
        &= O\left(\left( \eta_1\left(\frac{1}{ \lambda} + d\right)\frac{\log(T)}{\sqrt{\log(K)}} + \frac{1}{\eta_1}\sqrt{\log(K)}\right)\sqrt{\log(KT)}\max\Big\{\overline{Q}^{1/2}, 1\Big\}\right).
    \end{align*}
\end{proof}

\section{Proof of Theorem~\ref{cor:adversarial_regime}}
\label{appdx:adversarial_regime}
\begin{proof}[Proof of Theorem~\ref{cor:adversarial_regime}]
    From the definition of the contextual adversarial regime with a self-bounding constraint, we have
    \begin{align*}
        R_T &\geq \Delta_* \cdot \mathbb{E}\left[\sum^T_{t=1}\Big( 1 - \pi_t(a^*(X_0)\mid X_0) \Big)\right] - C\\
        &=  \Delta_* \cdot \overline{Q}(\rho^*_T) - C.
    \end{align*}
    Therefore, from Lemma~\ref{lem:regret_basic_bounds}, for all $\lambda > 0$, we have
    \begin{align*}
        R_T &= (1+\lambda)R_T - \lambda R_T\\
        &= (1+\lambda)O\left(\overline{c}\sqrt{\log(KT)}\sqrt{\sum^T_{t=1}\mathbb{E}\left[\psi(q_t(X_0))\right]}\right) - \lambda R_T\\
        &\leq (1+\lambda)O\left(\overline{c}\sqrt{\log(KT)}\sqrt{\sum^T_{t=1}\mathbb{E}\left[\psi(q_t(X_0))\right]}\right) - \lambda \Delta_* \cdot \overline{Q}(\rho^*_T) + \lambda C,
    \end{align*}
    where 
    \[\overline{c} = \eta_1\left(\frac{1}{ \lambda} + d\right)\frac{\log(T)}{\sqrt{\log(K)}} + \frac{1}{\eta_1}\sqrt{\log(K)}.\]
    Here, as well as the proof of Theorem~\ref{thm:regret_bound}, from Lemma~\ref{lem:bound_H}, if 
    \[Q(\rho^*_T(x)\mid x) \leq e,\] 
    we have 
    \[\sum^T_{t=1}\psi(q_t(x)) \leq e\log(KT)\] and otherwise, we have \[\sum^T_{t=1}\psi(q_t(x)) \leq Q(\rho^*_T\mid x)\log(KT).\] Hence, we have $\sum^T_{t=1}\psi(q_t(x))  \leq \log(KT)\max\{e, Q(\rho^*_T\mid x)\}$. Here, to upper bound $R_T$, it is enough to only consider a case with $Q(\rho^*_T\mid x) \geq e$, and we obtain
    \begin{align*}
        R_T &\leq (1+\lambda)O\left(\overline{c}\sqrt{\log(KT)}\sqrt{\overline{Q}(\rho^*_T)\log(KT)}\right) - \lambda \Delta_* \cdot \overline{Q}(\rho^*_T) + \lambda C\\
        &\leq \frac{O\left(\Big\{(1+\lambda)\overline{c}\Big\}^2\sqrt{\log(KT)}\right)}{2\lambda\Delta_*} + \lambda \Delta_*.
    \end{align*}
    where the second inequality follows from $a\sqrt{b} - \frac{c}{2}b \leq \frac{a^2}{c^2}$ holds for all $a,b,c > 0$. By choosing
    \[\lambda = \sqrt{\frac{\overline{c}^2\log(KT)}{\Delta_*}\Big/ \left(\frac{c^2\log(KT)}{\Delta_*} + 2C\right)}.\]
    Then, we obtain $R_T = O\Bigg(\overline{c}^2\log(KT)/ \Delta_{*} + \sqrt{{C\overline{c}^2}\log(KT)/{\Delta_{*}}}\Bigg)$. 
\end{proof}

\section{Details of regret transformation}
\label{appdx:transform}

This section provides the details of regret transformation in Section~\ref{sec:regret_transform}. 

We consider two problems of linear contextual bandits: P.Indep $(\widetilde{d}, \widetilde{K}, \widetilde{T})$ and P.Dep $(d, K, T)$, defined below. 

\paragraph{P.Indep $(\widetilde{d}, \widetilde{K}, \widetilde{T})$} In this stetting, there are $ \widetilde{K}$ arms and $ \widetilde{T}$ rounds. We can observe $\widetilde{d}$-dimensional contexts. Let $\widetilde{\phi}: \mathcal{X} \to \mathbb{R}^d$ be a context map. We consider a linear model $\widetilde{\ell}_t(a, X_t) = \left\langle \widetilde{\theta}_{a, t}, \widetilde{\phi}(X_t) \right\rangle  + \widetilde{\varepsilon}_t(a)$, where $\widetilde{\theta}_{a, t}\in\mathbb{R}^{\widetilde{d}}$ is a regression coefficient, and $\widetilde{\varepsilon}_t(a)$ is the error term. Then, in each round $t$, given $\left\{\widetilde{\theta}_{a, t}\right\}_{a\in[K]}$
\begin{itemize}
    \item We observe $\widetilde{d}$-dimensional contexts $\widetilde{\phi}(X_t)$.
    \item The loss is generated from a linear model $\widetilde{\ell}_t(a, X_t) = \left\langle \theta_{a, t}, \widetilde{\phi}(X_t) \right\rangle  + \widetilde{\varepsilon}_t(a)$.
\end{itemize}
 
This problem can be transformed into P.Dep $(\widetilde{d}\widetilde{K}, \widetilde{K}, \widetilde{T})$ by defining the corresponding feature and regression parameters well. 

Let us define 
\[\phi(a, X_t) = \begin{pmatrix}
    \bm{0}_d\\
    \vdots\\
    \widetilde{\phi}(X_t)\\
    \vdots\\
    \bm{0}_d
\end{pmatrix},\quad 
\theta_t = \begin{pmatrix} \widetilde{\theta}_{1,t}\\
\vdots\\
\widetilde{\theta}_{a,t}\\
\vdots\\
\widetilde{\theta}_{K,t}
\end{pmatrix}
\] where $\bm{0}_d$ is the $d$-dimensional zero vector. 

Here, the loss is generated as $\ell_t(a, X_t) = \left\langle \theta_t,  \phi(a, X_t)\right\rangle  + \varepsilon_t(a)$. Note that $\theta$ is a  $(\widetilde{d}\times \widetilde{K})$ vector. 

\paragraph{P.Dep $(d, K, T)$} This setting is identical to our setting. Therefore, we omit the details of this setting.

As well as the case with P.Indep $(\widetilde{d}, \widetilde{K}, \widetilde{T})$, we can transform the P.Dep $(d, J, T)$ into the P.Indep $(dK, K, T)$. 

Let us define 
\[\widetilde{\phi}(X_t) = \begin{pmatrix}
    \phi(1, X_t)\\
    \vdots\\
    \phi(a, X_t)\\
    \vdots\\
    \phi(K, X_t)
\end{pmatrix},\quad 
\widetilde{\theta}_{a, t} = \begin{pmatrix} \bm{0}_d\\
\vdots\\
\theta_t\\
\vdots\\
\bm{0}_d
\end{pmatrix}
\] where $\bm{0}_d$ is the $d$-dimensional zero vector. 

Here, the loss is generated as $\widetilde{\ell}_t(a, X_t) = \left\langle \widetilde{\theta}_{a, t},  \widetilde{\phi}( X_t)\right\rangle + \widetilde{\varepsilon}_t(a)$. 

\paragraph{Regret transformation theorem} 
The above construction directly yields Theorem~\ref{thm:reg_trans} as follows.

\begin{proof}[Proof of Theorem~\ref{thm:reg_trans}]
    Suppose that there exists an algorithm whose regret is $R_T = f(d, K, T)$ in the P.Dep $(d, K, T)$, where $f:\mathbb{N}\times \mathbb{N}\times \mathbb{N}\to \mathbb{R}$.

    Consider the P.Indep $(\widetilde{d}, \widetilde{K}, \widetilde{T})$. For this problem, we can apply the algorithm by transforming the arm-independent features and parameters to the ones in the arm-dependent feature setting as 
    \[\phi(a, X_t) = \begin{pmatrix}
        \bm{0}_d\\
        \vdots\\
        \widetilde{\phi}(X_t)\\
        \vdots\\
        \bm{0}_d
    \end{pmatrix},\quad 
    \theta_t = \begin{pmatrix} \widetilde{\theta}_{1,t}\\
    \vdots\\
    \widetilde{\theta}_{a,t}\\
    \vdots\\
    \widetilde{\theta}_{K,t}
    \end{pmatrix}
    \]

    This problem has $\widetilde{d}\widetilde{K}$-dimensional parameter $\theta_t$. Therefore, the regret under this setting becomes $f(\widetilde{d}\widetilde{K}, \widetilde{K}, \widetilde{T})$.

    Similarly, we can prove the reverse case. Thus, the proof is complete.
\end{proof}

\end{document}